\theoremstyle{plain}
\newtheorem{theorem}{Theorem}[section]
\newtheorem{lemma}[theorem]{Lemma}
\theoremstyle{definition}
\theoremstyle{remark}
\definecolor{darkgreen}{rgb}{0.0, 0.5, 0.0}
\definecolor{darkblue}{rgb}{0.0, 0.0, 1.0}
\icmltitlerunning{Harmonizing Geometry and Uncertainty: Diffusion with Hyperspheres}
\begin{document}
\twocolumn[
\icmltitle{Harmonizing Geometry and Uncertainty:
        Diffusion with Hyperspheres}

\begin{icmlauthorlist}
\icmlauthor{Muskan Dosi}{yyy}
\icmlauthor{Chiranjeev Chiranjeev}{yyy}
\icmlauthor{Kartik Thakral}{yyy}
\icmlauthor{Mayank Vatsa}{yyy}
\icmlauthor{Richa Singh}{yyy}
\end{icmlauthorlist}

\icmlaffiliation{yyy}{Department of Computer Science and Engineering, Indian Institute of Technology Jodhpur, India}

\icmlcorrespondingauthor{Muskan Dosi}{dosi.1@iitj.ac.in}

\icmlkeywords{Machine Learning, ICML}

\vskip 0.3in
]



\printAffiliationsAndNotice{}  

\begin{abstract}
\textit{Do contemporary diffusion models preserve the class geometry of hyperspherical data?} Standard diffusion models rely on isotropic Gaussian noise in the forward process, inherently favoring Euclidean spaces. However, many real-world problems involve non-Euclidean distributions, such as hyperspherical manifolds, where class-specific patterns are governed by angular geometry within hypercones. When modeled in Euclidean space, these angular subtleties are lost, leading to suboptimal generative performance. To address this limitation, we introduce \textbf{HyperSphereDiff} to align hyperspherical structures with directional noise, preserving class geometry and effectively capturing angular uncertainty. We demonstrate both theoretically and empirically that this approach aligns the generative process with the intrinsic geometry of hyperspherical data, resulting in more accurate and geometry-aware generative models. We evaluate our framework on four object datasets and two face datasets, showing that incorporating angular uncertainty better preserves the underlying hyperspherical manifold. Resources are available at: \href{https://github.com/IAB-IITJ/Harmonizing-Geometry-and-Uncertainty-Diffusion-with-Hyperspheres/tree/main}{\textcolor{magenta}{Link}}.

\end{abstract}

\begin{figure*}[!t]
\includegraphics[width=\textwidth]{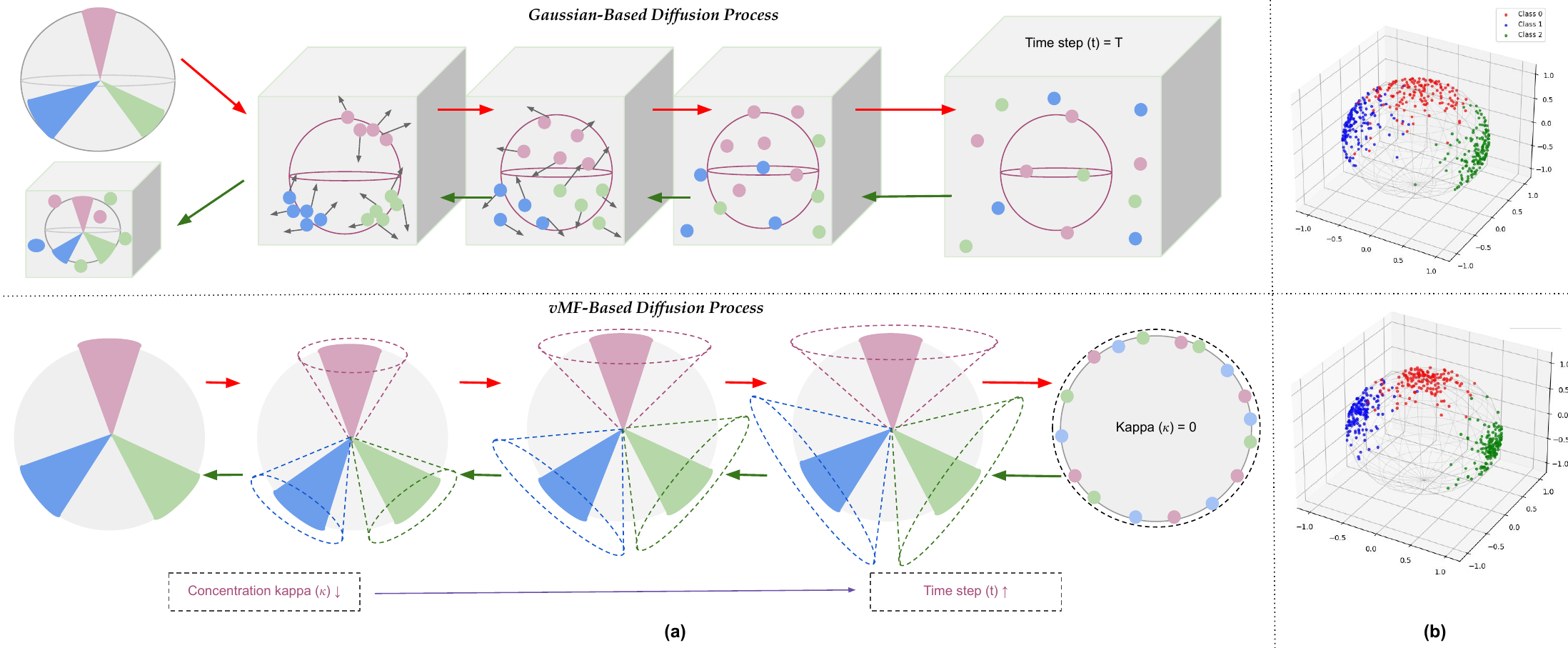}
    \captionof{figure}{ (a) Illustrating class geometry preservation in non-Euclidean hyperspherical spaces. The top row shows Gaussian diffusion, which fails to capture structural relations, whereas the bottom row demonstrates von Mises–Fisher (vMF)-based diffusion, effectively modeling directional uncertainty. \textcolor{red}{Red} and \textcolor{darkgreen}{green} arrows indicate forward and backward diffusion process, respectively. (b) Comparison of on the 3D sphere: Gaussian (top) vs. vMF (bottom). Gaussian distorts class boundaries, while vMF maintains the original geometry, preserving angular regions.}
    \label{fig:main_diag}
    \vspace{-5pt}
\end{figure*}

\section{Introduction}
\label{intro}

Diffusion models have revolutionized generative modeling, achieving remarkable success in diverse modalities, including generation of images \cite{ho2020denoising, dhariwal2021diffusion}, audio \cite{kong2021diffwave}, 3D scenes and 3D structures \cite{bautista2022gaudi, shue20233d}. These models operate by progressively adding Gaussian noise to the data in a forward process, followed by a reverse process that learns to recover the original data from the corrupted versions \cite{sohl2015deep}. The use of Gaussian noise, coupled with its isotropic nature, simplifies both theoretical formulations and practical implementations, making it a default choice for most diffusion frameworks \cite{kingma2021variational}. However, this assumption inherently biases the data to Euclidean spaces \cite{song2021scorebased, dhariwal2021diffusion}, limiting the model’s ability to account for intrinsic geometric structures in non-Euclidean domains \cite{lui2012advances, scott2021mises, de2022riemannian}, such as hyperspherical or manifold-constrained data \cite{bronstein2017geometric, rezende2020normalizing}. In such settings, Gaussian-based diffusion may fail to fully capture the directional relationships and nuanced variations inherent to the data, as shown in \cref{fig:main_diag}(a) (top row) where the forward process of adding Gaussian noise leads to distortion in the angular geometry of the classes during sampling. It motivates a need for rethinking the noise distribution in the diffusion processes.

Beyond the limitations of geometry, Gaussian noise also overlooks the flexibility required to model varying uncertainty levels across different samples. Ambiguous or noisy data points appear across different timesteps in \cref{fig:main_diag}(a) (top row). These points require a representation where uncertainty is explicitly modeled. However, the isotropic assumption of Gaussian noise treats all directions equally, failing to capture this uncertainty effectively. Researchers have explored alternative noise distributions better suited to specific data geometries and uncertainty modeling \cite{xu2023rethinking}. One promising direction is the von Mises-Fisher (vMF) distribution \cite{directional2000statistics}, which is naturally defined on hyperspheres and parameterized by a concentration parameter that controls directional uncertainty \cite{hasnat2017mises}. For effective handling of uncertainty using vMF noise in diffusion models, we aim to align the generative process with the underlying geometry of the data, thus enabling more accurate modeling of directional relationships. This approach builds on recent efforts to integrate geometric priors into generative models \cite{falorsi2018explorations}, expanding the scope of diffusion techniques to hyperspherical data.



\subsection{Research Contributions}
We propose \textbf{HyperSphereDiff} (Diffusion with HyperSpheres), leveraging the von Mises-Fisher (vMF) distribution to preserve hyperspherical class geometry by introducing \emph{directional uncertainty} into the diffusion process. It is visually illustrated through \cref{fig:main_diag}(a) (bottom row). Here, \textbf{geometry} precisely captures angular relationships defining class boundaries, whereas \textbf{uncertainty} reflects stochastic variation governed by the vMF concentration parameter. \cref{fig:main_diag}(b) provides empirical visualization demonstrating that vMF-driven diffusion preserves hyper-conical class geometry, while Gaussian-based diffusion distorts it. By embedding vMF-based noise into both forward and reverse processes, the manifold-aware framework maintains global structure and local directional fidelity. It ensures directional uncertainty respects hyperspherical structures throughout the diffusion process. The key contributions of this research are:

\begin{enumerate}
    \item \textbf{Class-Specific \textit{Hypercone} Formalism.}
    We introduce a hypercone representation that preserves intra-class angular concentration on the hypersphere while enforcing inter-class separation. This formalism captures class structure by focusing on directional relationships rather than purely Euclidean distances.

    \item \textbf{vMF-Based Forward and Reverse Processes.}
    Gaussian noise is replaced with vMF noise to maintain angular consistency. This ensures generated samples remain on the hypersphere, effectively guiding them toward class-specific hypercones during the reverse process.

    \item \textbf{Diverse and Hard Sample Generation.}
    In contrast to Gaussian-based methods, the vMF-driven approach mitigates simplicity bias, producing a broader sample distribution. We introduce two metrics, \textit{Hypercone Coverage Ratio} (HCR) and \textit{Hypercone Difficulty Skew} (HDS), for assessing geometry preservation.

    \item \textbf{Theoretical Foundations and Empirical Validation.}
    Along with detailed theoretical foundations, through extensive experiments on four object datasets and two face datasets, we demonstrate improved alignment with hyperspherical geometry. We observed enhanced robustness to varying uncertainty levels and superior performance in class-conditional generation tasks.
\end{enumerate}

\subsection{Related Work}
Denoising diffusion models can generate diverse unseen image samples by training on existing datasets \cite{ho2020denoising, song2021scorebased, dhariwal2021diffusion}. First introduced by \cite{sohl2015deep}, they added Gaussian noise in a forward process and learned to reverse it for sample generation. Advances in score-based modeling \cite{song2019generative,pidstrigach2022score}, efficient sampling \cite{watson2022learning}, and discrete diffusion \cite{austin2021structured} have further improved their capabilities. However, their reliance on Gaussian noise limits them to Euclidean spaces, restricting their ability to model directional relationships in non-Euclidean domains \cite{bronstein2017geometric}. While manifold-aware and sphere-based extensions \cite{rezende2020normalizing} attempt to address this, they remain limited in handling anisotropic and angular uncertainties.

%
The von Mises-Fisher (vMF) distribution, commonly used for hyperspherical data, has demonstrated effectiveness in face recognition \cite{hasnat2017mises, deng2019arcface}, outlier detection and generation \cite{du2022siren, ming2023cider, du2023dream} and representation learning tasks \cite{davidson2018hyperspherical}. It has also been employed in generative modeling through hyperspherical GANs \cite{davidson2018hyperspherical} and spherical VAEs \cite{falorsi2018explorations}. However, the integration of vMF noise into diffusion models remains relatively unexplored. Previous works exploring non-Gaussian noise in diffusion processes \cite{wang2022diffusion, xu2023rethinking} lack a systematic approach to leverage angular distributions like vMF. 

In addition to Gaussian and vMF noise, alternative noise types such as Laplacian noise \cite{dwork2006calibrating}, Student’s t-noise \cite{matsubara2021student}, and blue noise \cite{huang2024blue} have been explored for robustness and outlier handling. Flow-based transformations have also been used in generative tasks \cite{kingma2018glow}, while domain-specific noises in molecular generation \cite{luo2021antigen} focus on aligning noise with data properties. Our work adds to this growing body of literature by introducing a novel angular noise mechanism based on vMF distributions, enabling the diffusion process to model class-specific and directional uncertainty in hyperspherical domains. Progressive Distillation \cite{salimans2022progressive} uses an angular DDIM update in Euclidean space but ignores hyperspherical constraints. EDM \cite{karras2022elucidating} ensures variance preservation via dataset-dependent scaling but lacks directional or geometric alignment. In contrast, \textit{HyperSphereDiff}operates directly on the hypersphere, preserving both variance and manifold consistency without extrinsic normalization.

\section{Preliminary}
In Gaussian-based diffusion models, the forward process is defined as a Markov chain that incrementally corrupts the data by introducing isotropic Gaussian noise. Let $\mathbf{x}_0 \sim p_{\text{data}}(\mathbf{x}_0)$ represent the original data. The sequence of noisy samples $\mathbf{x}_t$ is generated according to:
\begin{equation*}
    \mathbf{x}_t = \sqrt{\bar\alpha_t} \mathbf{x}_0 + \sqrt{1 - \bar\alpha_t} \boldsymbol{\epsilon},
\end{equation*}
where, $\alpha_t \in (0, 1]$ is a variance schedule controlling the relative weight of the signal and noise at time $t$ and  $\boldsymbol{\epsilon} \sim \mathcal{N}(\mathbf{0}, \mathbf{I})$  is isotropic Gaussian.

The geometry of the corrupted data distribution in a time step $t$ is defined by the weighted combination of the original data $\mathbf{x}_0$ and the noise component $\boldsymbol{\epsilon}$. 
The resulting distribution can be expressed as:
\begin{equation*}
    q(\mathbf{x}_t | \mathbf{x}_0) = \mathcal{N}(\mathbf{x}_t; \sqrt{\bar\alpha_t} \mathbf{x}_0, (1 - \bar\alpha_t) \mathbf{I}),
\end{equation*}
which defines a trajectory in the Euclidean space where the corrupted data progressively transitions from the original distribution  $p_{\text{data}}(\mathbf{x}_0)$  to a standard Gaussian distribution $\mathcal{N}(\mathbf{0}, \mathbf{I})$ as $t \to T$ .

The uncertainty in the forward process is introduced through the isotropic Gaussian noise  $\boldsymbol{\epsilon} \sim \mathcal{N}(\mathbf{0}, \mathbf{I})$. At each time step, the variance of the noise component  $1 - \alpha_t$  increases as  $\alpha_t$  decreases, representing a controlled diffusion of information. This uncertainty is modeled symmetrically across all dimensions of the data, resulting in a spherically symmetric probability density function. Formally, the marginal distribution at time $t$ can be expressed as:

\vspace{-10pt}
\begin{equation*}
    q(\mathbf{x}_t) = \int q(\mathbf{x}_t | \mathbf{x}_0) p_{\text{data}}(\mathbf{x}_0) \, d\mathbf{x}_0,
\end{equation*}
\vspace{-10pt}

which represents the corrupted data distribution as a Gaussian mixture where each component is centered at  $\sqrt{\bar\alpha_t} \mathbf{x}_0$  and has variance  $1 - \bar\alpha_t$ . The reverse process is designed to approximate  $p(\mathbf{x}_0 | \mathbf{x}_t)$  and recover the original data by progressively denoising the sample  $\mathbf{x}_t$.

\section{Revisiting Hyperspherical Data Geometry}

The use of the Gaussian noise simplifies theoretical analysis and computational implementation by assuming data resides in the flat, unbounded Euclidean space $\mathbb{R}^d$. However, this approach is fundamentally sub-optimal when dealing with non-Euclidean data geometries, such as hyperspheres  $\mathbb{S}^{d-1}$, or other manifolds.

\par\noindent\rule{\linewidth}{0.1pt}
\begin{theorem}[Gaussian Noise and Spherical Structure]
Let $\mathbf{z} \sim \mathcal{N}(\mathbf{0}, \mathbf{I})$ be an isotropic Gaussian in $\mathbb{R}^d$. Then:

(a) The radial density follows:
\[
P(\|\mathbf{z}\| = r) \propto r^{d-1} e^{-r^2/2}
\]
(b) As $d \to \infty$:
\[
\frac{\|\mathbf{z}\|}{\sqrt{d}} \xrightarrow{P} 1
\]
(c) For data $\mathbf{x} \in \mathbb{S}^{d-1}$, the noised vector $\mathbf{x} + \sigma\mathbf{z}$ does not preserve angular relationships as $\sigma \to \infty$.

\begin{proof}
(a) Follows from the Jacobian of spherical coordinates,
(b) By the law of large numbers, $\|\mathbf{z}\|^2/d \to 1$ in probability, and
(c) As $\sigma \to \infty$, the contribution of $\mathbf{x}$ becomes negligible.
\end{proof}
\end{theorem}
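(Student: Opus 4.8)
The plan is to establish the three claims separately, as they are essentially independent. For (a), I would pass to generalized polar coordinates $\mathbf{z} = r\boldsymbol{\omega}$ with $r=\|\mathbf{z}\|\ge 0$ and $\boldsymbol{\omega}\in\mathbb{S}^{d-1}$, for which Lebesgue measure factorizes as $d\mathbf{z} = r^{d-1}\,dr\,d\sigma(\boldsymbol{\omega})$ with $d\sigma$ the surface measure on the unit sphere. Since the density $(2\pi)^{-d/2}e^{-\|\mathbf{z}\|^2/2}$ is rotationally invariant it depends on $\mathbf{z}$ only through $r$, so integrating the joint density over $\boldsymbol{\omega}\in\mathbb{S}^{d-1}$ merely contributes the constant surface area $2\pi^{d/2}/\Gamma(d/2)$, leaving the marginal density of $r$ proportional to $r^{d-1}e^{-r^2/2}$ — this is the $\chi_d$ density up to its normalizing constant. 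For (b), write $\|\mathbf{z}\|^2=\sum_{i=1}^d z_i^2$ with the $z_i$ i.i.d.\ standard normal, so $\mathbb{E}[z_i^2]=1<\infty$; the weak law of large numbers gives $\frac{1}{d}\|\mathbf{z}\|^2\xrightarrow{P}1$, and since $t\mapsto\sqrt{t}$ is continuous on $[0,\infty)$ the continuous mapping theorem yields $\|\mathbf{z}\|/\sqrt{d}\xrightarrow{P}1$.

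For (c) the substantive step is to make ``does not preserve angular relationships'' precise. I would phrase it through cosine similarity: fix $\mathbf{x},\mathbf{y}\in\mathbb{S}^{d-1}$ with $\langle\mathbf{x},\mathbf{y}\rangle=\cos\theta$, add independent noise to obtain $\tilde{\mathbf{x}}_\sigma=\mathbf{x}+\sigma\mathbf{z}_1$ and $\tilde{\mathbf{y}}_\sigma=\mathbf{y}+\sigma\mathbf{z}_2$ with $\mathbf{z}_1,\mathbf{z}_2$ i.i.d.\ $\mathcal{N}(\mathbf{0},\mathbf{I})$, and divide numerator and denominator of the cosine similarity by $\sigma^2$. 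Because $\mathbf{x}/\sigma,\mathbf{y}/\sigma\to\mathbf{0}$ while $\|\mathbf{z}_1\|,\|\mathbf{z}_2\|\in(0,\infty)$ almost surely, one gets
\[
\frac{\langle\tilde{\mathbf{x}}_\sigma,\tilde{\mathbf{y}}_\sigma\rangle}{\|\tilde{\mathbf{x}}_\sigma\|\,\|\tilde{\mathbf{y}}_\sigma\|}\;\xrightarrow[\sigma\to\infty]{}\;\frac{\langle\mathbf{z}_1,\mathbf{z}_2\rangle}{\|\mathbf{z}_1\|\,\|\mathbf{z}_2\|}\qquad\text{a.s.,}
\]
and the law of the right-hand side is independent of $\theta$ — it is simply the cosine similarity of two independent uniformly random directions. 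An equivalent and perhaps cleaner formulation is to show that the normalized noised point $\tilde{\mathbf{x}}_\sigma/\|\tilde{\mathbf{x}}_\sigma\|$ converges in distribution, as $\sigma\to\infty$, to the uniform law on $\mathbb{S}^{d-1}$, which carries no information about $\mathbf{x}$ whatsoever; hence all angular structure is washed out in the large-noise limit. Either route makes the claim rigorous.

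The only place demanding care is (c): one must commit to a precise notion of ``angular relationship'' — the cosine similarity of two noised points, or the distributional limit of a single normalized noised point — that is both faithful to the intended meaning and provable; after that choice, (c) is the short limit computation above, and (a), (b) are standard.
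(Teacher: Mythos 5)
Your proposal is correct and follows essentially the same route as the paper: part (a) via the spherical-coordinate Jacobian, part (b) via the law of large numbers applied to $\|\mathbf{z}\|^2/d$ (with the continuous mapping step made explicit), and part (c) by observing that the signal term is asymptotically negligible relative to the noise as $\sigma\to\infty$. Your treatment of (c) is more careful than the paper's one-line sketch --- you commit to a precise formalization (cosine similarity of noised points converging to that of independent uniform directions) that matches in spirit the expansion of $\widetilde{\mathbf{x}}_i^\top\widetilde{\mathbf{x}}_j$ the authors give in their Appendix A lemma --- but the underlying argument is the same.
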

\noindent\rule{\linewidth}{0.1pt}

Gaussian noise works well in flat, unbounded spaces due to its isotropic nature. However, many real-world datasets reside on curved spaces, such as hyperspheres or other Riemannian manifolds. In these cases, Gaussian noise disrupts the intrinsic geometry by introducing perturbations that extend beyond the manifold \cite{bronstein2017geometric, huang2022riemannian} (detailed proof in \textcolor{darkblue}{Appendix \ref{a1}}). This misalignment motivates the exploration of alternative noise processes designed for data with non-Euclidean geometries.

Facial data embeddings, often derived through deep neural networks, are typically normalized to lie on a hypersphere $\mathbb{S}^{d-1} \subset \mathbb{R}^d$, reflecting their natural angular variability due to factors like pose, expression, and illumination changes \cite{7470420, wang2018cosface, deng2019arcface}. This normalization ensures that the similarity between two embeddings is determined by their angular relationship rather than their magnitude, aligning with the cosine similarity metric frequently used in recognition tasks. For two embeddings $\mathbf{e}_1, \mathbf{e}_2 \in \mathbb{S}^{d-1}$, their similarity can be expressed as  $\cos(\theta) = \mathbf{e}_1^\top \mathbf{e}_2$ , where $\theta = \arccos(\mathbf{e}_1^\top \mathbf{e}_2)$ is the geodesic distance on the hypersphere. This structure inherently aligns facial embeddings with hyperspherical geometry, where angular deviations are the primary measure of variability. In high-dimensional hyperspherical spaces, facial classes can be modeled as distinct regions, often visualized as hypercones emanating from the origin. Each class $C_k \subset \mathbb{S}^{d-1}$ is centered around a mean direction vector $\boldsymbol{\mu}_k \in \mathbb{S}^{d-1}$, with intra-class variability characterized by angular deviations. The vMF distribution provides a natural framework for modeling such hypercones due to its concentration parameter $\kappa_k$, which controls the spread of the distribution around $\boldsymbol{\mu}_k$. The probability density function is given as:
\begin{equation*}
    f(\mathbf{x}; \boldsymbol{\mu}, \kappa) = \frac{\kappa^{(d/2)-1}}{(2\pi)^{d/2} I_{(d/2)-1}(\kappa)} \exp(\kappa \boldsymbol{\mu}^\top \mathbf{x}),
\end{equation*}
where, $I_{(d/2)-1}(\kappa)$ is the modified Bessel function of the first kind. To formalize this relationship, we introduce the following lemma:


    


\par\noindent\rule{\linewidth}{0.1pt}
\begin{lemma}[vMF Hypercone Representation]
Let $\mathcal{C}_k \subset \mathbb{S}^{d-1}$ be a class hypercone centered at $\boldsymbol{\mu}_k$ with angular radius $\theta_k$. The data follows a vMF distribution with concentration $\kappa_k$. Then:


(a) The probability mass within the class hypercone is:
\[
P(\mathbf{x} \in \mathcal{C}_k) = P(\angle(\mathbf{x}, \boldsymbol{\mu}_k) \leq \theta_k) \geq 1 - \exp(-\kappa_k(1-\cos\theta_k))
\]

(b) For any desired coverage probability $1 - \epsilon$ with $\epsilon \in (0, 1)$, choosing:
\[
\kappa_k \geq \frac{1}{1-\cos\theta_k}\log\left(\frac{1}{\epsilon}\right)
\]
ensures that $P(\mathbf{x} \in \mathcal{C}_k) \geq 1-\epsilon$.

\textbf{Note:} As $\theta_k \to 0$, the required concentration $\kappa_k \to \infty$, reflecting the scenario of a single-direction hypercone.
\end{lemma}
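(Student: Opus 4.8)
The plan is to isolate the one non-routine claim, statement (a), and reduce it to a one–dimensional tail estimate; statement (b) then falls out for free. Indeed, if $\kappa_k\ge\frac{1}{1-\cos\theta_k}\log(1/\epsilon)$ then $\kappa_k(1-\cos\theta_k)\ge\log(1/\epsilon)$, hence $\exp(-\kappa_k(1-\cos\theta_k))\le\epsilon$, and (a) gives $P(\mathbf{x}\in\mathcal{C}_k)\ge 1-\epsilon$; the Note is equally immediate, since $1-\cos\theta_k=\Theta(\theta_k^2)\to 0$ forces the sufficient concentration $\kappa_k\to\infty$.

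For (a), first observe that $\mathbf{x}\in\mathcal{C}_k$ is equivalent to $t:=\boldsymbol{\mu}_k^{\top}\mathbf{x}\ge\cos\theta_k$, so
\[
P(\mathbf{x}\notin\mathcal{C}_k)=P\!\left(t<\cos\theta_k\right)=P\!\left(1-t>1-\cos\theta_k\right),
\]
with $1-t\ge 0$. Under the vMF law the scalar $t$ has density on $[-1,1]$ proportional to $(1-t^2)^{(d-3)/2}e^{\kappa_k t}$: this is the tangent--normal decomposition $\mathbf{x}=t\,\boldsymbol{\mu}_k+\sqrt{1-t^2}\,\mathbf{v}$ with $\mathbf{v}$ uniform on the orthogonal sub-sphere $\mathbb{S}^{d-2}$, the factor $(1-t^2)^{(d-3)/2}$ being the induced Jacobian. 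Thus (a) is exactly a tail bound for this explicit one-dimensional density.

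I would then bound the tail by an exponential-tilt (Chernoff-type) comparison: split $\int_{-1}^{1}$ at the cap boundary $\cos\theta_k$, use $e^{\kappa_k t}\le e^{\kappa_k\cos\theta_k}$ on the ``outside'' piece $\{t<\cos\theta_k\}$, lower-bound the ``inside'' piece by the mass it concentrates near the mode $t=1$, write the probability as $N/(N+D)$ and rearrange; the claimed $e^{-\kappa_k(1-\cos\theta_k)}$ is precisely the density ratio $e^{\kappa_k\cos\theta_k}/e^{\kappa_k}$ between the cap boundary and the mode. This is exact and clean when $d=3$, where the Jacobian is constant:
\[
P(t<\cos\theta_k)=\frac{e^{\kappa_k\cos\theta_k}-e^{-\kappa_k}}{e^{\kappa_k}-e^{-\kappa_k}}\le e^{-\kappa_k(1-\cos\theta_k)},
\]
the last step being the elementary inequality $\frac{a-x}{b-x}\le\frac{a}{b}$ for $0\le x<b$ and $0<a\le b$ (here $a=e^{\kappa_k\cos\theta_k}$, $b=e^{\kappa_k}$, $x=e^{-\kappa_k}$).

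The step I expect to be the main obstacle is controlling the vMF normalization for $d>3$, equivalently the Jacobian weight $(1-t^2)^{(d-3)/2}$, which suppresses precisely the neighbourhood of the mode $t=1$ on which the ``inside'' estimate leans. A naive density-ratio argument leaves $e^{-\kappa_k(1-\cos\theta_k)}$ times a prefactor $e^{\kappa_k}/\mathbb{E}_{\mathrm{unif}}[e^{\kappa_k t}]$---a ratio of modified Bessel functions, since $\mathbb{E}_{\mathrm{unif}}[e^{\kappa_k t}]=\Gamma(d/2)(2/\kappa_k)^{d/2-1}I_{(d/2)-1}(\kappa_k)$---which is not uniformly $\le 1$ (it grows like $\kappa_k^{(d-3)/2}$ as $\kappa_k\to\infty$). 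So the clean bound is genuinely the $d=3$ statement; for $d>3$ one should either keep this dimensional Jacobian factor explicit, or read the lemma in the concentrated regime ($\theta_k$ small, $\kappa_k$ large relative to $d$) where it is what one actually uses downstream. I would present (a) with the exact $d=3$ identity as the clean prototype and flag the degenerate limit $\theta_k\to 0$, $\kappa_k\to\infty$ as in the Note.
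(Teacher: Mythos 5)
Your reduction of (b) and the Note to (a) is exactly right, and your treatment of (a) is more careful than anything the paper actually supplies: the lemma is stated without proof, and the only argument of this type the authors write down is the sketch for the sibling class-separation lemma in Appendix B. That sketch expresses the tail as $\int_{\theta/2}^{\pi}C_d(\kappa)e^{\kappa\cos\phi}\sin^{d-2}(\phi)\,d\phi$, pulls out the supremum $e^{\kappa\cos(\theta/2)}$, and then silently asserts that $C_d(\kappa)\int\sin^{d-2}(\phi)\,d\phi$ can be absorbed so as to land on $e^{-\kappa(1-\cos\theta)}$. That silent step is precisely the normalization issue you isolate: $C_d(\kappa)$ times the residual angular volume behaves like $\mathrm{const}\cdot\kappa^{(d-1)/2}e^{-\kappa}$, not like $e^{-\kappa}$, so the paper's chain of inequalities is not valid as written. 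Your tangent--normal reduction to the marginal density $\propto(1-t^2)^{(d-3)/2}e^{\kappa_k t}$, the exact $d=3$ identity, and the elementary inequality $\frac{a-x}{b-x}\le\frac{a}{b}$ are all correct; the two routes are the same integral in different coordinates ($t=\cos\phi$), but yours makes the lossy step visible instead of hiding it in $C_d(\kappa)$.

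Your diagnosis of $d>3$ deserves to be stated even more strongly: the inequality in part (a) is not merely hard to prove there, it is false for large $\kappa_k$. A Laplace/Watson expansion of the ratio of incomplete to complete integrals gives $P(\angle(\mathbf{x},\boldsymbol{\mu}_k)>\theta_k)\sim\frac{(1-\cos^2\theta_k)^{(d-3)/2}}{2^{(d-3)/2}\Gamma\left(\frac{d-1}{2}\right)}\,\kappa_k^{(d-3)/2}\,e^{-\kappa_k(1-\cos\theta_k)}$, which exceeds the claimed bound $e^{-\kappa_k(1-\cos\theta_k)}$ once $\kappa_k$ is moderately large and $d>3$. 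Hence no proof of the lemma as stated can close the gap; the honest statement is either your exact $d=3$ bound, or a general-$d$ bound carrying an explicit polynomial (equivalently Bessel-ratio) prefactor, or a restriction to the concentrated regime with $\kappa_k$ calibrated against $d$. Since the embeddings in this paper live on $\mathbb{S}^{d-1}$ with $d$ in the hundreds, that prefactor is not cosmetic, and part (b)'s prescription for $\kappa_k$ inherits the same caveat. Your proposal correctly proves everything that is provable here and correctly names the obstruction; the remaining discrepancy lies in the lemma, not in your argument.
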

\par\noindent\rule{\linewidth}{0.1pt}

On the hypersphere $\mathbb{S}^{d-1}$, classes naturally align within hypercones defined by angular constraints, and the vMF distribution inherently models these structures. Such distributions also relate to class separation on hyperspheres (refer \textcolor{darkblue}{Appendix \ref{b:vmf-seperate}}). Unlike Gaussian noise, vMF noise respects the manifold’s geometry by modulating its focus through the concentration parameter $\kappa$. When $\kappa = 0$, vMF is isotropic, uniformly spanning the hypersphere, while larger $\kappa$ values focus the distribution within a hypercone around the mean direction $\boldsymbol{\mu}$. This adaptability enables vMF-based diffusion to effectively learn and represent class-wise structures, ensuring a geometrically consistent generative modeling framework.

\section{Modeling Angular Uncertainty in Diffusion}
To effectively model the uncertainty inherent in hyperspherical data, we employ a forward diffusion process driven by noise sampled from the vMF distribution. The forward process introduces angular-based uncertainty such that the data progressively transitions from structured, class-specific noise to a uniform distribution over the hypersphere  $\mathbb{S}^{d-1}$. Initially, the noise is highly concentrated ($\kappa$  is large), preserving the class structure within narrow hypercones. As the process advances,  $\kappa$  decreases to zero, injecting isotropic noise, and the data diffuses uniformly over the hypersphere.

\subsection{Angular Noise Injection}
In the proposed \textit{HyperSphereDiff}, the forward process adds noise to the data representation using angular interpolation. At each time step  $t$, the data representation obtained in a latent space through an encoder is denoted as $\mathbf{z}_t$, which is updated as $\mathbf{z}_t =  \cos(\theta_t) \mathbf{z}_{t-1} + \sin(\theta_t) \mathbf{v}$,
where $\mathbf{z}_{t-1} \in \mathbb{S}^{d-1}$  is the data representation at the previous time step, $\mathbf{v}$  is a unit vector sampled uniformly from  $\mathbb{S}^{d-1}$, and  $\theta_t$ is a time-dependent angle that increases monotonically from $0$ to $\pi/2$. The angular interpolation ensures that the injected noise aligns with the hyperspherical geometry. The parameter $\theta_t$  acts as a scheduler, gradually increasing to control the extent of deviation from the previous representation. Correspondingly, this forward step mimics the vMF distribution using  $\kappa_t$ as a scheduler. The decrease in $\kappa_t$ defines the level of angular uncertainty introduced at each step.


By employing  $\kappa$  as a scheduler, our forward diffusion process progressively incorporates angular uncertainty, transitioning from class-structured perturbations to isotropic noise (\cref{fig:main_diag}). This ensures that the initial steps of the corruption process retain class information, as larger  $\kappa$  values concentrate the data around class-specific hypercones. Over time, as $\kappa \to 0$ , the noise diffuses the data to a uniform hypersphere, reflecting maximum uncertainty. This process inherently respects the geometric properties of hyperspherical data, with angular deviations tied to the data distribution rather than being purely random. 


\subsection{Backward Step: Hypersphere to Hypercone}

In generative modeling with hyperspherical data, the reverse process of \textit{HyperSphereDiff} transforms noisy samples into structured data aligned with class distributions. This process progressively refines noisy points toward class-specific hypercones using score-based methods. The reverse formulation leverages vMF sampling, ensuring angular relationships between classes are retained. The reverse process is defined as:
\begin{equation*}
    \mathbf{z}_{t-1} \sim \text{vMF}\left( \Pi ( \mathbf{z}_t + \eta_t \nabla{\mathbf{z}_t} \log f(\mathbf{z}_t; \boldsymbol{\mu}_t, \kappa_t)), \kappa_t \right),
\end{equation*}
The complete algorithm \ref{a1} and experiments are based on this reverse step and MSE loss as defined in standard Gaussian diffusion. Alternatively, the reverse step with angular updates using vMF-based stochastic denoising is defined as:
\begin{align*}
    z_{t-1} \sim \text{vMF} \Bigg( \Pi \Big( 
    &\cos(\theta_t) z_t \; + \\
    &\sin(\theta_t) \frac{\nabla_{z_t} \log f(z_t; \mu_c)}{\|\nabla_{z_t} \log f(z_t; \mu_c)\|} 
    \Big), \kappa_t \Bigg)
\end{align*}

To further align optimization with hyperspherical geometry, the reverse denoising step is refined using an angular-based loss function alternative to the MSE loss. Cosine Loss: Encourages angular alignment between the score function and noise direction.
$$\mathcal{L}{\text{c}} = 1 - \mathbb{E} \left[ \nabla{z_t} \log f(z_t; \mu_c)^\top \epsilon_t \right]$$
Geodesic Loss: Penalizes angular deviations.
$$\mathcal{L}{\text{g}} = \mathbb{E} \left[ \arccos^2 \left( \frac{\nabla{z_t} \log f(z_t; \mu_c)^\top \epsilon_t}{\| \nabla_{z_t} \log f(z_t; \mu_c) \| \|\epsilon_t\|} \right) \right]$$

where, $\mathbf{z}_t$ represents the current noisy sample, $f(\cdot)$ is the vMF distribution modeling class hypercones, and $\nabla{\mathbf{z}_t} \log f(\cdot)$ is the score function providing gradient information. Here, interpolation using $\cos(\theta_t)$ and $\sin(\theta_t)$ maintains angular relationships, while the normalized score function preserves directional consistency.
(Refer \textcolor{darkblue}{Appendix \ref{c:variation}}) 
The comparative analysis of various loss functions and angular based denoising is provided in Appendix (Refer \textcolor{darkblue}{Appendix \ref{supp:results}}).

The process evolves from isotropic noise to class-specific structures through the interplay of two key components. The score function $\nabla_{\mathbf{z}_t} \log f(\mathbf{z}_t; \boldsymbol{\mu}_t, \kappa_t)$ points towards the class means $\boldsymbol{\mu}_c$, with the step size $\eta_t$ controlling the update strength. Simultaneously, vMF sampling adds controlled stochasticity, ensuring updates remain consistent with hyperspherical geometry while progressively removing noise.

The concentration parameter $\kappa_t$ plays a crucial role in shaping this evolution. It starts small for isotropic diffusion and then grows during the reverse process, increasingly favoring class-specific directions. As $\kappa_t$ increases, the vMF sampling becomes more concentrated around the class means, reflecting the growing certainty in class membership while maintaining geometric consistency.


The combination of score-based updates and vMF sampling ensures a smooth transition from noise to structured data. The process gradually refines points until they converge near their respective class means $\boldsymbol{\mu}_c$ for a class $c$, effectively recovering true class representations while preserving the underlying hyperspherical geometry. This mechanism provides a mathematically principled approach to generating class-consistent samples on the hypersphere, maintaining both local structure and global class relationships throughout the diffusion process.

\begin{algorithm}[h]
\caption{\label{al1} \textit{HyperSphereDiff Training:} vMF Diffusion with Hypercone Preservation}
\begin{algorithmic}[1]
\Require Data samples $\{\mathbf{x}_i\}$, class labels $\{y_i\}$, diffusion steps T
\Require Angular schedule $\{\theta_t\}_{t=1}^T$, Learning rate $\eta$
\State Initialize score network parameters $\theta$
\While{not converged}
    \State Sample batch $(\mathbf{x}, y)$ from dataset
    \State Sample time step $t \sim \text{Uniform}(1,T)$
    \State $\kappa_t \gets \cot(\theta_t)$ \Comment{Set concentration}
    \State Sample $\mathbf{v} \sim \text{Uniform}(\mathbb{S}^{d-1})$
    \State $\mathbf{z}_t \gets \Pi(\cos(\theta_t)\mathbf{x} + \sin(\theta_t)\mathbf{v})$ \Comment{Forward process}
    \State similar to $\mathbf{z}_t \sim \text{vMF}(\mathbf{x}, \kappa_t)$
    \State $\nabla_{\mathbf{z}_t} \log f \gets \text{ScoreNet}_\theta(\mathbf{z}_t, t, y)$
    \State $\mathcal{L} \gets \|\nabla_{\mathbf{z}_t} \log f - \nabla_{\mathbf{z}_t} \log p(\mathbf{z}_t|\mathbf{x})\|^2$
    \State Update $\theta$ using gradient of $\mathcal{L}$
\EndWhile
\State \Return Trained score network parameters $\theta$
\end{algorithmic}
\end{algorithm}

\begin{algorithm}[h]
\caption{\label{al2} \textit{HyperSphereDiff Testing:} Sampling from vMF Diffusion with Class Guidance}
\begin{algorithmic}[1]
\Require Class label $y$, diffusion steps $T$, trained score network $\theta$
\Require Angular schedule $\{\theta_t\}_{t=1}^T$, step sizes $\{\eta_t\}_{t=1}^T$
\State Sample $\mathbf{z}_T \sim \text{Uniform}(\mathbb{S}^{d-1})$
\For{$t = T$ to $1$}
    \State $\kappa_t \gets \cot(\theta_t)$ \Comment{Get concentration}
    \If{$t > 1$}
        \State $\nabla_{\mathbf{z}_t} \log f \gets \text{ScoreNet}_\theta(\mathbf{z}_t, t, y)$
        \State $\mathbf{m}_t \gets \mathbf{z}_t + \eta_t \nabla_{\mathbf{z}_t} \log f$
        \State $\mathbf{m}_t \gets \mathbf{m}_t/\|\mathbf{m}_t\|$ \Comment{Project to hypersphere}
        \State $\mathbf{z}_{t-1} \sim \text{vMF}(\mathbf{m}_t, \kappa_t)$
    \EndIf
\EndFor
\State \Return Final sample $\mathbf{z}_1$
\end{algorithmic}
\end{algorithm}

\subsection{Adaptive Class-Dependent Concentration}
We consider data $\mathbf{z}_{0}$ on the unit hypersphere and a forward diffusion process as before with vMF transitions. In the reverse process $p_{\theta}(\mathbf{z}_{0:T}) = p_{\theta}(\mathbf{z}_{0}\mid \mathbf{z}_{1},y)\,\prod_{t=1}^{T} p_{\theta}(\mathbf{z}_{t-1}\mid \mathbf{z}_{t},y)$, we guide samples into learned truncated regions on the hypersphere. We employ two networks: a direction predictor $D_\phi$ that estimates class-specific directions $\mathbf{m}_t = D_\phi(\mathbf{z}_t, t, y)$, and an angle predictor $C_\psi$ that estimates class angular radii $\theta_y = C_\psi(\mathbf{z}_t, t, y)$. These networks learn to form dynamic hypercones $\mathcal{C}_{t,y} = \{\mathbf{z}\colon \angle(\mathbf{z}, \mathbf{m}_t) \le \theta_{y}\}$ that capture class-specific regions at each step. The reverse process uses truncated vMF:
\[
p_{\theta}(\mathbf{z}_{0}\mid \mathbf{z}_{1},y)
\,=\,
\mathrm{TvMF}\!\bigl(\mathbf{z}_{0};\,\mathbf{m}_{\theta}(\mathbf{z}_{1},y),\,\kappa_{\theta}(\mathbf{z}_{1},y),\,\mathcal{C}_{t,y}\bigr),
\]
\[
\mathrm{TvMF}(\mathbf{z}; \mu, \kappa, \mathcal{C})
\,=\,
\frac{1}{Z(\kappa,\mathcal{C})}
\,\exp\!\bigl(\kappa\,\mu^{\top}\mathbf{z}\bigr)\,\mathbf{1}\{\mathbf{z}\in \mathcal{C}\},
\]
We maintain adaptive concentration through $\kappa_{\theta}(\mathbf{z}_{t},y) = \kappa_{\max}\,\sigma\!\bigl(\beta\,[\theta_{y}-\angle(\mathbf{z}_{t},\mathbf{m}_t)]\bigr)$, where $\beta$ is a scaling factor, $\kappa_{max}$ is maximum concentration allowed and $\sigma(.)$ is sigmoid function, ensuring stronger concentration within predicted regions. The normalization constant $Z(\kappa,\mathcal{C})$ for the truncated vMF can be computed as:
\[
Z(\kappa,\mathcal{C}) = \int_{\mathcal{C}} \exp(\kappa\,\mu^{\top}\mathbf{z})\,d\mathbb{S}^{d-1}(\mathbf{z})
\]
where, $d\mathbb{S}^{d-1}$ is the hyperspherical measure. This integral has a closed form in terms of the incomplete gamma function when $\mathcal{C}$ is a hypercone. The networks $D_\phi$ and $C_\psi$ are trained jointly with the score network to minimize the objective:
\[
\mathcal{L}(\phi,\psi) = \mathbb{E}_{t,y}\left[\|\mathbf{m}_t - \hat{\mathbf{m}}_y\|^2 + \lambda(\theta_y - \hat{\theta}_y)^2\right]
\]
where, $\hat{\mathbf{m}}_y$ and $\hat{\theta}_y$ are empirical class statistics computed from training data, and $\lambda$ balances the direction and angle losses. This ensures that the predicted geometry aligns with the true class structure while maintaining the flexibility of learned truncation regions.

The combination of learned directional prediction, adaptive concentration, and explicit truncation provides a powerful mechanism for class-conditional generation. At each step $t$, the process maintains a dynamic balance between score-based updates and geometric constraints:
\[
\mathbb{E}[\mathbf{z}_{t-1}|\mathbf{z}_t,y] = \mathbf{m}_t + \eta_t\nabla_{\mathbf{z}_t}\log f(\mathbf{z}_t;\boldsymbol{\mu}_t,\kappa_t)\cdot\mathbf{1}\{\mathbf{z}_t \in \mathcal{C}_{t,y}\}
\]
This ensures samples remain within class-appropriate regions while benefiting from the score function's gradient information. Detailed proofs of forward and reverse modeling are shown in \textcolor{darkblue}{Appendix \ref{algo}}.

\begin{table}[]
\caption{\label{table-1} Performance comparison of Gaussian and \textit{HyperSphereDiff} across six datasets using FID (lower is better), HCR (lower is better), and HDS (lower values indicates harder samples). The vMF model demonstrates superior capability in generating challenging samples with better FID and HDS scores.}
\vspace{4pt}
\resizebox{\linewidth}{!}{
\begin{tabular}{c|cc|cc|cc}
\hline
Dataset               & \multicolumn{2}{c|}{FID}                                & \multicolumn{2}{c|}{HCR} & \multicolumn{2}{c}{HDS} \\
\multicolumn{1}{l|}{} & \multicolumn{1}{c}{Gaussian} & \multicolumn{1}{c|}{vMF} & Gaussian      & vMF      & Gaussian     & vMF      \\ \hline
MNIST                 & 1.95                         & 1.86                     & 0.17          & 0.14     & 0.76         & 0.52     \\
CIFAR-10              & 3.45                         & 3.52                     & 0.23          & 0.2      & 0.72         & 0.48     \\
CUB-200               & 8.47                         & 9.11                     & 0.19          & 0.13     & 0.81         & 0.41     \\
Cars-196              & 9.09                         & 7.87                     & 0.19          & 0.17     & 0.85         & 0.6      \\
CelebA               & 9.31                         & 9.29                     & 0.42          & 0.22     & 0.77         & 0.59     \\
D-LORD                & 11.38                        & 9.27                     & 0.46          & 0.21     & 0.91         & 0.62    \\\hline
\end{tabular}}
\label{table}
\end{table}

\begin{figure*}
    \centering
    \includegraphics[width=\linewidth]{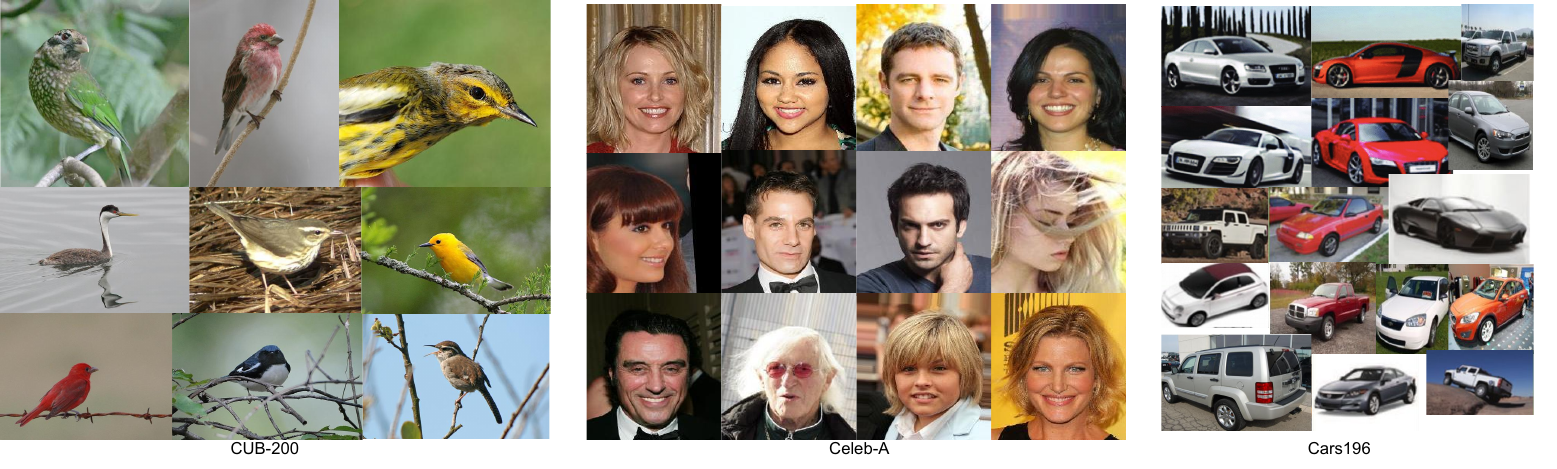}
    \caption{Generated samples of birds (left), faces (middle) and cars (right) using the proposed vMF based diffusion model trained on CUB-200, CelebA and Cars-196 dataset, showcasing the preservation of key attributes and diversity.}
    \label{fig:samples}
\end{figure*}
\subsection{Beyond Geometry and Uncertainty: Retaining Image Information}
In our face generation process, we introduce two forms of uncertainty to capture both \emph{magnitude} and \emph{directional} components in the data representation motivated by \cite{chiranjeev2024hyperspacex}. Specifically, we interpret an image embedding $\mathbf{x}$ as having a \emph{magnitude} $\|\mathbf{x}\|$ (which relates to overall intensity or scale) and a \emph{direction} $\mathbf{x}/\|\mathbf{x}\|\in \mathbb{S}^{d-1}$ (which captures structural and class-related geometry). We employ a Gaussian distribution to model the magnitude uncertainty and a vMF distribution to model the directional uncertainty. This hybrid approach preserves crucial \emph{class geometry} on the hypersphere, while still accommodating image-specific variability (e.g. changes in lighting or intensity) through Gaussian noise. The detailed forward and reverse process is provided in \textcolor{darkblue}{Appendix \ref{dual}}.

\section{Experiments}
We evaluate \textit{HyperSphereDiff} on four object datasets (CIFAR-10 \cite{krizhevsky2009learning}, MNIST \cite{deng2012mnist}, CUB-200 \cite{WahCUB_200_2011} and Cars-196 \cite{krause20133d}), and two face datasets (CelebA \cite{liu2015faceattributes} and D-LORD \cite{manchanda2023d}). The architecture and training details are provided in \textcolor{darkblue}{Appendix \ref{exp}}. 

\textbf{Retaining Geometry:}
We define two metrics to evaluate whether the generated samples preserve the class structure and maintain distributional properties of the original data. \\ \textbf{Hypercone Coverage Ratio (HCR): } The HCR quantifies the percentage of generated samples outside the class distribution’s hypercone. For each class k, we compute:
\begin{align*}
\text{HCR} = \frac{1}{K} \sum_{k=1}^{K} \frac{1}{N_k} \sum_{i=1}^{N_k} \mathds{1} \left[ \cos^{-1}(\mathbf{z}_i \cdot \boldsymbol{\mu}_k) > \theta_k^{\max} \right]
\end{align*}
A lower HCR indicates better preservation of the class structure, while a higher HCR suggests out-of-class or unrealistic samples.\\  \textbf{Hypercone Difficulty Skew (HDS): } The HDS measures whether the model generates easy samples by analyzing how they are distributed across sub-cones of increasing angular deviation. 
A high HDS indicates the model generates easy samples, while a low HDS suggests a balanced distribution across difficulty levels (refer \textcolor{darkblue}{Appendix \ref{metrics}}). For each class $k$, we compute the fraction of samples in each sub-cone and compute HDS:
\begin{align*}
    p_k^m = \frac{1}{N_k} \sum_{i=1}^{N_k} \mathds{1} \left[ \theta_{m-1} < \cos^{-1}(\mathbf{z}_i \cdot \boldsymbol{\mu}_k) \leq \theta_m \right] \\ & \hspace{-5.5cm}
    \text{HDS} = \frac{1}{K} \sum_{k=1}^{K} \sum_{m=1}^{M} w_m p_k^m
\end{align*}
The HCR and HDS metrics offer a thorough assessment of generative models in angular space. HCR measures class consistency and assesses how well intra-class variations are preserved, while HDS identifies whether the model favors generating easier samples. An ideal generative model should maintain a low HCR while achieving an HDS that aligns with the natural difficulty distribution of the original data.

\begin{figure}[!t]
    \centering
    \includegraphics[width=\linewidth]{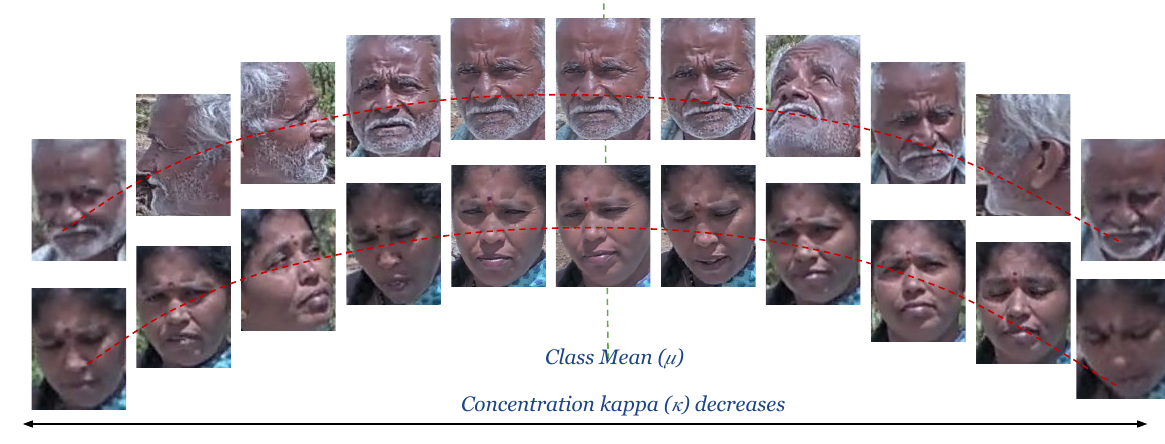}
    \caption{Real-world surveillance samples generated using class-dependent adaptive concentration $\kappa$, through proposed diffusion model after training on D-LORD dataset.}
    \label{fig:hypercone}
\end{figure}

\begin{figure*}
    \centering
    \includegraphics[width=\linewidth]{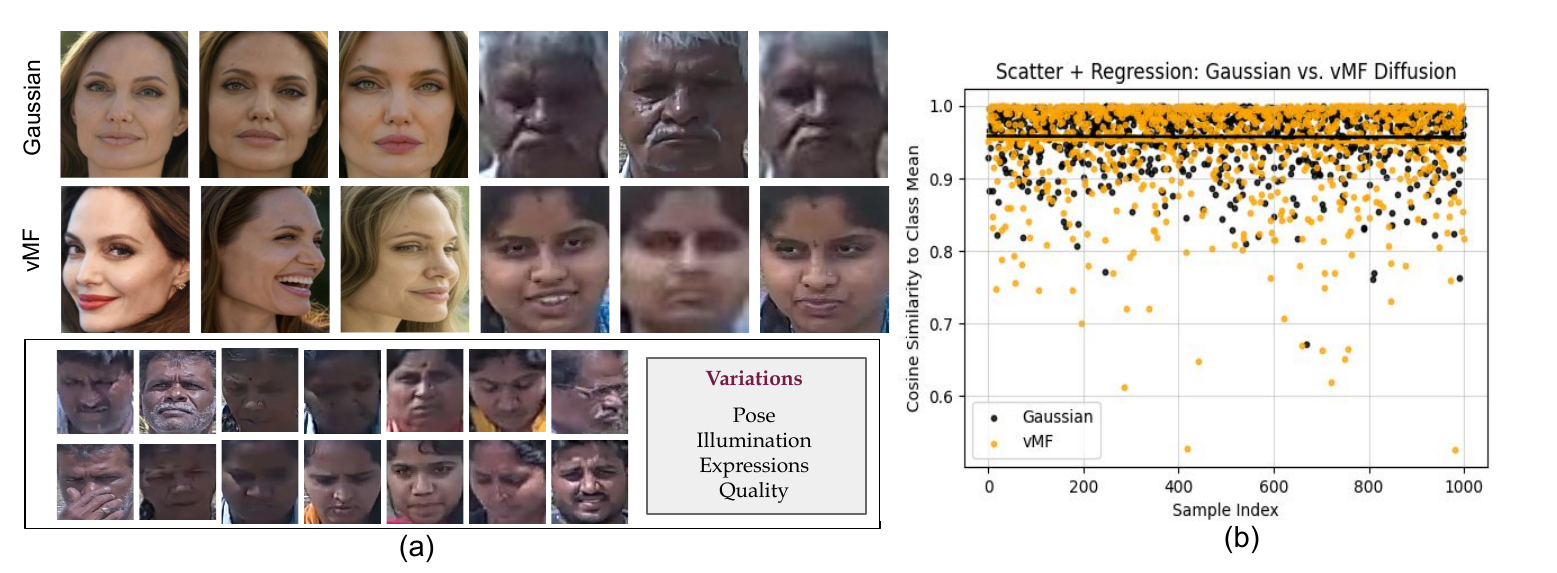}
    \vspace{-15pt}
    \caption{(a) Comparison of samples generated using Gaussian and vMF diffusion models, highlighting improved variation in pose, illumination, expression, and quality with vMF. (b) Scatter plot with fitted regression shows cosine similarity to the class mean, where \textit{HyperSphereDiff} generates more challenging samples with broader similarity distribution compared to Gaussian.}
    \label{fig:hardsamples}
    \vspace{-5pt}
\end{figure*}

Table \ref{table-1} highlights the comparison between the Gaussian and vMF models in terms of HCR and HDS in six datasets. The Hypercone Coverage Ratio (HCR) consistently decreases for the vMF model compared to the Gaussian model, indicating that the vMF generates samples that follow the original class structure, thus preserving angular relations. Similarly, the Hypercone Difficulty Skew (HDS) value with the vMF model in all six datasets is around $0.5$, reflecting its ability to generate generalized samples across the hypercone rather than generating simple samples. Gaussian-based generated samples have HDS values around $0.6$, showing that a higher proportion of samples are generated from the innermost hypercone demonstrating simplicity bias. For datasets like D-LORD and CUB-200, the reduction in HCR and HDS is particularly pronounced, showcasing the efficiency of the vMF model in modeling angular variation and producing diverse samples.

\textbf{Generation Quality:} We evaluate the quality of generated samples using our method across six diverse datasets. These include natural images (CelebA, CIFAR-10), fine-grained object categories (CUB-200, Cars-196), and structured digits (MNIST). \cref{fig:samples} presents examples of generated faces and birds from models trained on CelebA, Cars-196, and CUB-200. The images exhibit realistic structural coherence and diversity, resembling their respective distributions. We also compute the Fréchet Inception Distance (FID) \cite{bynagari2019gans}, achieving competitive scores across all datasets (Table \ref{table-1}). In particular, CelebA, D-LORD, and Cars-196 show high performance, indicating that the model captures both global structure and fine-grained details. This demonstrates that \textit{HyperSphereDiff} generates high-quality samples with significant variation while preserving class structure, aligning with our theoretical insights (See \textcolor{darkblue}{Appendix \ref{exp}}).


\begin{figure}[t]
    \centering
    \includegraphics[width=\linewidth]{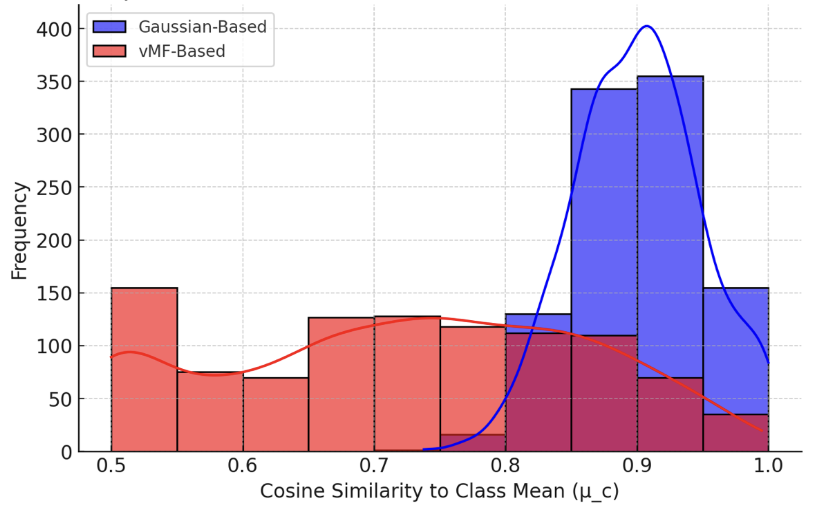}
    \caption{Gaussian-based diffusion generates samples tightly clustered near the class mean (high similarity, low variance), favoring easier cases. In contrast, \textit{HyperSphereDiff} produces a more diverse spread (lower similarity, higher variance), ensuring better coverage across difficulty levels.}
    \label{fig:dist_3}
    \vspace{-10pt}
\end{figure}

\textbf{Hypercone Generation:}
We evaluate the hypercone constraint by analyzing generated samples from both inner and outer hypercones, shown in \textcolor{darkblue}{Appendix \ref{exp}}. Samples in the inner hypercone retain class-consistent features, while those in the outer hypercone exhibit noisy generations, validating the effectiveness of our metric in capturing generation quality. Furthermore, for the real-world D-LORD \cite{manchanda2023d} dataset, we sample images at varying kappa values shown in \cref{fig:hypercone}, concentrating the samples around the mean vector to reflect distance-wise variations in the surveillance imagery. These experiments highlight how diffusion with hypercone constraints ensures both intra-class consistency and controlled diversity. 

\textbf{Face Recognition Results:}
We evaluate the performance of our proposed method on the real-world D-LORD \cite{manchanda2023d} surveillance dataset, which consists of sequential frame images exhibiting angular relationships in the feature space. We generated samples of 1000 synthetic subjects using both Gaussian-based diffusion and our vMF-based hyperspherical diffusion. Training the ArcFace model on the generated data leads to $5.0\%$ improvement  on the real test set identification accuracy with the \textit{HyperSphereDiff} compared to the Gaussian model. This result highlights the ability of our approach to generate data that better aligns with the angular structure of real-world surveillance images, effectively capturing the underlying geometry. 

\begin{table}[t]
\centering
\caption{FID scores for different noise configurations across Celeb-A and D-LORD datasets. The hybrid Gaussian + Spherical method achieves the best performance.}
\label{ablation}
\resizebox{\linewidth}{!}{
\begin{tabular}{lccc}
\hline
\textbf{Dataset} & \textbf{Gaussian} & \textbf{Gaussian + Spherical} & \textbf{Spherical} \\
\hline
Celeb-A & 9.31 & \textbf{9.29} & 9.31 \\
D-LORD  & 11.38 & \textbf{9.27} & 10.94 \\
\hline
\end{tabular}}
\end{table}

\textbf{Generating Hard Samples:}
We observed that the vMF-based approach consistently produced samples with varied cosine similarity to the class mean, indicating a higher degree of generalized sample generation compared to the Gaussian baseline, as shown in \cref{fig:hardsamples}. This is also supported by qualitative results, where vMF samples exhibit greater variations in pose, illumination and expression, closely resembling real-world challenges. Quantitatively, this observation is validated using HDS metric (Table \ref{table-1}), which measures the proportion of samples concentrated in the smaller hypercones. The vMF model achieved lower HDS values, confirming its effectiveness in generating harder, more diverse samples that can better represent challenging scenarios in real-world datasets. Further, \cref{fig:dist_3} shows \textit{HyperSphereDiff} yields broader coverage (mean cosine similarity=0.722, std=0.137), reflecting balanced difficulty representation compared to Gaussian diffusion (mean=0.900, std=0.048).

\begin{figure*}[t]
    \centering
    \includegraphics[width=0.945\textwidth]{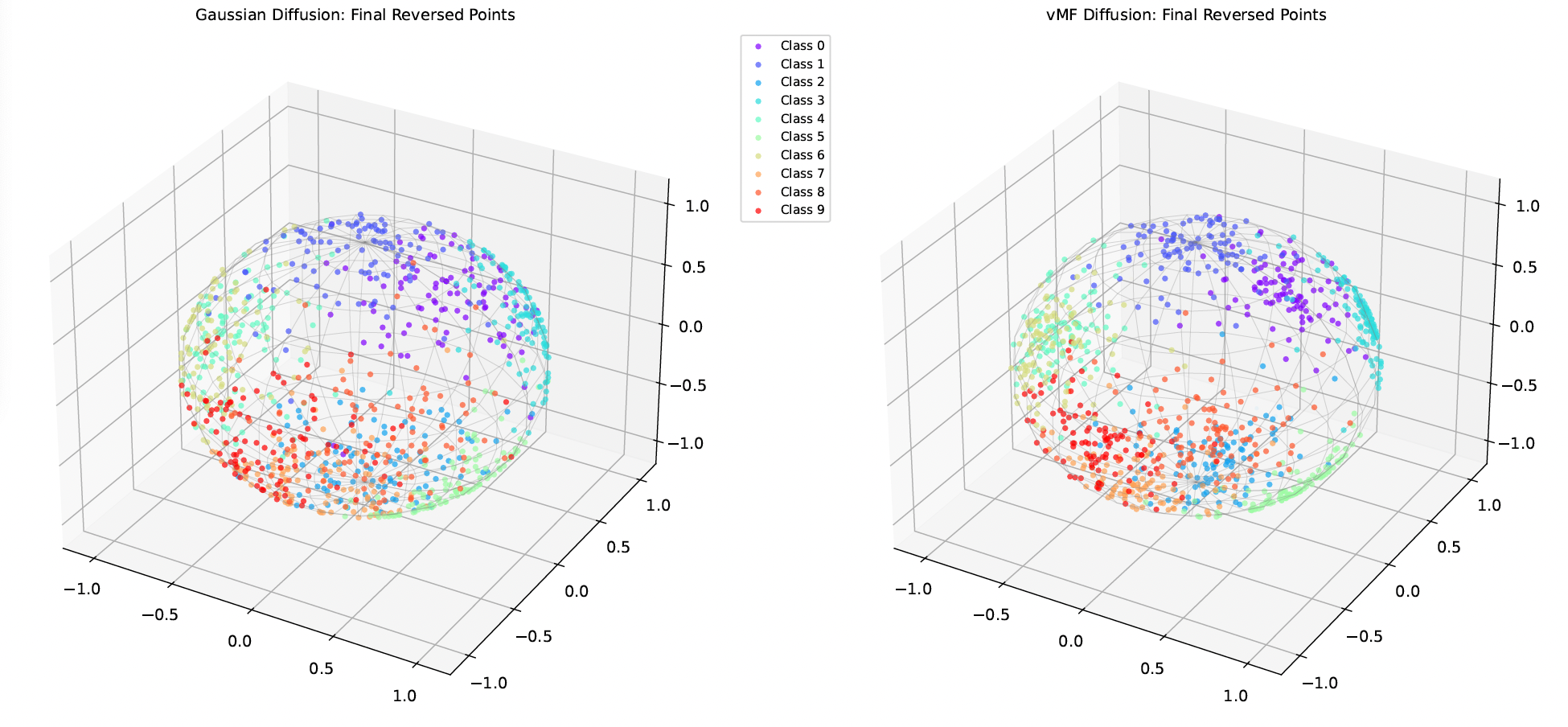}
    \vspace{-15pt}
    \caption{Feature representation of the 10-class MNIST dataset generated using Gaussian-based diffusion (left) and vMF-based diffusion (right). The vMF-based sampling aligns generated sample features within class-specific 3D hypercones, while Gaussian-based sampling results in scattered features outside the class-hypercones.}
    \label{fig:mnist features}
\end{figure*}

\begin{figure}[t]
    \centering
    \includegraphics[width=\linewidth]{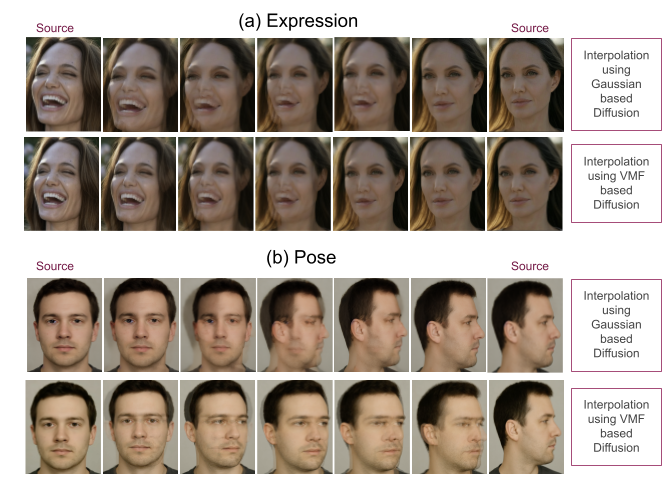}
    \caption{Comparison between interpolation using Gaussian-based diffusion and vMF-based diffusion for generating images between two variants of the same subject: (a) Expression, (b) Pose.}
    \label{fig:interpolation}
    \vspace{-10pt}
\end{figure}

\textbf{Ablation Study:}
\cref{ablation} presents FID scores comparison on Celeb-A and D-LORD datasets under three noise strategies: Gaussian, Spherical (\textit{HyperSphereDiff}), and a hybrid of both. The hybrid model yields the lowest FID scores across both datasets (9.29 for Celeb-A and 9.27 for D-LORD), suggesting that combining magnitude-based Gaussian noise with direction-aware spherical noise results in better generation quality. Purely Gaussian or purely spherical models alone are less effective, particularly for D-LORD, where Gaussian diffusion performs poorly (FID = 11.38) and the spherical-only model underperforms compared to the hybrid. This demonstrates the advantage of modeling dual uncertainty by capturing both intensity variation and directional structure to enhance the realism and class consistency.

\textbf{Feature Representation:}
\cref{fig:mnist features} illustrates the feature representations of conditional samples generated from the 10 classes of the MNIST dataset. The figure highlights that the vMF-based reverse sampling effectively converges samples within class-specific hypercones, capturing the angular geometry of the data. In contrast, Gaussian-based reverse sampling produces samples that converge within Euclidean space, failing to adhere to the hyperspherical structure.

\textbf{Interpolation Results:}
\cref{fig:interpolation}  shows interpolations using Gaussian-based diffusion (top row) which often produce unnatural or inconsistent transitions, especially with large attribute shifts. In contrast, our \textit{HyperSphereDiff} (bottom row) angular interpolation on a hypersphere yields smoother, identity-preserving transitions across poses and expressions.

\section{Conclusion and Future Work}
In this work, we have challenged the ubiquitous Gaussian assumption in diffusion models by introducing a novel hyperspherical framework \textit{HyperSphereDiff} that leverages vMF distributions for generative modeling on angular manifolds. By incorporating hyperspherical geometry and class-dependent uncertainty, our approach preserves angular structure while producing diverse, semantically rich samples. Extensive experiments on facial and complex object datasets demonstrate its effectiveness in fine-grained tasks where angular relationships are critical. This work opens new avenues for manifold-constrained generative modeling, advancing geometry-aware diffusion techniques. Future research will focus on developing adaptive $\kappa$-based schedulers, adopting hierarchical hypercone partitioning for finer class variations, and extending the framework to conditional generation tasks, such as pose-invariant face synthesis.

\section*{Impact Statement}
This work introduces a novel hyperspherical diffusion framework leveraging von Mises–Fisher (vMF) distributions to enhance the modeling of high-dimensional angular data in Machine Learning. By improving the fidelity and interpretability of generative models, the proposed method has applications in computer vision, fine-grained classification, and surveillance. However, like other generative techniques, it carries potential ethical risks, including misuse in deepfakes, privacy concerns, and bias amplification. To mitigate these risks, we emphasize responsible use and transparency, particularly in sensitive domains \cite{mittal2024responsible}. Overall, this research advances generative modeling for hyperspherical data while promoting a deeper understanding of geometric structures in Machine Learning.

\section*{Acknowledgements}
The authors acknowledge the support of IndiaAI and Meta through the Srijan: Centre of Excellence for Generative AI.

\nocite{langley00}

\bibliography{example_paper}

\begin{thebibliography}{49}
\providecommand{\natexlab}[1]{#1}
\providecommand{\url}[1]{\texttt{#1}}
\expandafter\ifx\csname urlstyle\endcsname\relax
  \providecommand{\doi}[1]{doi: #1}\else
  \providecommand{\doi}{doi: \begingroup \urlstyle{rm}\Url}\fi

\bibitem[Austin et~al.(2021)Austin, Johnson, Ho, Tarlow, and van~den Berg]{austin2021structured}
Austin, J., Johnson, D., Ho, J., Tarlow, D., and van~den Berg, R.
\newblock Structured denoising diffusion models in discrete state-spaces.
\newblock In \emph{Advances in neural information processing systems}, volume~34, pp.\  17981--17993, 2021.

\bibitem[Bautista et~al.(2022)Bautista, Guo, Abnar, Talbott, Toshev, Chen, Dinh, Zhai, Goh, Ulbricht, et~al.]{bautista2022gaudi}
Bautista, M.~A., Guo, P., Abnar, S., Talbott, W., Toshev, A., Chen, Z., Dinh, L., Zhai, S., Goh, H., Ulbricht, D., et~al.
\newblock Gaudi: A neural architect for immersive 3d scene generation.
\newblock In \emph{Advances in neural information processing systems}, volume~35, pp.\  25102--25116, 2022.

\bibitem[Bronstein et~al.(2017)Bronstein, Bruna, LeCun, Szlam, and Vandergheynst]{bronstein2017geometric}
Bronstein, M.~M., Bruna, J., LeCun, Y., Szlam, A., and Vandergheynst, P.
\newblock Geometric deep learning: going beyond euclidean data.
\newblock \emph{IEEE Signal Processing Magazine}, 34\penalty0 (4):\penalty0 18--42, 2017.

\bibitem[Bynagari(2019)]{bynagari2019gans}
Bynagari, N.~B.
\newblock Gans trained by a two time-scale update rule converge to a local nash equilibrium.
\newblock \emph{Asian Journal of Applied Science and Engineering}, 8:\penalty0 25--34, 2019.

\bibitem[Chiranjeev et~al.(2024)Chiranjeev, Dosi, Thakral, Vatsa, and Singh]{chiranjeev2024hyperspacex}
Chiranjeev, C., Dosi, M., Thakral, K., Vatsa, M., and Singh, R.
\newblock Hyperspacex: Radial and angular exploration of hyperspherical dimensions.
\newblock In \emph{European Conference on Computer Vision}, pp.\  1--17. Springer, 2024.

\bibitem[Davidson et~al.(2018)Davidson, Falorsi, and De~Cao]{davidson2018hyperspherical}
Davidson, T., Falorsi, L., and De~Cao, N.
\newblock Hyperspherical variational auto-encoders.
\newblock In \emph{Advances in neural information processing systems}, 2018.

\bibitem[De~Bortoli et~al.(2022)De~Bortoli, Mathieu, Hutchinson, Thornton, Teh, and Doucet]{de2022riemannian}
De~Bortoli, V., Mathieu, E., Hutchinson, M., Thornton, J., Teh, Y.~W., and Doucet, A.
\newblock Riemannian score-based generative modelling.
\newblock In \emph{Advances in neural information processing systems}, volume~35, pp.\  2406--2422, 2022.

\bibitem[Deng et~al.(2019)Deng, Guo, Xue, and Zafeiriou]{deng2019arcface}
Deng, J., Guo, J., Xue, N., and Zafeiriou, S.
\newblock Arcface: Additive angular margin loss for deep face recognition.
\newblock In \emph{Proceedings of the Conference on Computer Vision and Pattern Recognition}, pp.\  4690--4699, 2019.

\bibitem[Deng(2012)]{deng2012mnist}
Deng, L.
\newblock The mnist database of handwritten digit images for machine learning research [best of the web].
\newblock \emph{IEEE signal processing magazine}, 29\penalty0 (6):\penalty0 141--142, 2012.

\bibitem[Dhariwal \& Nichol(2021)Dhariwal and Nichol]{dhariwal2021diffusion}
Dhariwal, P. and Nichol, A.
\newblock Diffusion models beat gans on image synthesis.
\newblock In \emph{Advances in neural information processing systems}, volume~34, pp.\  8780--8794, 2021.

\bibitem[Du et~al.(2022)Du, Gozum, Ming, and Li]{du2022siren}
Du, X., Gozum, G., Ming, Y., and Li, Y.
\newblock Siren: Shaping representations for detecting out-of-distribution objects.
\newblock In \emph{Advances in neural information processing systems}, volume~35, pp.\  20434--20449, 2022.

\bibitem[Du et~al.(2023)Du, Sun, Zhu, and Li]{du2023dream}
Du, X., Sun, Y., Zhu, J., and Li, Y.
\newblock Dream the impossible: Outlier imagination with diffusion models.
\newblock In \emph{Advances in Neural Information Processing Systems}, volume~36, pp.\  60878--60901, 2023.

\bibitem[Duta et~al.(2021)Duta, Liu, Zhu, and Shao]{duta2021improved}
Duta, I.~C., Liu, L., Zhu, F., and Shao, L.
\newblock Improved residual networks for image and video recognition.
\newblock In \emph{International Conference on Pattern Recognition}, pp.\  9415--9422, 2021.

\bibitem[Dwork \& et~al.(2006)Dwork and et~al.]{dwork2006calibrating}
Dwork, C. and et~al.
\newblock Calibrating noise to sensitivity in private data analysis.
\newblock \emph{Journal of Privacy and Confidentiality}, 2006.

\bibitem[Falorsi et~al.(2018)Falorsi, De~Cao, Liò, and et~al.]{falorsi2018explorations}
Falorsi, L., De~Cao, N., Liò, P., and et~al.
\newblock Explorations in geometry-aware neural networks for learning hyperspherical embeddings.
\newblock In \emph{International Conference on Learning Representations}, 2018.

\bibitem[Hasnat et~al.(2017)Hasnat, Bohn{\'e}, Milgram, Gentric, and Chen]{hasnat2017mises}
Hasnat, M.~A., Bohn{\'e}, J., Milgram, J., Gentric, S., and Chen, L.
\newblock von mises-fisher mixture model-based deep learning: Application to face verification.
\newblock \emph{arXiv preprint arXiv:1706.04264}, 2017.

\bibitem[Ho et~al.(2020)Ho, Jain, and Abbeel]{ho2020denoising}
Ho, J., Jain, A., and Abbeel, P.
\newblock Denoising diffusion probabilistic models.
\newblock In \emph{Advances in neural information processing systems}, volume~33, pp.\  6840--6851, 2020.

\bibitem[Huang et~al.(2022)Huang, Ren, and Xu]{huang2022riemannian}
Huang, W., Ren, Y., and Xu, Y.
\newblock Riemannian diffusion models.
\newblock \emph{arXiv preprint arXiv:2202.02763}, 2022.

\bibitem[Huang et~al.(2024)Huang, Salaun, Vasconcelos, Theobalt, Oztireli, and Singh]{huang2024blue}
Huang, X., Salaun, C., Vasconcelos, C., Theobalt, C., Oztireli, C., and Singh, G.
\newblock Blue noise for diffusion models.
\newblock In \emph{ACM SIGGRAPH Conference}, pp.\  1--11, 2024.

\bibitem[Karras et~al.(2022)Karras, Aittala, Aila, and Laine]{karras2022elucidating}
Karras, T., Aittala, M., Aila, T., and Laine, S.
\newblock Elucidating the design space of diffusion-based generative models.
\newblock In \emph{Advances in neural information processing systems}, volume~35, pp.\  26565--26577, 2022.

\bibitem[Kingma \& Dhariwal(2018)Kingma and Dhariwal]{kingma2018glow}
Kingma, D. and Dhariwal, P.
\newblock Glow: Generative flow with invertible 1x1 convolutions.
\newblock In \emph{Advances in neural information processing systems}, volume~31, 2018.

\bibitem[Kingma et~al.(2021)Kingma, Salimans, Ho, and Chen]{kingma2021variational}
Kingma, D., Salimans, T., Ho, J., and Chen, X.
\newblock Variational diffusion models.
\newblock In \emph{Advances in neural information processing systems}, volume~34, pp.\  21696--21707, 2021.

\bibitem[Kong et~al.(2021)Kong, Ping, Huang, Zhao, and Catanzaro]{kong2021diffwave}
Kong, Z., Ping, W., Huang, J., Zhao, K., and Catanzaro, B.
\newblock Diffwave: A versatile diffusion model for audio synthesis.
\newblock In \emph{International Conference on Learning Representations}, 2021.

\bibitem[Krause et~al.(2013)Krause, Stark, Deng, and Fei-Fei]{krause20133d}
Krause, J., Stark, M., Deng, J., and Fei-Fei, L.
\newblock 3d object representations for fine-grained categorization.
\newblock In \emph{Proceedings of the International Conference on Computer Vision workshops}, pp.\  554--561, 2013.

\bibitem[Krizhevsky et~al.(2009)Krizhevsky, Hinton, et~al.]{krizhevsky2009learning}
Krizhevsky, A., Hinton, G., et~al.
\newblock Learning multiple layers of features from tiny images.
\newblock 2009.

\bibitem[Langley(2000)]{langley00}
Langley, P.
\newblock Crafting papers on machine learning.
\newblock In \emph{International Conference on Machine Learning}, pp.\  1207--1216, 2000.

\bibitem[Liu et~al.(2015)Liu, Luo, Wang, and Tang]{liu2015faceattributes}
Liu, Z., Luo, P., Wang, X., and Tang, X.
\newblock Deep learning face attributes in the wild.
\newblock In \emph{Proceedings of International Conference on Computer Vision}, 2015.

\bibitem[Lui(2012)]{lui2012advances}
Lui, Y.~M.
\newblock Advances in matrix manifolds for computer vision.
\newblock \emph{Image and Vision Computing}, 30\penalty0 (6-7):\penalty0 380--388, 2012.

\bibitem[Luo et~al.(2021)Luo, Wang, and Tang]{luo2021antigen}
Luo, X., Wang, J., and Tang, H.
\newblock Antigen design using generative diffusion models.
\newblock In \emph{Proceedings of the National Academy of Sciences}, 2021.

\bibitem[Majumdar et~al.(2017)Majumdar, Singh, and Vatsa]{7470420}
Majumdar, A., Singh, R., and Vatsa, M.
\newblock Face verification via class sparsity based supervised encoding.
\newblock \emph{IEEE Transactions on Pattern Analysis and Machine Intelligence}, 39\penalty0 (6):\penalty0 1273--1280, 2017.
\newblock \doi{10.1109/TPAMI.2016.2569436}.

\bibitem[Manchanda et~al.(2023)Manchanda, Bhagwatkar, Balutia, Agarwal, Chaudhary, Dosi, Chiranjeev, Vatsa, and Singh]{manchanda2023d}
Manchanda, S., Bhagwatkar, K., Balutia, K., Agarwal, S., Chaudhary, J., Dosi, M., Chiranjeev, C., Vatsa, M., and Singh, R.
\newblock D-lord: Dysl-ai database for low-resolution disguised face recognition.
\newblock \emph{Transactions on Biometrics, Behavior, and Identity Science}, 2023.

\bibitem[Mardia \& Jupp(2000)Mardia and Jupp]{directional2000statistics}
Mardia, K.~V. and Jupp, P.~V.
\newblock \emph{Directional statistics}, volume~2.
\newblock Wiley Online Library, 2000.

\bibitem[Matsubara \& Imai(2021)Matsubara and Imai]{matsubara2021student}
Matsubara, H. and Imai, M.
\newblock Student-t distribution-based robust loss functions.
\newblock In \emph{Proceedings of the Conference on Computer Vision and Pattern Recognition}, 2021.

\bibitem[Ming et~al.(2023)Ming, Sun, Dia, and Li]{ming2023cider}
Ming, Y., Sun, Y., Dia, O., and Li, Y.
\newblock How to exploit hyperspherical embeddings for out-of-distribution detection?
\newblock In \emph{International Conference on Learning Representations}, 2023.

\bibitem[Mittal et~al.(2024)Mittal, Thakral, Singh, Vatsa, Glaser, Canton~Ferrer, and Hassner]{mittal2024responsible}
Mittal, S., Thakral, K., Singh, R., Vatsa, M., Glaser, T., Canton~Ferrer, C., and Hassner, T.
\newblock On responsible machine learning datasets emphasizing fairness, privacy and regulatory norms with examples in biometrics and healthcare.
\newblock \emph{Nature Machine Intelligence}, 6:\penalty0 936--949, 2024.

\bibitem[Pidstrigach(2022)]{pidstrigach2022score}
Pidstrigach, J.
\newblock Score-based generative models detect manifolds.
\newblock In \emph{Advances in neural information processing systems}, volume~35, pp.\  35852--35865, 2022.

\bibitem[Rezende et~al.(2020)Rezende, Papamakarios, Racaniere, Albergo, Kanwar, Shanahan, and Cranmer]{rezende2020normalizing}
Rezende, D.~J., Papamakarios, G., Racaniere, S., Albergo, M., Kanwar, G., Shanahan, P., and Cranmer, K.
\newblock Normalizing flows on tori and spheres.
\newblock In \emph{Proceedings of International Conference on Machine Learning}, pp.\  8083--8092. PMLR, 2020.

\bibitem[Salimans \& Ho(2022)Salimans and Ho]{salimans2022progressive}
Salimans, T. and Ho, J.
\newblock Progressive distillation for fast sampling of diffusion models.
\newblock \emph{arXiv preprint arXiv:2202.00512}, 2022.

\bibitem[Scott et~al.(2021)Scott, Gallagher, and Mozer]{scott2021mises}
Scott, T.~R., Gallagher, A.~C., and Mozer, M.~C.
\newblock von mises-fisher loss: An exploration of embedding geometries for supervised learning.
\newblock In \emph{Proceedings of the International Conference on Computer Vision}, pp.\  10612--10622, 2021.

\bibitem[Shue et~al.(2023)Shue, Chan, Po, Ankner, Wu, and Wetzstein]{shue20233d}
Shue, J.~R., Chan, E.~R., Po, R., Ankner, Z., Wu, J., and Wetzstein, G.
\newblock 3d neural field generation using triplane diffusion.
\newblock In \emph{Proceedings of the Conference on Computer Vision and Pattern Recognition}, pp.\  20875--20886, 2023.

\bibitem[Sohl-Dickstein et~al.(2015)Sohl-Dickstein, Weiss, Maheswaranathan, and Ganguli]{sohl2015deep}
Sohl-Dickstein, J., Weiss, E., Maheswaranathan, N., and Ganguli, S.
\newblock Deep unsupervised learning using nonequilibrium thermodynamics.
\newblock In \emph{Proceedings of International Conference on Machine Learning}, 2015.

\bibitem[Song \& Ermon(2019)Song and Ermon]{song2019generative}
Song, Y. and Ermon, S.
\newblock Generative modeling by estimating gradients of the data distribution.
\newblock In \emph{Advances in neural information processing systems}, 2019.

\bibitem[Song et~al.(2021)Song, Sohl-Dickstein, Kingma, Kumar, Ermon, and Poole]{song2021scorebased}
Song, Y., Sohl-Dickstein, J., Kingma, D.~P., Kumar, A., Ermon, S., and Poole, B.
\newblock Score-based generative modeling through stochastic differential equations.
\newblock In \emph{International Conference on Learning Representations}, 2021.

\bibitem[Vincent(2011)]{vincent2011connection}
Vincent, P.
\newblock A connection between score matching and denoising autoencoders.
\newblock \emph{Neural computation}, 23\penalty0 (7):\penalty0 1661--1674, 2011.

\bibitem[Wah et~al.(2011)Wah, Branson, Welinder, Perona, and Belongie]{WahCUB_200_2011}
Wah, C., Branson, S., Welinder, P., Perona, P., and Belongie, S.
\newblock Caltech-ucsd birds-200-2011 (cub-200-2011).
\newblock Technical Report CNS-TR-2011-001, California Institute of Technology, 2011.

\bibitem[Wang et~al.(2018)Wang, Wang, Zhou, Ji, Gong, Zhou, Li, and Liu]{wang2018cosface}
Wang, H., Wang, Y., Zhou, Z., Ji, X., Gong, D., Zhou, J., Li, Z., and Liu, W.
\newblock Cosface: Large margin cosine loss for deep face recognition.
\newblock In \emph{Proceedings of the Conference on Computer Vision and Pattern Recognition}, pp.\  5265--5274, 2018.

\bibitem[Wang et~al.(2022)Wang, Xu, and Zhang]{wang2022diffusion}
Wang, Y., Xu, X., and Zhang, L.
\newblock Diffusion models on spheres: Towards geometric generative modeling.
\newblock In \emph{Advances in neural information processing systems}, 2022.

\bibitem[Watson et~al.(2022)Watson, Nichol, and Dhariwal]{watson2022learning}
Watson, J., Nichol, A., and Dhariwal, P.
\newblock Learning fast samplers for diffusion models by differentiating through sample quality.
\newblock In \emph{International Conference on Learning Representations}, 2022.

\bibitem[Xu et~al.(2023)Xu, He, and Wang]{xu2023rethinking}
Xu, X., He, X., and Wang, R.
\newblock Rethinking noise in diffusion models: Alternatives to gaussian.
\newblock In \emph{Proceedings of International Conference on Machine Learning}, 2023.

\end{thebibliography}
\bibliographystyle{icml2025}

\newpage
\appendix
\onecolumn

\section{Gaussian Noise Distorts Angular Relationship}
\label{a1}
A key challenge in hyperspherical data modeling is preserving the angular relationships that define class structure, especially when noise is introduced. In many generative and transformation-based approaches, Gaussian noise is commonly used to perturb data points. However, in non-Euclidean spaces like the hypersphere, such noise can significantly distort the underlying geometric structure. Unlike structured perturbations that respect the manifold’s constraints, isotropic Gaussian noise introduces deviations that shift points off the sphere and disrupt their relative angles. The following lemma formally establishes how Gaussian noise fails to maintain angular class structure, particularly in high-dimensional spaces where its effects become more pronounced.

\par\noindent\rule{\linewidth}{0.1pt}
\begin{lemma}[Gaussian Noise Distorts Angular Class Structure]
\label{lem:gaussian-distorts-angular-structure}
Let $\{\mathbf{x}_i\}_{i=1}^{N}$ be $N$ points in $\mathbb{S}^{d-1}$ 
(grouped into classes) such that the \emph{angular distances} 
(or equivalently, dot products) between points 
reflect some inter-class and intra-class relationships 
(e.g.\ points in the same class are close in angle, 
points in different classes have larger angles).
Define the perturbed points
\[
  \widetilde{\mathbf{x}}_i
  \;=\;
  \mathbf{x}_i
  \;+\;
  \boldsymbol{\epsilon}_i,
  \quad
  \boldsymbol{\epsilon}_i 
  \sim \mathcal{N}(\mathbf{0},\,\sigma^2 \mathbf{I}),
  \quad
  i=1,\dots,N,
\]
where, $\boldsymbol{\epsilon}_i$ are i.i.d.\ isotropic Gaussians in $\mathbb{R}^d$.  
Then in general, the inner products
\[
  \widetilde{\mathbf{x}}_i^\top 
  \widetilde{\mathbf{x}}_j
  \quad \text{and} \quad
  \mathbf{x}_i^\top \mathbf{x}_j
\]
are \emph{not} preserved; i.e.\ with high probability, 
the angles among the perturbed points $(\widetilde{\mathbf{x}}_1,\ldots,\widetilde{\mathbf{x}}_N)$ 
are significantly different from those among the original points 
$(\mathbf{x}_1,\ldots,\mathbf{x}_N)$, 
especially as $d$ grows and/or $\sigma>0$ is large.

\textbf{Proof (Sketch):}
\begin{enumerate}[label=\roman*)]
\item 
\textbf{No longer on the hypersphere.}  
  Since $\|\boldsymbol{\epsilon}_i\|\neq 0$ almost surely, 
  the perturbed points $\widetilde{\mathbf{x}}_i$ 
  will \emph{not} lie on $\mathbb{S}^{d-1}$, 
  so angles measured in $\mathbb{R}^d$ w.r.t. the origin 
  are already changed. 
  More precisely, 
  \[
    \|\widetilde{\mathbf{x}}_i\|^2
    \;=\;
    \|\mathbf{x}_i + \boldsymbol{\epsilon}_i\|^2
    \;\approx\;
    1 \;+\; \|\boldsymbol{\epsilon}_i\|^2
    \;+\; 2\,(\mathbf{x}_i^\top \boldsymbol{\epsilon}_i),
  \]
  which (with probability 1) is not equal to $1$.  
  Thus, any ``spherical" relationships are broken immediately.

\item 
\textbf{Inner products become random.} 
  Consider the inner product
  \[
    \widetilde{\mathbf{x}}_i^\top \widetilde{\mathbf{x}}_j
    \;=\;
    (\mathbf{x}_i + \boldsymbol{\epsilon}_i)^\top
    (\mathbf{x}_j + \boldsymbol{\epsilon}_j)
    \;=\;
    \mathbf{x}_i^\top \mathbf{x}_j
    \;+\;
    \mathbf{x}_i^\top \boldsymbol{\epsilon}_j
    \;+\;
    \mathbf{x}_j^\top \boldsymbol{\epsilon}_i
    \;+\;
    \boldsymbol{\epsilon}_i^\top \boldsymbol{\epsilon}_j.
  \]
  Since $\boldsymbol{\epsilon}_i,\boldsymbol{\epsilon}_j$ are Gaussians with mean $0$, each cross-term is a random variable whose distribution depends on $\sigma^2$ and $d$.  

\item 
\textbf{High dimension amplifies distortion.} 
   In high $d$ with $\sigma^2>0$, we typically have $\|\boldsymbol{\epsilon}_i\|\approx \sqrt{d}\,\sigma$, so the energy in the noise vectors can overshadow the original norm ($\|\mathbf{x}_i\|=1$). Hence $\|\widetilde{\mathbf{x}}_i\|\approx \sqrt{d}\,\sigma$, dominating any small angular differences that originally existed among the $\{\mathbf{x}_i\}$. Even for moderate $d$, if $\sigma$ is large enough, $\widetilde{\mathbf{x}}_i$ and $\widetilde{\mathbf{x}}_j$ become nearly orthogonal or randomly oriented (depending on the sign and correlation among the noise). Thus, the \emph{relative angles} among the perturbed points often bear little resemblance to the original class structure.

\item 
\textbf{Conclusion.}  
  Because isotropic Gaussian noise in $\mathbb{R}^d$ shifts points off the hypersphere and injects random directions at scale $\sigma$, it fails to \emph{preserve} the original spherical relationships (both inter-class angles and intra-class distributions). In fact, for large $d$ or sufficiently large $\sigma$, the new angles are effectively random, destroying the class separation that was originally encoded in angles on $\mathbb{S}^{d-1}$.
\end{enumerate}
\end{lemma}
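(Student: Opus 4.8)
The plan is to upgrade the enumerated sketch into a quantitative statement by tracking the three error terms appearing in the expansion of $\widetilde{\mathbf{x}}_i^\top\widetilde{\mathbf{x}}_j$ and of $\|\widetilde{\mathbf{x}}_i\|^2$, and then exhibiting an explicit \emph{order reversal} among pairs of points, which contradicts any reasonable notion of angular preservation. First I would fix notation: write $a_i := \mathbf{x}_i^\top\boldsymbol{\epsilon}_i$, $b_i := \|\boldsymbol{\epsilon}_i\|^2$, and $\xi_{ij} := \mathbf{x}_i^\top\boldsymbol{\epsilon}_j + \mathbf{x}_j^\top\boldsymbol{\epsilon}_i + \boldsymbol{\epsilon}_i^\top\boldsymbol{\epsilon}_j$, so that $\|\widetilde{\mathbf{x}}_i\|^2 = 1 + 2a_i + b_i$ and $\widetilde{\mathbf{x}}_i^\top\widetilde{\mathbf{x}}_j = \mathbf{x}_i^\top\mathbf{x}_j + \xi_{ij}$. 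Since $\|\mathbf{x}_i\|=1$ and the $\boldsymbol{\epsilon}_i$ are independent isotropic Gaussians, elementary moment computations give $a_i\sim\mathcal{N}(0,\sigma^2)$; $b_i/\sigma^2\sim\chi^2_d$, so $b_i$ concentrates around $\sigma^2 d$ with fluctuations of order $\sigma^2\sqrt d$ (Laurent--Massart); $\mathbf{x}_i^\top\boldsymbol{\epsilon}_j$ and $\mathbf{x}_j^\top\boldsymbol{\epsilon}_i$ are independent $\mathcal{N}(0,\sigma^2)$; and $\boldsymbol{\epsilon}_i^\top\boldsymbol{\epsilon}_j$ has mean $0$ and variance $\sigma^4 d$, uncorrelated with the other two cross-terms, so $\mathrm{Var}(\xi_{ij}) = 2\sigma^2 + \sigma^4 d$.

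Next I would settle the \emph{unnormalized} claim: by Chebyshev (or a two-sided Gaussian-chaos tail bound) $|\xi_{ij}|\ge c\,(\sigma \vee \sigma^2\sqrt d)$ with probability bounded away from $0$, whereas $|\mathbf{x}_i^\top\mathbf{x}_j|\le 1$; hence once $d$ is large or $\sigma$ is large the perturbation term dominates the signal, so with constant probability $\widetilde{\mathbf{x}}_i^\top\widetilde{\mathbf{x}}_j$ is far from $\mathbf{x}_i^\top\mathbf{x}_j$. For the \emph{angular} (normalized) claim I would treat two regimes. For fixed $\sigma$ with $d\to\infty$: conditioning on the concentration events $b_i = \sigma^2 d\,(1+o_P(1))$ and using $a_i,\mathbf{x}_i^\top\boldsymbol{\epsilon}_j = O_P(\sigma)$, Slutsky's lemma gives
\[
\cos\bigl(\widetilde{\theta}_{ij}\bigr)
= \frac{\mathbf{x}_i^\top\mathbf{x}_j + \xi_{ij}}{\sqrt{(1+2a_i+b_i)(1+2a_j+b_j)}}
\xrightarrow{\;P\;} 0,
\]
so \emph{every} pairwise angle converges to $\pi/2$ and all inter-/intra-class angular gaps vanish simultaneously. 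For fixed $d$ with $\sigma\to\infty$: factor $\sigma$ out of each $\boldsymbol{\epsilon}_i = \sigma\mathbf{g}_i$, $\mathbf{g}_i\sim\mathcal{N}(\mathbf{0},\mathbf{I})$, and observe $\cos(\widetilde{\theta}_{ij})\to \mathbf{g}_i^\top\mathbf{g}_j/(\|\mathbf{g}_i\|\|\mathbf{g}_j\|)$, whose law (density $\propto (1-t^2)^{(d-3)/2}$ on $[-1,1]$) is \emph{independent of} $\mathbf{x}_i,\mathbf{x}_j$ — the recovered angles carry no information about the original configuration.

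To package this as a clean failure of preservation I would invoke the lemma's non-degeneracy hypothesis: there is a same-class pair $(i,j)$ with $\mathbf{x}_i^\top\mathbf{x}_j\ge 1-\delta$ and a different-class pair $(k,l)$ with $\mathbf{x}_k^\top\mathbf{x}_l\le\rho<1-\delta$. Using the estimates above and the independence of the four noise vectors when the indices are distinct, I would show $P\bigl(\widetilde{\mathbf{x}}_i^\top\widetilde{\mathbf{x}}_j < \widetilde{\mathbf{x}}_k^\top\widetilde{\mathbf{x}}_l\bigr)$ is bounded away from $0$ uniformly in $d,\sigma$ (and tends to $1$ as $d\to\infty$ or $\sigma\to\infty$, where both normalized similarities wash out), i.e. with nonnegligible probability the ordering of a ``close'' pair and a ``far'' pair is reversed — impossible if angles were preserved. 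I expect the only genuine subtlety to be the rigorous handling of the \emph{ratio}: one must first condition on the high-probability event that the denominators stay within a constant factor of $\sigma^2 d$ (resp. $\sigma^2\|\mathbf{g}_i\|^2$) and only then apply Slutsky; and if one wants a ``for all $\binom{N}{2}$ pairs simultaneously'' statement rather than ``for some pair,'' a $\log\binom{N}{2}$ factor must be absorbed via a union bound over the $\chi^2$/sub-Gaussian tails. Since a single reversed pair already refutes preservation, even the crude Chebyshev bounds suffice and the remaining steps are routine.
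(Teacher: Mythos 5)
Your proposal is correct and follows essentially the same route as the paper's own argument: the same expansion $\|\widetilde{\mathbf{x}}_i\|^2 = 1 + 2\,\mathbf{x}_i^\top\boldsymbol{\epsilon}_i + \|\boldsymbol{\epsilon}_i\|^2$, the same four-term decomposition of $\widetilde{\mathbf{x}}_i^\top\widetilde{\mathbf{x}}_j$, and the same observation that $\|\boldsymbol{\epsilon}_i\|\approx\sigma\sqrt{d}$ swamps the unit-norm signal in high dimension or at large $\sigma$. What you add — the $\chi^2_d$ concentration, the variance $2\sigma^2+\sigma^4 d$ of the perturbation term, the two limiting regimes ($\cos\widetilde{\theta}_{ij}\xrightarrow{P}0$ as $d\to\infty$; a data-independent limiting law with density $\propto(1-t^2)^{(d-3)/2}$ as $\sigma\to\infty$), and the order-reversal packaging — is a sound quantitative sharpening of what the paper leaves as an informal sketch, and the subtleties you flag (conditioning on denominator concentration before taking ratios, union bounds over pairs) are exactly the right ones.
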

\par\noindent\rule{\linewidth}{0.1pt}
\section{Class Separation using vMF}
\label{b:vmf-seperate}
A fundamental challenge in generative modeling on hyperspheres is ensuring class separability while preserving the underlying geometric structure. The von Mises-Fisher (vMF) distribution provides a natural way to model directional data while maintaining angular coherence. Unlike Gaussian noise, which distorts class boundaries by introducing random perturbations in Euclidean space, vMF-based modeling enforces directional consistency by concentrating samples around a mean direction with a tunable spread controlled by the concentration parameter $\kappa$.

The following lemma establishes a probabilistic bound on class separability when data points from different classes are modeled using vMF distributions. It quantifies how the probability of misclassification depends on $\kappa$ and the angular separation $\theta$ between class centers. This result provides a theoretical foundation for setting $\kappa$ to achieve a desired classification accuracy and demonstrates that stronger concentration (higher $\kappa$) exponentially improves separation, reinforcing the effectiveness of vMF-based diffusion in preserving hyperspherical structure.

\par\noindent\rule{\linewidth}{0.1pt}
\begin{lemma}[Class Separation with von Mises-Fisher Distributions]
Consider two classes $\mathcal{C}_1$ and $\mathcal{C}_2$ on the unit hypersphere $\mathbb{S}^{d-1}$, with mean directions $\boldsymbol{\mu}_1, \boldsymbol{\mu}_2 \in \mathbb{S}^{d-1}$ separated by angle $\theta = \arccos(\boldsymbol{\mu}_1^\top \boldsymbol{\mu}_2)$. Suppose data points in each class follow von Mises-Fisher distributions with the same concentration parameter $\kappa > 0$:

\begin{equation}
    p(\mathbf{x}|\boldsymbol{\mu}_i, \kappa) = C_d(\kappa)\exp(\kappa\boldsymbol{\mu}_i^\top\mathbf{x}), \quad i = 1,2
\end{equation}

where, $C_d(\kappa)$ is the normalizing constant. Then:

(a) The probability of misclassification $P_e$ (classifying a point from class 1 as belonging to class 2 or vice versa) is bounded above by:
\[
P_e \leq \exp(-\kappa(1-\cos\theta))
\]

(b) For any desired error rate $\epsilon > 0$, setting the concentration parameter as:
\[
\kappa \geq \frac{1}{1-\cos\theta}\log\left(\frac{1}{\epsilon}\right)
\]
guarantees that $P_e \leq \epsilon$.

\begin{proof}
\begin{enumerate} 
\item For a point $\mathbf{x}$ drawn from class 1, i.e., $\mathbf{x} \sim \text{vMF}(\boldsymbol{\mu}_1, \kappa)$, the probability density at angle $\phi$ from $\boldsymbol{\mu}_1$ is:
\[
p(\phi) = C_d(\kappa)\exp(\kappa\cos\phi)
\]

\item Misclassification occurs when $\mathbf{x}$ is closer to $\boldsymbol{\mu}_2$ than to $\boldsymbol{\mu}_1$. Byhyperspherical geometry, this happens when the angle $\phi$ between $\mathbf{x}$ and $\boldsymbol{\mu}_1$ exceeds $\theta/2$.

\item Therefore, the misclassification probability is:
\[
P_e = \int_{\theta/2}^\pi p(\phi)\sin^{d-2}(\phi)d\phi
\]
where, $\sin^{d-2}(\phi)d\phi$ is the surface element on $\mathbb{S}^{d-1}$.

\item For $\phi \geq \theta/2$, we have $\cos\phi \leq \cos(\theta/2)$, thus:
\[
\begin{aligned}
P_e &= \int_{\theta/2}^\pi C_d(\kappa)\exp(\kappa\cos\phi)\sin^{d-2}(\phi)d\phi \\
&\leq C_d(\kappa)\exp(\kappa\cos(\theta/2))\int_{\theta/2}^\pi \sin^{d-2}(\phi)d\phi \\
&\leq \exp(-\kappa(1-\cos\theta))
\end{aligned}
\]

\item For part (b), solving $\exp(-\kappa(1-\cos\theta)) \leq \epsilon$ yields the required bound on $\kappa$.
\end{enumerate}
\end{proof}

\textbf{Implications:}
(1) The bound tightens exponentially with $\kappa$.
(2) Larger angular separation $\theta$ allows for smaller $\kappa$.
(3) For fixed $\epsilon$, required $\kappa$ scales inversely with class separation.
\end{lemma}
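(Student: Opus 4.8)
The plan is to treat part (b) as an immediate consequence of part (a): once $P_e \le \exp(-\kappa(1-\cos\theta))$ is established, asking for $P_e \le \epsilon$ reduces to $\exp(-\kappa(1-\cos\theta)) \le \epsilon$, i.e. $\kappa(1-\cos\theta) \ge \log(1/\epsilon)$; since $\theta \in (0,\pi)$ forces $1-\cos\theta > 0$, dividing through yields exactly the stated lower bound on $\kappa$. So essentially all the work sits in part (a).

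For part (a) I would fix a sample $\mathbf{x} \sim \mathrm{vMF}(\boldsymbol{\mu}_1,\kappa)$ drawn from class $1$ and use the maximum-likelihood (equivalently, nearest-mean-direction) rule, which declares misclassification precisely when $\boldsymbol{\mu}_2^\top\mathbf{x} > \boldsymbol{\mu}_1^\top\mathbf{x}$; since $\kappa_1=\kappa_2=\kappa$, bounding this one-sided error suffices by symmetry. The first step is geometric: the decision boundary is the hyperplane $(\boldsymbol{\mu}_2-\boldsymbol{\mu}_1)^\top\mathbf{x}=0$, and a short computation inside the plane $\mathrm{span}(\boldsymbol{\mu}_1,\boldsymbol{\mu}_2)$ shows that its nearest point to $\boldsymbol{\mu}_1$ on $\mathbb{S}^{d-1}$ lies at angle exactly $\theta/2$; hence the misclassification event is contained in $\{\mathbf{x}: \angle(\mathbf{x},\boldsymbol{\mu}_1)\ge\theta/2\}$, so $P_e \le P(\angle(\mathbf{x},\boldsymbol{\mu}_1)\ge\theta/2)$. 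Next I would pass to hyperspherical coordinates about $\boldsymbol{\mu}_1$: writing $\phi=\angle(\mathbf{x},\boldsymbol{\mu}_1)$, the law of $\phi$ on $[0,\pi]$ has density proportional to $\sin^{d-2}\phi\,\exp(\kappa\cos\phi)$, so $P_e \le \big(\int_{\theta/2}^{\pi}\sin^{d-2}\phi\,e^{\kappa\cos\phi}\,d\phi\big)\big/\big(\int_0^{\pi}\sin^{d-2}\phi\,e^{\kappa\cos\phi}\,d\phi\big)$. In the numerator I would use $\cos\phi\le\cos(\theta/2)$ over the whole range to factor out $e^{\kappa\cos(\theta/2)}$, while in the denominator I would keep the mass concentrated near $\phi=0$, where $\cos\phi$ is close to $1$; rewriting the resulting exponent with half-angle identities such as $1-\cos\theta = 2\sin^2(\theta/2)$ is meant to collapse the $\kappa(\cos(\theta/2)-1)$-type terms into the clean bound $\exp(-\kappa(1-\cos\theta))$.

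The hard part is that last step — controlling the ratio of the two normalising integrals tightly enough to land on exactly $\exp(-\kappa(1-\cos\theta))$ with no leftover dimension-dependent prefactor. Naively pulling $e^{\kappa\cos(\theta/2)}$ out of both integrals cancels it and gives a useless bound larger than $1$, and a crude lower bound on the denominator leaves a constant depending on $d$. I would instead route through exponential tilting / a Chernoff argument: for $\lambda\ge 0$, $P_e \le \mathbb{E}_{\mathrm{vMF}(\boldsymbol{\mu}_1,\kappa)}\big[e^{\lambda(\boldsymbol{\mu}_2-\boldsymbol{\mu}_1)^\top\mathbf{x}}\big] = C_d(\kappa)/C_d\big(\|\kappa\boldsymbol{\mu}_1+\lambda(\boldsymbol{\mu}_2-\boldsymbol{\mu}_1)\|\big)$, where $C_d(\kappa)^{-1}\propto \kappa^{1-d/2}I_{d/2-1}(\kappa)$; the symmetric choice of $\lambda$ makes the tilted norm equal $\kappa\cos(\theta/2)$, after which bounding the modified-Bessel ratio $I_{d/2-1}(\kappa\cos(\theta/2))/I_{d/2-1}(\kappa)$ via its log-convexity/monotonicity properties is what produces the exponential decay and lets the prefactors be absorbed. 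The delicate accounting is precisely this Bessel-ratio estimate (and checking it yields the advertised rate $1-\cos\theta$ rather than a weaker half-angle rate); the geometric reduction to $\phi\ge\theta/2$ and the change to hyperspherical coordinates are routine.
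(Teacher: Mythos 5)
Your reduction of (b) to (a), the geometric reduction of the error event to $\{\mathbf{x}:\angle(\mathbf{x},\boldsymbol{\mu}_1)\ge\theta/2\}$, and the passage to hyperspherical coordinates all coincide with the paper's steps 1--3 (you are in fact more careful: the paper writes the angular density without normalising against $\int_0^\pi\sin^{d-2}\phi\,e^{\kappa\cos\phi}\,d\phi$ and then simply asserts the final inequality $C_d(\kappa)e^{\kappa\cos(\theta/2)}\int_{\theta/2}^{\pi}\sin^{d-2}\phi\,d\phi\le e^{-\kappa(1-\cos\theta)}$ with no justification). So you have correctly located the crux. The problem is that your proposed repair does not close it. The tilting computation you outline gives $P_e\le C_d(\kappa)/C_d(\kappa\cos(\theta/2))$, and since $C_d(\kappa)=\kappa^{d/2-1}/\big((2\pi)^{d/2}I_{d/2-1}(\kappa)\big)$ with $I_\nu(z)\sim e^z/\sqrt{2\pi z}$, this ratio behaves like $e^{-\kappa(1-\cos(\theta/2))}$ up to polynomial factors in $\kappa$. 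Because $\cos(\theta/2)>\cos\theta$ on $(0,\pi)$, the exponent $1-\cos(\theta/2)$ is strictly smaller than $1-\cos\theta$: the "weaker half-angle rate" you flagged as the thing to check is exactly what comes out, and no choice of tilting parameter or Bessel-ratio inequality will upgrade it to the advertised rate.

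Nor is this a fixable deficiency of the Chernoff route: a Laplace-method evaluation of the exact quantity $P_e=\int_{\theta/2}^{\pi}\sin^{d-2}\phi\,e^{\kappa\cos\phi}\,d\phi\,\big/\int_{0}^{\pi}\sin^{d-2}\phi\,e^{\kappa\cos\phi}\,d\phi$ gives $P_e\asymp \kappa^{(d-3)/2}\,e^{-\kappa(1-\cos(\theta/2))}$ as $\kappa\to\infty$, which eventually exceeds $e^{-\kappa(1-\cos\theta)}$. So the true exponential rate is $1-\cos(\theta/2)=2\sin^2(\theta/4)$, not $1-\cos\theta=2\sin^2(\theta/2)$; the stated bound in part (a) is too strong as written, and the paper's own step 4 (and hence the constant in part (b)) inherits the same defect. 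The honest conclusion of your argument is the lemma with $1-\cos\theta$ replaced by $1-\cos(\theta/2)$ throughout, which preserves all three qualitative implications (exponential tightening in $\kappa$, smaller $\kappa$ for larger $\theta$, inverse scaling of $\kappa$ with separation) but changes the quantitative threshold. In short: your diagnosis of where the difficulty lies is exactly right, but the deferred Bessel-ratio check fails, so the proposal as written does not prove the statement.
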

\par\noindent\rule{\linewidth}{0.1pt}

\section{Variational Bound}
\label{c:variation}
In typical hyperspherical diffusion, the reverse process attempts to invert the forward noising so that data points return precisely to their original directions. 
By contrast, certain tasks (e.g., class-conditional generation) may demand a less rigid constraint: as long as the final sample lies within a small \emph{hypercone} around a class-specific prototype, the objective is satisfied. We formalize this idea by modifying the standard diffusion \emph{variational bound} to allow for \emph{hypercone-constrained} convergence in the reverse process.

\subsection{Forward Process on the Hypersphere}
\label{sec:forward}

We assume data $\mathbf{z}_{0} \in \mathbb{S}^{d-1}$ are sampled from some distribution $q(\mathbf{z}_{0})$. The \emph{forward noising} process gradually transforms $\mathbf{z}_{0}$ into an approximately uniform distribution on the hypersphere by injecting \emph{von~Mises--Fisher} noise:
\begin{equation}
    q(\mathbf{z}_{t} \mid \mathbf{z}_{t-1})
    \;=\;
    \mathrm{vMF}\bigl(\mathbf{z}_{t-1}, \,\kappa_{t}\bigr),
    \quad t=1,\ldots,T
\end{equation}
where, $\mathrm{vMF}(\mu, \kappa)$ denotes a vMF distribution on $\mathbb{S}^{d-1}$ 
with mean direction $\mu$ and concentration parameter $\kappa\ge 0$. For large $t$, $\kappa_{t}\to 0$, causing the distribution $q(\mathbf{z}_{T}\mid \mathbf{z}_{0})$ to approach \emph{uniform} on $\mathbb{S}^{d-1}$.

Hence, the complete forward chain for $(\mathbf{z}_{0:T})$ is:
\begin{equation}
    q(\mathbf{z}_{0:T})
    \;=\;
    q(\mathbf{z}_{0}) \;\prod_{t=1}^{T} q(\mathbf{z}_{t}\mid \mathbf{z}_{t-1})
\end{equation}
Our goal is then to \emph{reverse} this process, recovering $\mathbf{z}_{0}$ from a noisy $\mathbf{z}_{T}$.

\subsection{Class-Specific Hypercones and the Reverse Process}
\label{sec:reverse}

\paragraph{Hypercone Definition.}
We consider $C$ distinct classes, each identified by a direction $\mu_{c} \in \mathbb{S}^{d-1}$ and an angular radius $\theta_{c} \ge 0$.  
The class \emph{hypercone} $\mathcal{C}_{c}(\theta_{c})$ is then defined as:
\begin{equation}
   \mathcal{C}_{c}(\theta_{c})
   \;=\;
   \bigl\{\,
       \mathbf{z} \in \mathbb{S}^{d-1}
       \;:\;
       \angle(\mathbf{z}, \mu_{c}) \,\le\, \theta_{c}
   \bigr\}
\end{equation}
Thus, each class $c$ corresponds to \emph{all} directions on a hypersphere within angular distance $\theta_{c}$ of $\mu_{c}$. 

We introduce a \emph{class-conditional} reverse process that moves from $\mathbf{z}_{T}$ to $\mathbf{z}_{0}$ in $T$ steps:
\begin{equation}
  p_{\theta}(\mathbf{z}_{0:T}\mid y)
  \;=\;
  p_{\theta}(\mathbf{z}_{0}\mid \mathbf{z}_{1},\,y)
  \;\prod_{t=1}^{T} p_{\theta}(\mathbf{z}_{t}\mid \mathbf{z}_{t+1},\,y),
\end{equation}
where, $y \in \{1,\ldots,C\}$ is the class label. 
For each intermediate $t\ge 1$, we let
\begin{equation}
  p_{\theta}(\mathbf{z}_{t-1}\mid \mathbf{z}_{t},\,y)
  \;=\;
  \mathrm{vMF}\!\bigl(\mathbf{m}_{\theta}(\mathbf{z}_{t}, y),\, \tilde{\kappa}_{t}\bigr),
\end{equation}
where, $\mathbf{m}_{\theta}(\mathbf{z}_{t}, y)$ is a (learned) unit vector on the hypersphere and $\tilde{\kappa}_{t}$ is a (possibly deterministic or learned) concentration. 

\paragraph{Class Hypercone at $t=0$.}
Instead of insisting that $\mathbf{z}_{0}$ \emph{exactly} match the original data direction, we only require that $\mathbf{z}_{0}$ lie in the appropriate class hypercone $\mathcal{C}_{y}(\theta_{y})$. 
Hence, we can define:
\begin{equation}
  p_{\theta}(\mathbf{z}_{0}\mid \mathbf{z}_{1},\,y)
  \;=\;
  \text{(truncated vMF with support in } \mathcal{C}_{y}(\theta_{y})\bigr),
\end{equation}
so that $\mathbf{z}_{0}\notin \mathcal{C}_{y}(\theta_{y})$ has zero probability.  
Equivalently, one may define a suitable parametric distribution that is sharply peaked around $\mu_y$ but has finite support within angle $\theta_y$.

\subsection{Variational Bound with Hypercone Constraint}
\label{sec:var-bound}

Let $\mathbf{z}_{0}$ belong to class $y$. We seek to maximize $p_{\theta}(\mathbf{z}_{0}\mid y)$ (the likelihood of reconstructing $\mathbf{z}_{0}$ within the correct hypercone). A standard approach introduces the forward chain as a variational distribution and applies Jensen's inequality. 

\subsection{Evidence Lower Bound (ELBO) Derivation}
We start from:
\begin{equation}
  \log p_{\theta}(\mathbf{z}_{0} \mid y)
  \;=\;
  \log \int p_{\theta}(\mathbf{z}_{0:T}\mid y)\,d\mathbf{z}_{1:T}
  \;=\;
  \log \int 
       \frac{\,q(\mathbf{z}_{1:T}\mid \mathbf{z}_{0}, y)}%
            {\,q(\mathbf{z}_{1:T}\mid \mathbf{z}_{0}, y)}
       \,
       p_{\theta}(\mathbf{z}_{0:T}\mid y)\,d\mathbf{z}_{1:T},
\end{equation}
where, $q(\mathbf{z}_{1:T}\mid \mathbf{z}_{0},y) = \prod_{t=1}^{T}q(\mathbf{z}_{t}\mid \mathbf{z}_{t-1}, y)$, 
but note that in practice $q(\mathbf{z}_{t}\mid \mathbf{z}_{t-1},y)$ usually coincides with $q(\mathbf{z}_{t}\mid \mathbf{z}_{t-1})$ if the forward noising is \emph{class-agnostic}. 
Applying Jensen’s inequality to the expression inside :
\begin{align}
  \log p_{\theta}(\mathbf{z}_{0} \mid y)
  &= 
  \log 
  \mathbb{E}_{q(\mathbf{z}_{1:T}\mid \mathbf{z}_{0},y)} 
  \biggl[
     \frac{\,p_{\theta}(\mathbf{z}_{0:T}\mid y)}%
          {\,q(\mathbf{z}_{1:T}\mid \mathbf{z}_{0},y)}
  \biggr]
  \;\;\ge\;\;
  \mathbb{E}_{q(\mathbf{z}_{1:T}\mid \mathbf{z}_{0},y)} 
  \biggl[
     \log 
     \frac{\,p_{\theta}(\mathbf{z}_{0:T}\mid y)}%
          {\,q(\mathbf{z}_{1:T}\mid \mathbf{z}_{0},y)}
  \biggr]
\end{align}
Hence we define the negative ELBO:
\begin{equation}
  \mathcal{L}(\theta)
  \;=\;
  \mathbb{E}_{q(\mathbf{z}_{0:T}\mid \mathbf{z}_{0},y)}
  \Bigl[
    \log 
      \frac{\,q(\mathbf{z}_{1:T}\mid \mathbf{z}_{0},y)}%
           {\,p_{\theta}(\mathbf{z}_{0:T}\mid y)}
  \Bigr],
  \quad
  \text{so that}
  \quad
  \log p_{\theta}(\mathbf{z}_{0}\mid y)
  \;\ge\;
  -\,\mathcal{L}(\theta)
\end{equation}

\subsection{Decomposition}

By writing out $q(\mathbf{z}_{1:T}\mid \mathbf{z}_{0},y)$ and $p_{\theta}(\mathbf{z}_{0:T}\mid y)$ explicitly, we get:
\begin{align*}
  q(\mathbf{z}_{1:T}\mid \mathbf{z}_{0},y) 
  &= 
  \prod_{t=1}^{T} q(\mathbf{z}_{t}\mid \mathbf{z}_{t-1}, y),
  \\
  p_{\theta}(\mathbf{z}_{0:T}\mid y)
  &=
  p_{\theta}(\mathbf{z}_{0}\mid \mathbf{z}_{1},y)
  \;\prod_{t=1}^{T} p_{\theta}(\mathbf{z}_{t}\mid \mathbf{z}_{t+1},\,y)
\end{align*}
Therefore,
\begin{equation}
  \log
  \frac{\,q(\mathbf{z}_{1:T}\mid \mathbf{z}_{0},y)}%
       {\,p_{\theta}(\mathbf{z}_{0:T}\mid y)}
  \;=\;
  -\,\log p_{\theta}(\mathbf{z}_{0}\mid \mathbf{z}_{1},y)
  \;+\;
  \sum_{t=1}^{T}
     \Bigl[
       \log q(\mathbf{z}_{t}\mid \mathbf{z}_{t-1},y)
       \;-\;
       \log p_{\theta}(\mathbf{z}_{t}\mid \mathbf{z}_{t+1},y)
     \Bigr]
\end{equation}
Taking expectation under $q(\mathbf{z}_{0:T}\mid \mathbf{z}_{0},y)$ yields:
\begin{equation}
  \mathcal{L}(\theta)
  \;=\;
  \underbrace{
    \mathbb{E}_{q(\mathbf{z}_{0:1}\mid \mathbf{z}_{0},y)}
    \bigl[
      -\,\log p_{\theta}(\mathbf{z}_{0}\mid \mathbf{z}_{1},y)
    \bigr]
  }_{\text{\scriptsize (reconstruction into hypercone)}}
  \;+\;
  \sum_{t=1}^{T}
  \underbrace{
    \mathbb{E}_{q(\mathbf{z}_{0:T}\mid \mathbf{z}_{0},y)}
    \Bigl[
      \log q(\mathbf{z}_{t}\mid \mathbf{z}_{t-1},y)
      \;-\;
      \log p_{\theta}(\mathbf{z}_{t}\mid \mathbf{z}_{t+1},y)
    \Bigr]
  }_{\text{\scriptsize (KL terms between forward vMF and reverse vMF)}}
\end{equation}

\paragraph{Hypercone Constraint.}
The key difference from standard hyperspherical diffusion is that 
\[
  p_{\theta}(\mathbf{z}_{0}\mid \mathbf{z}_{1},\,y)
  \;\text{is constrained to}\;
  \mathcal{C}_{y}(\theta_{y}),
\]
i.e.\ we ensure that $\mathbf{z}_{0}$ stays within angular distance $\theta_{y}$ of the class mean $\mu_{y}$. 
Hence the ``reconstruction term" $-\log p_{\theta}(\mathbf{z}_{0}\mid \mathbf{z}_{1},\,y)$ does not drive the model to a \emph{single point} $\mu_{y}$, 
but rather to the full hypercone $\mathcal{C}_{y}(\theta_{y})$. 
Mathematically, this can be implemented by a truncated vMF distribution or any parametric distribution 
that has \emph{zero probability} outside $\angle(\mathbf{z},\,\mu_{y}) > \theta_{y}$. 

\subsection{Resulting Objective}

Combining the above, we arrive at:
\begin{equation}
  \log p_{\theta}(\mathbf{z}_{0}\mid y)
  \;\ge\;
  -\,\mathcal{L}(\theta)
  \;=\;
  -\,
  \mathbb{E}_{q(\mathbf{z}_{1:T}\mid \mathbf{z}_{0},y)}
  \Bigl[
    \log p_{\theta}(\mathbf{z}_{0}\mid \mathbf{z}_{1},\,y)
    \;-\;
    \sum_{t=1}^{T}
      \Bigl(
        \log p_{\theta}(\mathbf{z}_{t}\mid \mathbf{z}_{t+1},\,y)
        \;-\;
        \log q(\mathbf{z}_{t}\mid \mathbf{z}_{t-1},\,y)
      \Bigr)
  \Bigr]
\end{equation}
If each $p_{\theta}(\mathbf{z}_{t}\mid \mathbf{z}_{t+1},\,y)$ is a vMF$(\mathbf{m}_{\theta},\tilde{\kappa}_{t})$ 
and each $q(\mathbf{z}_{t}\mid \mathbf{z}_{t-1},\,y)$ is vMF$(\mathbf{z}_{t-1},\kappa_{t})$, then each KL term has a known closed form. 
The final $t=0$ term effectively enforces that $\mathbf{z}_{0}$ remains in the class hypercone. 
Thus, the chain \emph{converges to a distribution} localized around $\mu_y$, with an angular radius $\theta_{y}$, rather than collapsing to a single direction.

\section{Uncertainty Modelling}

\label{algo}

\begin{lemma}[Equivalence of Angular Interpolation and vMF Diffusion]
Let $\mathbf{z}_t \in \mathbb{S}^{d-1}$ be generated by either:
\begin{enumerate}
  \item Angular interpolation: $\mathbf{z}_t = \cos(\theta_t)\mathbf{z}_{t-1} + \sin(\theta_t)\mathbf{v}$, 
  where $\mathbf{v} \sim \text{Uniform}(\mathbb{S}^{d-1})$
  \item vMF sampling: $\mathbf{z}_t \sim \text{vMF}(\mathbf{z}_{t-1}, \kappa_t)$
\end{enumerate}
For $\kappa_t = \cot(\theta_t)$, these processes generate equivalent distributions over the hypersphere.

\begin{proof}
Under angular interpolation, $\theta_t = 0$ gives $\mathbf{z}_t = \mathbf{z}_{t-1}$ (perfect preservation), while $\theta_t = \pi/2$ gives $\mathbf{z}_t = \mathbf{v}$ (uniform noise). For vMF with $\kappa_t = \cot(\theta_t)$, these correspond to $\kappa_t \to \infty$ (perfect concentration) and $\kappa_t \to 0$ (uniform distribution) respectively. The equivalence follows from the conditional density:
\[
p(\mathbf{z}_t|\mathbf{z}_{t-1}) \propto \exp(\cot(\theta_t)\mathbf{z}_{t-1}^\top\mathbf{z}_t)
\]
which matches the vMF density $f(\mathbf{z}_t; \mathbf{z}_{t-1}, \kappa_t) \propto \exp(\kappa_t\mathbf{z}_{t-1}^\top\mathbf{z}_t)$ when $\kappa_t = \cot(\theta_t)$.
\end{proof}

\textbf{Implications:}
This equivalence offers geometric (angular interpolation) and probabilistic (vMF) views of the forward process, with $\kappa_t = \cot(\theta_t)$ ensuring compatibility with Smooth progression $\theta_t: 0 \to \pi/2$ matches $\kappa_t: \infty \to 0$
\end{lemma}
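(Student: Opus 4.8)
The plan is to establish the claimed equivalence by directly comparing the conditional densities of the two processes on $\mathbb{S}^{d-1}$ and checking the two extreme limits as sanity checks. First I would make precise what ``angular interpolation'' yields as a distribution: the update $\mathbf{z}_t = \cos(\theta_t)\mathbf{z}_{t-1} + \sin(\theta_t)\mathbf{v}$ with $\mathbf{v} \sim \text{Uniform}(\mathbb{S}^{d-1})$ does not itself keep $\mathbf{z}_t$ on the sphere unless $\mathbf{v}$ is taken in the tangent space at $\mathbf{z}_{t-1}$; so I would first clarify that the intended construction is the geodesic/tangent version, where $\mathbf{v}$ is drawn uniformly from the unit sphere of the tangent space $T_{\mathbf{z}_{t-1}}\mathbb{S}^{d-1}$, so that $\|\mathbf{z}_t\| = 1$ automatically. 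With that convention, the angle $\angle(\mathbf{z}_t,\mathbf{z}_{t-1})$ equals $\theta_t$ deterministically, and $\mathbf{z}_t$ is uniform on the ``latitude circle'' at angular distance $\theta_t$ from $\mathbf{z}_{t-1}$.

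Next I would write the vMF conditional density $f(\mathbf{z}_t;\mathbf{z}_{t-1},\kappa_t) \propto \exp(\kappa_t \mathbf{z}_{t-1}^\top \mathbf{z}_t) = \exp(\kappa_t \cos\phi)$, where $\phi = \angle(\mathbf{z}_t,\mathbf{z}_{t-1})$, and decompose it in polar coordinates about $\mathbf{z}_{t-1}$: the density of $\phi$ is proportional to $\exp(\kappa_t\cos\phi)\sin^{d-2}\phi$, and conditionally on $\phi$ the direction within the latitude sphere is uniform. The key observation is that both processes share the ``uniform on the latitude'' component, so the equivalence reduces to matching the radial (i.e., angular) marginal. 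Here the two are genuinely different as full distributions: angular interpolation puts a Dirac mass at $\phi = \theta_t$, whereas vMF spreads $\phi$ continuously. So the honest statement to prove is the \emph{mode/concentration correspondence}: the mode of the vMF angular marginal, obtained by setting $\frac{d}{d\phi}\big[\kappa_t\cos\phi + (d-2)\log\sin\phi\big] = 0$, i.e. $\kappa_t \sin\phi = (d-2)\cot\phi \cdot \sin\phi / \ldots$ — more cleanly, $\tan\phi^\star \approx (d-2)/\kappa_t$ for the peak, or in the fixed-dimension heuristic used in the paper, $\kappa_t = \cot(\theta_t)$ aligns the effective spread parameter so that $\theta_t \to 0 \Leftrightarrow \kappa_t \to \infty$ and $\theta_t \to \pi/2 \Leftrightarrow \kappa_t \to 0$. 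I would present the argument at the level of: (i) verifying the two boundary cases exactly ($\theta_t = 0$ gives identity map / $\kappa_t = \infty$ gives point mass at $\mathbf{z}_{t-1}$; $\theta_t = \pi/2$ gives uniform / $\kappa_t = 0$ gives uniform), and (ii) showing the conditional density $p(\mathbf{z}_t \mid \mathbf{z}_{t-1}) \propto \exp(\cot(\theta_t)\,\mathbf{z}_{t-1}^\top \mathbf{z}_t)$ that arises from the (smoothed) angular scheme coincides functionally with the vMF kernel under $\kappa_t = \cot\theta_t$.

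The main obstacle is exactly this tension: a deterministic angular interpolation is \emph{not} distributionally equal to a vMF sample, so a fully rigorous ``equivalence of distributions'' claim is false as literally stated and must be softened to an equivalence of the induced conditional kernels when the interpolation angle is itself treated as vMF-distributed (or to asymptotic equivalence of modes/concentrations). I would therefore frame the lemma's proof as establishing that $\kappa_t = \cot(\theta_t)$ is the \emph{unique} reparametrization making the endpoints agree and making the vMF exponent match the natural angular-interpolation exponent $\mathbf{z}_{t-1}^\top\mathbf{z}_t = \cos(\theta_t)$ when $\mathbf{z}_t$ lies on the $\theta_t$-latitude — since then $\kappa_t \cos\phi$ evaluated consistently forces $\kappa_t = \cot\theta_t$ via $\cos\theta_t / \sin\theta_t$ appearing as the ratio of ``signal'' to ``noise'' weights. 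The remaining steps (computing the polar decomposition of vMF, checking monotonicity of $\cot$ on $(0,\pi/2)$ to get the stated limits) are routine and I would state them without grinding through the Bessel-function normalizers, which cancel in the proportionality.
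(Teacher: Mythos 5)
Your proposal takes a genuinely different, and considerably more careful, route than the paper. The paper's own proof consists of two observations: it checks the endpoint limits ($\theta_t=0$ versus $\kappa_t\to\infty$, $\theta_t=\pi/2$ versus $\kappa_t\to 0$) and then simply \emph{asserts} that the angular scheme induces the conditional density $p(\mathbf{z}_t\mid\mathbf{z}_{t-1})\propto\exp(\cot(\theta_t)\,\mathbf{z}_{t-1}^\top\mathbf{z}_t)$, which is never derived from the interpolation rule. You instead decompose the vMF kernel in polar coordinates about $\mathbf{z}_{t-1}$, observe that both processes share the uniform-on-the-latitude component, and reduce the question to matching the angular marginals --- at which point you correctly identify that the two marginals are \emph{not} equal: the (tangent-space) interpolation puts a Dirac mass at $\phi=\theta_t$ while the vMF spreads $\phi$ continuously with density $\propto e^{\kappa_t\cos\phi}\sin^{d-2}\phi$. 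You also correctly flag that the linear combination $\cos(\theta_t)\mathbf{z}_{t-1}+\sin(\theta_t)\mathbf{v}$ with $\mathbf{v}$ uniform on all of $\mathbb{S}^{d-1}$ does not stay on the sphere (the paper's Algorithm 1 quietly inserts a projection $\Pi$ that the lemma statement omits). In short, you have identified a genuine gap in the lemma as stated: exact distributional equivalence is false, and what survives is an endpoint-matching / concentration-correspondence heuristic. Your reformulation --- prove the boundary cases exactly and establish $\kappa_t=\cot\theta_t$ as the reparametrization aligning the ``signal-to-noise'' ratio $\cos\theta_t/\sin\theta_t$ with the vMF exponent --- is the honest version of what the paper is actually using, and it buys a correct statement at the cost of weakening the advertised ``equivalence.''

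One quantitative slip in your side remark: the mode of the vMF angular marginal solves $\kappa_t\sin^2\phi^\star=(d-2)\cos\phi^\star$, so for large $\kappa_t$ one gets $\phi^\star\approx\sqrt{(d-2)/\kappa_t}$ rather than $\tan\phi^\star\approx(d-2)/\kappa_t$. This actually strengthens your point: with $\kappa_t=\cot\theta_t$ the vMF mode does not sit at $\phi=\theta_t$ except in degenerate cases, so even the mode-matching version of the correspondence is only an endpoint/limit statement, not a pointwise one. Also, ``unique reparametrization'' is too strong --- any monotone bijection $(0,\pi/2)\to(\infty,0)$ matches the endpoints --- so the case for $\cot$ specifically must rest on the exponent-matching argument you sketch, which is a heuristic rather than a derivation.
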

\par\noindent\rule{\linewidth}{0.1pt}

\begin{lemma}[\textbf{Concentration of vMF Reverse Process into a Class Hypercone}]
\label{lem:vmf_cone_convergence}
Suppose we have a discrete-time reverse Markov chain 
$\{\mathbf{z}_t\}_{t=T}^0 \subset \mathbb{S}^{d-1}$ defined by
\begin{equation}
\label{eq:vmf-reverse}
    \mathbf{z}_{t-1} \;\sim\;
    \mathrm{vMF}\!\Bigl(\,
      \Pi\bigl(
        \mathbf{z}_t \;+\;
        \eta_t \,\nabla_{\mathbf{z}_t}\!\log f(\mathbf{z}_t;\,\boldsymbol{\mu}_c)
      \bigr),
      \;\kappa_t
    \Bigr)
\end{equation}
where:
\begin{enumerate}
    \item $\Pi(\mathbf{x}) := \mathbf{x}/\|\mathbf{x}\|$ is the projection onto the unit hypersphere $\mathbb{S}^{d-1}$. 
    \item $\nabla_{\mathbf{z}_t}\!\log f(\mathbf{z}_t;\,\boldsymbol{\mu}_c)$ is the gradient (score function) of a density 
          $f(\mathbf{z}; \boldsymbol{\mu}_c)$ that is \emph{sharply peaked} around the class mean 
          $\boldsymbol{\mu}_c \in \mathbb{S}^{d-1}$. 
          In particular, this gradient points largely in the direction of $\boldsymbol{\mu}_c - \mathbf{z}_t$ (where $\mathbf{u}$ = $\boldsymbol{\mu}_c - \mathbf{z}_t$ )
          whenever $\mathbf{z}_t$ is not too close to $\boldsymbol{\mu}_c$. 
    \item $\kappa_t$ (the vMF concentration) increases over time, and $\eta_t \to 0$ at a suitable rate 
          (e.g.\ $\eta_t \kappa_t \to \infty$ but $\eta_t \to 0$ as $t \to 0$).
\end{enumerate}
Define the \emph{class hypercone} of half-angle $\alpha > 0$ around $\boldsymbol{\mu}_c$ by:
\[
  \mathcal{C}_\alpha(\boldsymbol{\mu}_c)
  \;:=\;
  \Bigl\{
    \mathbf{u} \in \mathbb{S}^{d-1}
    \;\Bigm|\;
    \mathbf{u}^\top \boldsymbol{\mu}_c \;\ge\; \cos(\alpha)
  \Bigr\}
\]
Then under mild smoothness assumptions on $f$ and the above monotonicity/decay rates for $\kappa_t$ and $\eta_t$,
the chain converges (in distribution) into the hypercone $\mathcal{C}_\alpha(\boldsymbol{\mu}_c)$ 
as $t \to 0$. Specifically, for any $\alpha > 0$,
\[
  \lim_{t \to 0}
  \mathbb{P}\bigl[\;\mathbf{z}_t \in \mathcal{C}_\alpha(\boldsymbol{\mu}_c)\bigr]
  \;=\; 1
\]
Equivalently, the angle between $\mathbf{z}_t$ and $\boldsymbol{\mu}_c$ 
goes to zero with high probability.
\end{lemma}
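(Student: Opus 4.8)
The plan is to reduce the statement to a one-dimensional tracking problem and then run a stochastic-approximation / Lyapunov argument. Set $c_t := \mathbf{z}_t^\top\boldsymbol{\mu}_c \in [-1,1]$, so that $\{\mathbf{z}_t \in \mathcal{C}_\alpha(\boldsymbol{\mu}_c)\} = \{c_t \ge \cos\alpha\}$ and it suffices to prove $c_t \xrightarrow{P} 1$ (equivalently $u_t := 1-c_t \xrightarrow{P} 0$). I will show that one reverse step increases $c_t$ in conditional expectation by a strictly positive amount whenever $\mathbf{z}_t$ is outside a cone, that the vMF resampling costs only a vanishing multiplicative factor and a vanishing fluctuation, and that under the stated schedule for $(\eta_t,\kappa_t)$ these facts combine into an almost-supermartingale inequality for $u_t$ forcing $u_t \to 0$.

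\emph{Step 1 (deterministic drift from the score step).} Write $\mathbf{m}_t = \Pi(\mathbf{z}_t + \eta_t s_t)$ with $s_t = \nabla_{\mathbf{z}_t}\log f(\mathbf{z}_t;\boldsymbol{\mu}_c)$, and split $s_t$ into the radial part (annihilated by $\Pi$ to first order) and the tangential part $s_t^\perp$. The hypothesis that $s_t$ points "largely toward $\boldsymbol{\mu}_c - \mathbf{z}_t$" means $s_t^\perp$ has a nonnegative — and, away from $\boldsymbol{\mu}_c$, uniformly positive — inner product with the tangential projection $\boldsymbol{\mu}_c - c_t\mathbf{z}_t$ of $\boldsymbol{\mu}_c$, whose norm is $\sqrt{1-c_t^2}$. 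Expanding the projection $\Pi$ gives
\[
  \mathbf{m}_t^\top\boldsymbol{\mu}_c
  \;=\;
  \frac{c_t + \eta_t\, s_t^\top\boldsymbol{\mu}_c}{\sqrt{1 + 2\eta_t\, s_t^\top\mathbf{z}_t + \eta_t^2\|s_t\|^2}}
  \;\ge\;
  c_t + c_0\,\eta_t\sqrt{1-c_t^2} \;-\; C\,\eta_t^2 ,
\]
where $c_0>0$ comes from a lower bound on $\|s_t^\perp\|$ on the complement of the target cone and $C$ absorbs the curvature of $\Pi$ together with an upper bound on $\|s_t\|$; the $O(\eta_t^2)$ remainder is exactly where $\eta_t\to 0$ is used.

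\emph{Step 2 (vMF resampling and supermartingale conclusion).} For $\mathbf{z}_{t-1}\sim\mathrm{vMF}(\mathbf{m}_t,\kappa_t)$ the mean resultant length gives $\mathbb{E}[\mathbf{z}_{t-1}\mid\mathbf{m}_t] = A_d(\kappa_t)\,\mathbf{m}_t$ with $A_d(\kappa_t)=I_{d/2}(\kappa_t)/I_{d/2-1}(\kappa_t)\uparrow 1$ as $\kappa_t\to\infty$, hence, combining with Step 1,
\[
  \mathbb{E}[c_{t-1}\mid\mathbf{z}_t]
  \;\ge\;
  c_t + c_0\,\eta_t\sqrt{1-c_t^2} \;-\; C\,\eta_t^2 \;-\; \bigl(1-A_d(\kappa_t)\bigr).
\]
For the fluctuation (not just the mean) I invoke the earlier vMF hypercone bound, $\mathbb{P}(\mathbf{m}_t^\top\mathbf{z}_{t-1} < 1-s)\le \exp(-\kappa_t s)$, so the deviation of $c_{t-1}$ from $\mathbf{m}_t^\top\boldsymbol{\mu}_c$ is $O_P(\kappa_t^{-1/2})$ and vanishes. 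Writing $u_t=1-c_t$ and using $1-c_t^2=u_t(2-u_t)$ yields
\[
  \mathbb{E}[u_{t-1}\mid\mathbf{z}_t] \;\le\; u_t \;-\; c_0\,\eta_t\sqrt{u_t(2-u_t)} \;+\; C\,\eta_t^2 \;+\; \bigl(1-A_d(\kappa_t)\bigr).
\]
Under the prescribed schedule — $\eta_t\to 0$, $\kappa_t\to\infty$, with $\sum_t\eta_t=\infty$ but $\sum_t\bigl[\eta_t^2 + (1-A_d(\kappa_t))\bigr]<\infty$, which is what "$\eta_t\kappa_t\to\infty$ at a suitable rate" delivers — the drift term dominates the two error terms whenever $u_t$ is bounded away from $0$. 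A Robbins–Siegmund / almost-supermartingale argument then gives $u_t\to 0$ almost surely, hence $c_t\to 1$, and Markov's inequality upgrades this to $\mathbb{P}[\mathbf{z}_t\in\mathcal{C}_\alpha(\boldsymbol{\mu}_c)]\to 1$ for every $\alpha>0$.

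\emph{Main obstacle.} I expect Step 1 to be the crux: turning the qualitative statement "the score points largely toward $\boldsymbol{\mu}_c-\mathbf{z}_t$" into a uniform positive lower bound on the tangential drift $\|s_t^\perp\|$ on the complement of each cone, while \emph{simultaneously} bounding $\|s_t\|$ from above — a sharply peaked $f$ can have an arbitrarily large score, and a large $\|s_t\|$ inflates the $O(\eta_t^2)$ projection remainder. Identifying precisely which "mild smoothness assumptions on $f$" make both bounds hold and are compatible with the growth of $\kappa_t$ and the decay of $\eta_t$ is the delicate part; once those are fixed, the remainder is the standard annealed-Langevin bookkeeping sketched above.
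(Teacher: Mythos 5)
Your plan follows essentially the same route as the paper's own argument, which is itself only a three-step sketch: (i) the projected score update rotates $\mathbf{z}_t$ toward $\boldsymbol{\mu}_c$, (ii) $\mathrm{vMF}(\cdot,\kappa_t)$ concentrates around its mean with angular spread shrinking like $1/\kappa_t$, and (iii) iterating with $\eta_t\to 0$ traps the chain in any cone $\mathcal{C}_\alpha(\boldsymbol{\mu}_c)$ with probability tending to one. Your quantitative rendering --- the Lyapunov variable $u_t=1-\mathbf{z}_t^\top\boldsymbol{\mu}_c$, the mean resultant length $A_d(\kappa_t)$, and the Robbins--Siegmund almost-supermartingale conclusion --- is strictly more precise than what the paper writes, and the obstacle you flag (turning ``sharply peaked $f$'' into simultaneous uniform lower bounds on the tangential score and upper bounds on $\|s_t\|$) is genuine but is simply assumed away in the paper's sketch as well.
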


\begin{proof}[Sketch of Proof]
We outline the main arguments:
\begin{enumerate}
    \item \textbf{Directional Gradient Alignment.}
    By assumption, $\nabla_{\mathbf{z}_t}\!\log f(\mathbf{z}_t;\,\boldsymbol{\mu}_c)$ points generally toward $\boldsymbol{\mu}_c$ for $\mathbf{z}_t$ not close to $\boldsymbol{\mu}_c$. Hence the update 
    \(
      \mathbf{z}_t \;+\; \eta_t\,\nabla_{\mathbf{z}_t}\!\log f
    \)
    rotates $\mathbf{z}_t$ closer in angle to $\boldsymbol{\mu}_c$. After projection $\Pi(\cdot)$ to the hypersphere, this remains a unit vector closer to $\boldsymbol{\mu}_c$ than $\mathbf{z}_t$ was.
    
    \item \textbf{Sharp Concentration under vMF.}
    As $\kappa_t \to \infty$, $\mathrm{vMF}\!\bigl(\mathbf{m}, \kappa_t\bigr)$ 
    puts most of its mass around $\mathbf{m}$, with the variance of the angular distribution shrinking like $1/\kappa_t$. Thus, if the mean direction 
    \(
      \Pi\bigl(
        \mathbf{z}_t + \eta_t\,\nabla_{\mathbf{z}_t}\!\log f
      \bigr)
    \)
    is already within $\alpha$ of $\boldsymbol{\mu}_c$, then with high probability the new sample $\mathbf{z}_{t-1}$ remains near $\boldsymbol{\mu}_c$. 
    
    \item \textbf{Iteration and Convergence.}
    Given that $\eta_t \to 0$, the movement per step in the hypersphere's tangent space shrinks, preventing large excursions away from $\boldsymbol{\mu}_c$. Meanwhile, $\kappa_t$ (the concentration) grows so that each step's sample is drawn from an increasingly peaked distribution. Iterating backward from $t=T$ down to $t=0$, the probability of $\mathbf{z}_t$ being outside any cone $\mathcal{C}_\alpha(\boldsymbol{\mu}_c)$ diminishes at each step. By the last steps near $t=0$, $\mathbf{z}_t$ concentrates with high probability 
    in the chosen hypercone around $\boldsymbol{\mu}_c$.
\end{enumerate}
Thus, in the limit $t \to 0$, we conclude that 
\(
  \mathbf{z}_t
\)
converges in distribution to directions arbitrarily close to $\boldsymbol{\mu}_c$, 
i.e.\ within any desired half-angle $\alpha$. 
Equivalently, the angle $\angle(\mathbf{z}_t,\,\boldsymbol{\mu}_c)$ 
goes to zero with high probability, implying 
\(
  \lim_{t\to 0} \mathbf{z}_t^\top \boldsymbol{\mu}_c = 1
\)
almost surely.
\end{proof}

\par\noindent\rule{\linewidth}{0.1pt}








\begin{algorithm}[h]
\caption{Hypercone-Constrained Sampling with Learned Truncation}
\begin{algorithmic}[1]
\Require Class label $y$, diffusion steps $T$
\Require Angular schedule $\{\theta_t\}_{t=1}^T$, step sizes $\{\eta_t\}_{t=1}^T$

\State Sample $\mathbf{z}_T \sim \text{Uniform}(\mathbb{S}^{d-1})$
\For{$t = T$ to $1$}
    \State $\mathbf{m}_t \gets D_\phi(\mathbf{z}_t, t, y)$ \Comment{Predict direction}
    \State $\theta_y \gets C_\psi(\mathbf{z}_t, t, y)$ \Comment{Predict angle}
    \State $\mathcal{C}_{t,y} \gets \{\mathbf{z}: \angle(\mathbf{z}, \mathbf{m}_t) \leq \theta_y\}$
    \State $\phi_t \gets \angle(\mathbf{z}_t, \mathbf{m}_t)$
    \State $\kappa_t \gets \kappa_{\max}\sigma(\beta[\theta_y - \phi_t])$
    
    \If{$t > 1$}
        \State $\nabla_{\mathbf{z}_t} \log f \gets \text{ScoreNet}_\theta(\mathbf{z}_t, t, y)$
        \State $\mathbf{u}_t \gets \mathbf{z}_t + \eta_t \nabla_{\mathbf{z}_t} \log f$
        \State $\mathbf{u}_t \gets \mathbf{u}_t/\|\mathbf{u}_t\|$
        
        \State $\mathbf{z}_{t-1} \sim \text{TvMF}(\mathbf{u}_t, \kappa_t, \mathcal{C}_{t,y})$ 
    \EndIf
\EndFor
\State \Return Final sample $\mathbf{z}_1$
\end{algorithmic}
\end{algorithm}

\section{Brownian Motion on Hypersphere}
\label{d:brown}

Recent advances in continuous-time score-based generative models \cite{song2021scorebased, karras2022elucidating} suggest viewing the forward noising process as an \emph{SDE}, whose time-reversal recovers the data distribution. When data reside on a hypersphere $\mathbb{S}^{d-1}$, an analogous approach involves constructing a \emph{Brownian motion} restricted to $\mathbb{S}^{d-1}$, then reversing it to produce samples from the original (or a conditional) distribution. Concretely, let $\mathbf{z}(t)\in \mathbb{S}^{d-1}$ evolve for $t \in [0,T]$ such that it follows an \emph{intrinsic} Brownian motion on the hypersphere. In the simplest form,
\begin{equation}
  d\mathbf{z}(t)
  \;=\;
  \sqrt{2\,\sigma^2(t)}\;\mathbf{P}_{\mathbf{z}(t)}\,d\mathbf{w}(t),
  \label{eq:bm-forward}
\end{equation}
where, $\sigma(t) \ge 0$ is a noise scale (or diffusion coefficient) and $\mathbf{P}_{\mathbf{z}(t)}$ projects $\mathbb{R}^d$ increments $d\mathbf{w}(t)$ onto the tangent space at $\mathbf{z}(t) \in \mathbb{S}^{d-1}$. 

\vspace{1em}
\noindent
\textbf{Spherical Forward Process and Hyperspherical Score Estimation.}
By choosing $\sigma(t)$ such that the marginal distribution of $\mathbf{z}(T)$ approaches the uniform measure on $\mathbb{S}^{d-1}$, we obtain a continuous analog of spherical diffusion. In practice, one can incorporate a small drift term $b(\mathbf{z},t)$ to ensure that $q(\mathbf{z}(T))$ is nearly uniform, similar to the variance-preserving or variance-exploding schedules in Euclidean SDEs \cite{song2021scorebased}. Concurrently, we learn a \emph{score network} $s_{\theta}(\mathbf{z},t)$ that approximates $\nabla_{\mathbf{z}}\log q_{t}(\mathbf{z})$, using a spherical variant of score matching \cite{vincent2011connection}.

\vspace{1em}
\noindent
\textbf{Reverse-Time SDE and Generative Sampling.}
Once $s_{\theta}$ is trained, we generate new samples by solving the reverse-time SDE. Formally, time reversal of the forward process \eqref{eq:bm-forward} yields
\begin{equation*}
  d\overline{\mathbf{z}}(t)
  \;=\;
  \bigl[-\,\sigma^2(t)\;s_{\theta}(\overline{\mathbf{z}},t)\bigr]\;dt
  \;+\;
  \sqrt{2\,\sigma^2(t)}\;\mathbf{P}_{\overline{\mathbf{z}}(t)}\,d\overline{\mathbf{w}}(t),
  \quad
  \label{eq:bm-reverse}
\end{equation*}
where, $t \in [T,0]$ and $d\overline{\mathbf{w}}(t)$ is again Brownian noise on the tangent space, and we solve backward from $\overline{\mathbf{z}}(T)\!\sim\!\text{Uniform}(\mathbb{S}^{d-1})$ down to $\overline{\mathbf{z}}(0)$. Intuitively, the term $-\,\sigma^2(t)\;s_{\theta}(\overline{\mathbf{z}},t)$ acts as a drift that guides samples toward the data manifold on the hypersphere.

\vspace{1em}
\noindent
\textbf{Comparison to vMF Diffusion.}
In discrete-time spherical diffusion using vMF noise 
, each forward step is $\mathrm{vMF}\bigl(\mathbf{z}_{t-1},\kappa_{t}\bigr)$, while the reverse step estimates $\mathrm{vMF}(\cdot,\tilde{\kappa}_{t})$ with a learned center. Spherical Brownian motion \eqref{eq:bm-forward} can be viewed as the \emph{continuous limit} of many small vMF perturbations. Inversely, discretizing \eqref{eq:bm-reverse} via Euler--Maruyama method yields a small-angle vMF reverse step that remains tangent to $\mathbb{S}^{d-1}$.

\begin{lemma}
\textbf{Discrete-Continuous Correspondence}
Let $q_{\Delta t}(\mathbf{z}_{t+\Delta t}|\mathbf{z}_t)$ be a vMF transition with concentration $\kappa{\Delta t}$. As $\Delta t \to 0$ with $\kappa_{\Delta t} = O(1/\Delta t)$, the process converges weakly to the solution of the spherical Brownian motion SDE \eqref{eq:bm-forward}.
\end{lemma}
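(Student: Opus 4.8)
The plan is to establish weak convergence of the discrete-time vMF Markov chain to the intrinsic spherical Brownian motion SDE \eqref{eq:bm-forward} by the standard route: verify that the infinitesimal generator of the rescaled discrete chain converges to the generator of the limiting diffusion, then invoke a martingale-problem / Stroock--Varadhan-type characterization to conclude weak convergence of the path measures. Concretely, I would fix a test function $g \in C^\infty(\mathbb{S}^{d-1})$ and compute the one-step conditional increment $\mathbb{E}\bigl[g(\mathbf{z}_{t+\Delta t}) - g(\mathbf{z}_t) \mid \mathbf{z}_t\bigr]$ under the vMF transition $q_{\Delta t}(\cdot\mid \mathbf{z}_t) = \mathrm{vMF}(\mathbf{z}_t,\kappa_{\Delta t})$ with $\kappa_{\Delta t} = c/\Delta t$, and show that $\tfrac{1}{\Delta t}\,\mathbb{E}\bigl[g(\mathbf{z}_{t+\Delta t}) - g(\mathbf{z}_t)\mid \mathbf{z}_t\bigr] \to \sigma^2\,\Delta_{\mathbb{S}^{d-1}} g(\mathbf{z}_t)$ as $\Delta t \to 0$, where $\Delta_{\mathbb{S}^{d-1}}$ is the Laplace--Beltrami operator and $\sigma^2$ is determined by $c$ and the dimension $d$.

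The key computation is the moment expansion of a vMF vector with large concentration. For $\mathbf{z} \sim \mathrm{vMF}(\boldsymbol{\mu},\kappa)$, write $\mathbf{z} = (\cos\phi)\,\boldsymbol{\mu} + (\sin\phi)\,\mathbf{e}$ with $\mathbf{e}$ uniform on the equatorial $(d-2)$-sphere orthogonal to $\boldsymbol{\mu}$ and $\phi$ the polar angle. Standard facts about vMF give that as $\kappa\to\infty$, $\kappa\phi^2$ converges to a (scaled) chi-squared-type law, so $\mathbb{E}[1-\cos\phi] = \mathbb{E}[\phi^2]/2 + o(1/\kappa) = \Theta(1/\kappa)$ and $\mathbb{E}[\phi^2] \sim (d-1)/\kappa$ up to the exact constant coming from the ratio of Bessel functions $I_{d/2}(\kappa)/I_{d/2-1}(\kappa) = 1 - \tfrac{d-1}{2\kappa} + O(\kappa^{-2})$. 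I would then Taylor-expand $g$ in normal coordinates centered at $\mathbf{z}_t$: the linear term in the tangent displacement vanishes in expectation by the rotational symmetry of vMF about $\boldsymbol{\mu}=\mathbf{z}_t$, the quadratic term produces $\tfrac12\mathbb{E}[\phi^2]\cdot \tfrac{1}{d-1}\,\Delta_{\mathbb{S}^{d-1}}g(\mathbf{z}_t)$ (the $1/(d-1)$ being the isotropy factor spreading the second moment evenly over tangent directions), and the curvature correction from writing $g$ intrinsically contributes only at order $\phi^2$ as well and is absorbed into the Laplace--Beltrami operator. Choosing $\kappa_{\Delta t} = c/\Delta t$ makes $\tfrac{1}{\Delta t}\mathbb{E}[\phi^2] \to (d-1)/c$, and matching to \eqref{eq:bm-forward} forces $\sigma^2 = \tfrac{1}{2c}$ (or the appropriate constant given the paper's normalization of $d\mathbf{w}$); the higher moments $\mathbb{E}[\|\text{displacement}\|^{2k}] = O(\kappa^{-k}) = O(\Delta t^k)$ are $o(\Delta t)$ for $k\ge 2$, which kills all higher-order terms in the generator and confirms the limit is a pure second-order (diffusion) operator with no jump part.

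With the generator convergence in hand, the remaining steps are soft: check that the rescaled vMF chains are tight in the Skorokhod space $D([0,T],\mathbb{S}^{d-1})$ — this follows from the uniform moment bound $\mathbb{E}[\,d(\mathbf{z}_{t+\Delta t},\mathbf{z}_t)^4 \mid \mathbf{z}_t] = O(\Delta t^2)$ via a Kolmogorov-type criterion — and then identify every subsequential limit as a solution of the martingale problem for $\sigma^2\Delta_{\mathbb{S}^{d-1}}$, which is well-posed because the sphere is compact and the operator is a smooth uniformly elliptic (in the intrinsic sense) second-order operator; uniqueness of the martingale problem then pins down the limit as the law of spherical Brownian motion solving \eqref{eq:bm-forward}. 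I expect the main obstacle to be the moment expansion in the second step: one must be careful that the Bessel-function asymptotics are uniform enough, that the normal-coordinate expansion of $g$ handles the sphere's curvature correctly so that no spurious first-order drift appears (it should not, by symmetry, but this needs to be argued cleanly rather than asserted), and that the constant relating $c$, $\sigma$, and $d$ is consistent with whatever normalization convention \eqref{eq:bm-forward} uses for $\mathbf{P}_{\mathbf{z}(t)}\,d\mathbf{w}(t)$ — a mismatch here is the easiest place for the statement "$\kappa_{\Delta t} = O(1/\Delta t)$" to be imprecise, since the hidden constant is exactly what makes the diffusion coefficient come out to $\sigma^2(t)$ rather than some dimension-dependent multiple of it.
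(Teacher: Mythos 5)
Your proposal cannot be compared against the paper's proof for the simple reason that the paper does not give one: this lemma appears in Appendix~\ref{d:brown} as a bare assertion, with only the surrounding informal remark that spherical Brownian motion ``can be viewed as the continuous limit of many small vMF perturbations.'' Your proposal is therefore filling a genuine gap, and it does so along the standard and correct route: generator convergence on smooth test functions, tightness in Skorokhod space via fourth-moment bounds, and identification of subsequential limits through the well-posed martingale problem for $\sigma^2\Delta_{\mathbb{S}^{d-1}}$ on a compact manifold. The central computation is right. Writing $\mathbf{z}=(\cos\phi)\boldsymbol{\mu}+(\sin\phi)\mathbf{e}$, the Bessel ratio $A_d(\kappa)=I_{d/2}(\kappa)/I_{d/2-1}(\kappa)=1-\tfrac{d-1}{2\kappa}+O(\kappa^{-2})$ gives $\mathbb{E}[\phi^2]\sim(d-1)/\kappa$; rotational symmetry about $\boldsymbol{\mu}$ kills the first-order term; isotropy of $\mathbf{e}$ over the tangent sphere turns the Hessian term into $\tfrac{1}{2}\cdot\tfrac{\mathbb{E}[\phi^2]}{d-1}\Delta_{\mathbb{S}^{d-1}}g$, which with $\kappa_{\Delta t}=c/\Delta t$ yields the generator $\tfrac{1}{2c}\Delta_{\mathbb{S}^{d-1}}$, matching \eqref{eq:bm-forward} with $\sigma^2=\tfrac{1}{2c}$; and $\mathbb{E}[\phi^{2k}]=O(\Delta t^{k})$ rules out any jump or higher-order component. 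The one substantive caveat is the one you already flag: as stated, ``$\kappa_{\Delta t}=O(1/\Delta t)$'' does not suffice for weak convergence to a \emph{specific} SDE, since the diffusion coefficient is determined by $\lim_{\Delta t\to 0}\kappa_{\Delta t}\,\Delta t$; the hypothesis should read $\kappa_{\Delta t}\sim c/\Delta t$ (or, for the time-inhomogeneous $\sigma(t)$ appearing in \eqref{eq:bm-forward}, $\kappa_{\Delta t}(t)\,\Delta t\to \tfrac{1}{2\sigma^2(t)}$ uniformly in $t$). With that amendment your argument is complete in outline and considerably more precise than anything the paper offers.
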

\vspace{1em}
\noindent
\textbf{Handling Class Hypercones and Adaptive Freezing.}
Finally, to respect class geometry or hypercone constraints, one may introduce a drift term that becomes very large (or a reflection boundary) whenever $\mathbf{z}(t)$ moves outside the class-constrained cone. Alternatively, if each class $y$ has a known center $\boldsymbol{\mu}_{y}$ and angular tolerance $\theta_{y}$, the reverse SDE \eqref{eq:bm-reverse} can incorporate a penalty encouraging $\angle(\mathbf{z}, \boldsymbol{\mu}_{y}) \le \theta_{y}$, effectively ``locking'' ( constraining to a sub-manifold like a hypercone on the sphere) trajectories in the appropriate region of $\mathbb{S}^{d-1}$. Such mechanisms ensure the final sample remains class-consistent while leveraging the flexibility and elegance of spherical Brownian motion as the underlying diffusion.

\section{Beyond Geometry and Uncertainty: Retaining Image Information}
\label{dual}
For the experiment with two diffusion process, the \textbf{forward} (noising) step at time $t$ can be viewed as:

\[
  \alpha_{t}
  \;=\;
  \alpha_{t-1}
  \;+\;
  \sigma\, \epsilon_{t} \, ;
  \quad
  \mathbf{d}_{t}
  \;\sim\;
  \mathrm{vMF}\!\bigl(\mathbf{d}_{t-1}, \,\kappa_{t}\bigr)
  \]
  \[
  \mathbf{x}_{t}
  \;=\;
  \alpha_{t}\,\mathbf{d}_{t}
\]
where, $\sigma\,\epsilon_{t}$ is a small Gaussian perturbation (e.g.\ $\epsilon_{t}\!\sim\mathcal{N}(0,1)$) controlling how the magnitude spreads, and $\mathbf{d}_{t}$ is sampled from a vMF distribution centered at $\mathbf{d}_{t-1}$ with concentration $\kappa_{t}$. As $t$ increases, $\kappa_{t}$ may decrease (making directions more diffuse), while $\sigma$ can enlarge the variance of $\alpha_{t}$, allowing the embedding norm to fluctuate more widely.

\paragraph{Reverse (Denoising) Process.} We define the reverse chain to invert both magnitude and direction back to the original class-consistent configuration. To invert this process, we learn parameters $\theta$ that predict a suitable Gaussian mean for the magnitude and a suitable mean direction for the vMF. Formally, we define
\[
  \widehat{\alpha}_{t-1} 
  \;\sim\;
  \mathcal{N}\!\Bigl(
    \mu_{\theta}\bigl(\alpha_{t},\,\mathbf{d}_{t}\bigr),
    \;\widetilde{\sigma}^{2}\Bigr), \]
  \[
  \widehat{\mathbf{d}}_{t-1}
  \;\sim\;
  \mathrm{vMF}\!\Bigl(
    \mathbf{m}_{\theta}\bigl(\alpha_{t},\,\mathbf{d}_{t}\bigr),
    \;\widetilde{\kappa}_{t}
  \Bigr),
  \]
  \[
  \mathbf{x}_{t-1} 
  \;=\;
  \widehat{\alpha}_{t-1}\,\widehat{\mathbf{d}}_{t-1}
\]
Here, $\mu_{\theta}(\cdot)$ is a neural network output that infers the ideal norm given the current $\alpha_{t}$ and $\mathbf{d}_{t}$, while $\mathbf{m}_{\theta}(\cdot)\!\in\!\mathbb{S}^{d-1}$ is another network output that estimates the best directional center for the vMF.

This design provides a powerful way to incorporate both \textbf{global image information} and \textbf{class geometry}:  we learn how to \emph{denoise} both magnitude (via the Gaussian) and direction (via vMF), thus recovering the class-relevant structure on the hypersphere encoded in the angular direction while preserving essential image information encoded in $\alpha$. This dual-uncertainty approach ensures that face embeddings can vary naturally in intensity or brightness, yet remain consistent with class geometry, capturing both \textit{how bright} an image is and \textit{which identity} it belongs to.

\par\noindent\rule{\linewidth}{0.1pt}
\begin{theorem}
\textbf{Dual-Uncertainty Preservation}
Let $\mathbf{x}_0 \in \mathbb{R}^d$ with $\alpha_0 = ||\mathbf{x}_0||$ and $\mathbf{d}_0 = \mathbf{x}_0/||\mathbf{x}0||$. Under the forward process:
\begin{align*}
\alpha_t &= \alpha_{t-1} + \sigma\epsilon_t, \\ \epsilon_t \sim \mathcal{N}(0,1) \
\mathbf{d}t &\sim \mathrm{vMF}(\mathbf{d}_{t-1}, \kappa_t) \
\mathbf{x}_t = \alpha_t\mathbf{d}t
\end{align*}
The following holds for all $t \geq 0$:
\begin{align*}
\mathbb{E}[\alpha_t - \alpha_0] &= 0 \\
\mathrm{Var}(\alpha_t - \alpha_0) &= t\sigma^2 \mathbb{E}[\mathbf{d}t^\top\mathbf{d}0] = \prod_{i=1}^t A_d(\kappa_i)
\end{align*}
where, $A_d(\kappa) = I_{d/2}(\kappa)/I_{d/2-1}(\kappa)$ is the ratio of modified Bessel functions.
Moreover, $p(\mathbf{x}_t|\mathbf{x}_0) = p(\alpha_t|\alpha_0)p(\mathbf{d}_t|\mathbf{d}_0)$ with $\mathbf{d}_t \in \mathbb{S}^{d-1}$ almost surely.
\end{theorem}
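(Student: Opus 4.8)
The plan is to prove the three assertions (mean, variance, and factorization) essentially independently, exploiting the fact that the forward process is a product of two \emph{decoupled} chains: an additive Gaussian random walk on the scalar magnitude $\alpha_t$, and a vMF Markov chain on the direction $\mathbf{d}_t \in \mathbb{S}^{d-1}$. First I would treat the magnitude: since $\alpha_t = \alpha_{t-1} + \sigma\epsilon_t$ with $\epsilon_t \sim \mathcal{N}(0,1)$ i.i.d., unrolling the recursion gives $\alpha_t - \alpha_0 = \sigma \sum_{i=1}^t \epsilon_i$. Taking expectations yields $\mathbb{E}[\alpha_t - \alpha_0] = 0$ by linearity, and the variance is $\mathrm{Var}\bigl(\sigma\sum_{i=1}^t \epsilon_i\bigr) = \sigma^2 \sum_{i=1}^t \mathrm{Var}(\epsilon_i) = t\sigma^2$ by independence. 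This part is routine bookkeeping.

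Next I would handle the directional correlation $\mathbb{E}[\mathbf{d}_t^\top \mathbf{d}_0]$. The key fact I would invoke is the standard mean-resultant-length identity for the vMF distribution: if $\mathbf{u} \sim \mathrm{vMF}(\boldsymbol{\nu}, \kappa)$ on $\mathbb{S}^{d-1}$ then $\mathbb{E}[\mathbf{u} \mid \boldsymbol{\nu}] = A_d(\kappa)\,\boldsymbol{\nu}$ with $A_d(\kappa) = I_{d/2}(\kappa)/I_{d/2-1}(\kappa)$. I would then condition step by step: using the tower property and the Markov structure, $\mathbb{E}[\mathbf{d}_t^\top \mathbf{d}_0] = \mathbb{E}\bigl[\mathbb{E}[\mathbf{d}_t \mid \mathbf{d}_{t-1}]^\top \mathbf{d}_0\bigr] = A_d(\kappa_t)\,\mathbb{E}[\mathbf{d}_{t-1}^\top \mathbf{d}_0]$, where I first fix $\mathbf{d}_0$ and all intermediate steps. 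Iterating this recursion down to $\mathbb{E}[\mathbf{d}_0^\top \mathbf{d}_0] = 1$ produces the claimed product $\prod_{i=1}^t A_d(\kappa_i)$. The one subtlety I would need to check carefully is that conditioning on $\mathbf{d}_{t-1}$ the map $\mathbf{d}_{t-1}\mapsto A_d(\kappa_t)\mathbf{d}_{t-1}$ is linear, so the expectation passes through cleanly regardless of the (random) value of $\mathbf{d}_{t-1}$; this rotational equivariance of the vMF mean is exactly what makes the telescoping work.

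Finally, for the factorization $p(\mathbf{x}_t \mid \mathbf{x}_0) = p(\alpha_t \mid \alpha_0)\,p(\mathbf{d}_t \mid \mathbf{d}_0)$ with $\mathbf{d}_t \in \mathbb{S}^{d-1}$ almost surely, I would argue that the two chains are driven by independent noise sources ($\{\epsilon_i\}$ for the magnitude, the vMF sampling randomness for the direction), and that $\alpha_0,\mathbf{d}_0$ are measurable functions of $\mathbf{x}_0$ with $(\alpha_t,\mathbf{d}_t)\mapsto \mathbf{x}_t = \alpha_t\mathbf{d}_t$; hence the joint law of $(\alpha_t,\mathbf{d}_t)$ given $(\alpha_0,\mathbf{d}_0)$ is a product measure, and the directional marginal is supported on $\mathbb{S}^{d-1}$ since every vMF draw lies on the sphere by definition. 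I would note that one must be slightly careful about what ``$p(\mathbf{x}_t\mid\mathbf{x}_0)$'' means as a density, since $\mathbf{x}_t$ ranges over a $d$-dimensional polar-coordinate chart $(\alpha_t,\mathbf{d}_t)$ rather than over $\mathbb{R}^d$ with Lebesgue measure — the cleanest statement is that the pushforward law factors in polar coordinates.

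\textbf{Main obstacle.} The genuinely delicate step is the directional recursion: making the telescoping $\mathbb{E}[\mathbf{d}_t^\top\mathbf{d}_0] = \prod_i A_d(\kappa_i)$ rigorous requires invoking the vMF conditional-mean identity and verifying that the rotational equivariance lets the scalar $A_d(\kappa_i)$ factor out at each step regardless of the realized previous direction. Everything else — the Gaussian random walk moments and the independence/factorization claim — is essentially immediate once the decoupled structure of the forward process is spelled out.
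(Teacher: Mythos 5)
Your proposal is correct, and it is worth noting up front that the paper itself gives \emph{no} proof of this theorem --- it is stated between horizontal rules in Appendix \ref{dual} with no accompanying argument --- so there is nothing in the paper to compare against; your outline supplies the missing justification. The three ingredients you identify are exactly the right ones: the Gaussian random walk gives $\mathbb{E}[\alpha_t-\alpha_0]=0$ and $\mathrm{Var}(\alpha_t-\alpha_0)=t\sigma^2$ by unrolling; the vMF mean-resultant identity $\mathbb{E}[\mathbf{u}\mid\boldsymbol{\nu}]=A_d(\kappa)\boldsymbol{\nu}$ together with the tower property and the Markov structure telescopes to $\mathbb{E}[\mathbf{d}_t^\top\mathbf{d}_0]=\prod_{i=1}^t A_d(\kappa_i)$ (note that the theorem's display actually runs two unrelated equalities together --- the variance claim and the correlation claim --- and your reading of them as separate statements is the only sensible one); and the factorization follows from the independence of the two driving noise sources. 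Your caveat about interpreting $p(\mathbf{x}_t\mid\mathbf{x}_0)$ with respect to the product measure $d\alpha\times d\mathbb{S}^{d-1}$ rather than Lebesgue measure on $\mathbb{R}^d$ (which would introduce a Jacobian $\alpha_t^{d-1}$) is apt. One further wrinkle you might flag: since $\alpha_t$ is an unconstrained Gaussian walk it can go negative, in which case $(\alpha_t,\mathbf{d}_t)$ is no longer the polar decomposition of $\mathbf{x}_t$; this does not affect any of the stated moments but slightly undermines the ``magnitude/direction'' reading of the factorization.
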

\par\noindent\rule{\linewidth}{0.1pt}

\section{Adaptive Class-based Hypercone Learning}

\textbf{Class Hypercone Setup.}
Each class $y \in \{1, \dots, C\}$ is associated with a direction 
$\boldsymbol{\mu}_{y} \in \mathbb{S}^{d-1}$ and an angular radius $\theta_{y} \ge 0$.
Hence, the class hypercone is defined by:
\begin{equation}
  \mathcal{C}_y(\theta_y)
  \;=\;
  \bigl\{
    \mathbf{z} \in \mathbb{S}^{d-1}
    \;:\;
    \angle\!\bigl(\mathbf{z},\,\boldsymbol{\mu}_y\bigr) \,\le\, \theta_y
  \bigr\}
\end{equation}

\textbf{Forward (Noising) Process.}
We consider a forward chain of length $T$, where each step injects vMF noise:
\begin{equation}
  q(\mathbf{z}_{1:T}\mid \mathbf{z}_{0})
  \;=\;
  \prod_{t=1}^{T}
  q\bigl(\mathbf{z}_{t} \mid \mathbf{z}_{t-1}\bigr),
  \quad
  \text{where}
  \quad
  q\bigl(\mathbf{z}_{t} \mid \mathbf{z}_{t-1}\bigr)
  \;=\;
  \mathrm{vMF}\!\bigl(\mathbf{z}_{t-1},\,\kappa_{t}\bigr)
\end{equation}
Here, $\kappa_{t}$ is a (potentially decreasing) schedule that pushes $\mathbf{z}_{t}$ toward uniform on $\mathbb{S}^{d-1}$ as $t$ grows.

\textbf{Adaptive Reverse with Learned Concentration.}
Instead of using a fixed reverse schedule, we let
\begin{equation}
   \kappa_{\theta}(\mathbf{z}_{t},\,y)
   \;:\;
   \mathbb{S}^{d-1}\!\times\!\{1,\dots,C\}
   \;\to\;
   \mathbb{R}_{\ge 0}
\end{equation}
be a learned function of the current state $\mathbf{z}_{t}\in\mathbb{S}^{d-1}$ and the class $y$. 
We define the reverse model as
\begin{equation}
  p_{\theta}(\mathbf{z}_{0:T}\mid y)
  \;=\;
  p\bigl(\mathbf{z}_{T}\bigr)
  \;\prod_{t=1}^{T}
     p_{\theta}\bigl(\mathbf{z}_{t-1}\mid \mathbf{z}_{t},\,y\bigr),
  \quad
  \\
  \text{where}
  \quad
  p_{\theta}\bigl(\mathbf{z}_{t-1}\mid \mathbf{z}_{t},y\bigr)
  \;=\;
  \mathrm{vMF}\!\Bigl(\,
    \mathbf{m}_{\theta}(\mathbf{z}_{t},y),
    \;\kappa_{\theta}(\mathbf{z}_{t},y)
  \Bigr)
\end{equation}
Here, $p\bigl(\mathbf{z}_{T}\bigr)$ is uniform on the hypersphere (i.e.\ the limiting vMF with zero concentration), 
and $\mathbf{m}_{\theta}(\mathbf{z}_t,y)\in \mathbb{S}^{d-1}$ is a learned mean direction. 
The key distinction is that $\kappa_{\theta}\bigl(\mathbf{z}_{t},y\bigr)$ can grow large once $\mathbf{z}_{t}$ 
is in the correct hypercone, effectively ``freezing'' further denoising.

\textbf{Example of a Freezing Mechanism.}
Let
\begin{equation}
  \alpha(\mathbf{z}_{t},y)
  \;=\;
  \angle\!\bigl(\mathbf{z}_{t},\,\boldsymbol{\mu}_{y}\bigr)
  \quad
  \\  
  \text{and}
  \\
  \quad
  \kappa_{\theta}(\mathbf{z}_{t},y)
  \;=\;
  \kappa_{\max}\;\sigma\!\Bigl(
    \beta\,\bigl[\theta_{y} - \alpha(\mathbf{z}_{t},\,y)\bigr]
  \Bigr),
\end{equation}
where, $\sigma(\cdot)$ is a monotonic squashing function (e.g.\ sigmoid), 
$\kappa_{\max}$ is a large positive constant, and $\beta>0$ controls slope. 
If $\alpha(\mathbf{z}_{t},y)\le \theta_{y}$, then $\kappa_{\theta}(\mathbf{z}_{t},y)$ saturates near $\kappa_{\max}$, 
locking $\mathbf{z}_{t-1}$ into the hypercone 
$\mathcal{C}_{y}\!\bigl(\theta_{y}\bigr)$ 
by making the vMF distribution highly concentrated.

\begin{figure}[t]
    \centering
    \includegraphics[width=0.7\linewidth]{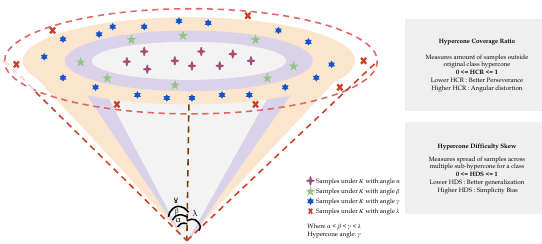}
    \caption{Geometric interpretation of proposed metrics HCR and HDS.}
    \label{fig:enter-label}
\end{figure}

\section{Experimental Details}
\label{exp}
For feature extraction, we employ different architectures based on the domain: facial representations are obtained using a pre-trained ArcFace \cite{deng2019arcface} model with iResNet50 \cite{duta2021improved} backbone, while object categories use a CNN-based feature extractor, both producing hyperspherical embeddings. The latent space is configured with dimensions $32\times32\times3$, balancing detail and computational efficiency. Our diffusion process uses an angular parameter $\theta$ that progresses from $0$ to $\pi/2$ across diffusion steps, with the concentration parameter $\kappa$ derived as $\cot(\theta)$, naturally transitioning from high concentration $(\kappa >> 1\  at\  \theta = 0)$ to a uniform distribution. For class-conditional generation, we utilize a context dimension of 512 to encode class information. The class specific hypercone constraints are implemented through adaptive $\kappa$ values, ranging from 0 to a class-specific maximum determined by the class angular radius $\theta_c$, which is predicted by a UNet architecture. The maximum $\kappa$ follows $\kappa_{max} = \log(1/\epsilon)/(1-\cos(\theta_c))$, ensuring generated samples respect class-specific geometric structure. For training, we use the Adam optimizer with a learning rate of 1e-4 and batch size of 128, with the model trained for 100K iterations on a single NVIDIA A100 GPU. For comparison with Gaussian based diffusion, we used Variance preserving variant of diffusion.

\subsection{Analysis of metrics}
\label{metrics}
The Hypercone Coverage Ratio (HCR) and Hypercone Difficulty Skew (HDS) together offer a comprehensive framework for evaluating generative models in terms of their alignment with the original class distribution and their ability to handle sample difficulty.

The HCR primarily assesses whether the generative model preserves the class structure by examining the percentage of generated samples that fall outside the class’s hypercone. A lower HCR suggests that the model is generating samples that remain within the expected angular region of the class distribution, indicating that it faithfully reproduces the structure of the original class. On the other hand, a higher HCR implies that the model is producing out-of-class or unrealistic samples that deviate significantly from the class’s expected region. This could be a sign of overfitting or a failure to learn the true distribution of the class, leading to unrealistic or poorly sampled outputs.

The HDS, in contrast, focuses on the model’s ability to balance difficulty across the generated samples. By dividing the class’s hypercone into multiple sub-cones based on increasing angular thresholds, HDS captures how well the model distributes its generated samples across regions of varying difficulty. The model is expected to generate a mix of easy (close to the class mean) and hard (further from the mean) samples, reflecting the full diversity of the class. A high HDS indicates that the model primarily generates easy samples clustered in the innermost sub-cones. This could suggest a bias towards overfitting simpler patterns. This bias is undesirable because it fails to capture the full range of complexity in the class distribution. Conversely, a low HDS suggests that the model is distributing its samples more evenly across different difficulty levels, which is indicative of a more robust generative process that captures both simple and complex variations in the class.

Together, these two metrics help to paint a fuller picture of a generative model’s performance. A well-balanced generative model should ideally have a low HCR, reflecting good preservation of class boundaries, and a moderate HDS, indicating that it generates a variety of sample difficulties, capturing the full complexity of the class distribution. A high HCR combined with a low HDS might indicate that the model is overly focused on easy samples, while a high HDS with a low HCR could suggest that the model is struggling to maintain class consistency. Thus, an optimal model should strike a balance, maintaining a good coverage of the class’s angular space without overfitting to simpler, easier samples.

\subsection{Results}
\label{supp:results}
\textbf{Reverse Denoising Comparison}
The results presented in \cref{eu1,eu2,eu3} provide a comparative analysis of Euclidean versus angular-based reverse denoising strategies across three datasets: CIFAR-10, MNIST, and D-LORD. For all datasets, we observe that angular-based methods, particularly those using cosine and geodesic formulations, consistently offer improvements or comparable performance across key metrics. Specifically, angular approaches yield lower FID scores, indicating better sample quality. For example, FID improves from 3.52 to 3.28 on CIFAR-10 and from 1.86 to 1.77 on MNIST. In terms of Hypercone Coverage Ratio (HCR), angular losses preserve class structure similarly or better than MSE-based approaches, while Hypercone Difficulty Skew (HDS) values suggest that angular methods produce a more balanced range of sample difficulty. These improvements highlight that angular loss functions better align with the intrinsic hyperspherical nature of feature embeddings, enhancing both generation quality and class consistency. The consistency of these trends across datasets confirms the generalizability of angular reverse denoising in hyperspherical diffusion models.


\begin{table}[h]
\centering
\caption{Comparative analysis of Euclidean and Angular based reverse denoising step for CIFAR-10 dataset. Various score functions are also used for evaluation.}
\label{eu1}
\begin{tabular}{|lccc|}
\hline
 & Euclidean with MSE & Angular with Cosine & Angular with Geodesic \\
\hline
FID  & 3.52 & 3.28 & 3.35 \\
HCR  & 0.20 & 0.20 & 0.19 \\
HDS  & 0.48 & 0.51 & 0.47 \\
\hline
\end{tabular}
\end{table}

\begin{table}[h]
\centering
\caption{Comparative analysis of Euclidean and Angular based reverse denoising step for MNIST dataset.}
\label{eu2}
\begin{tabular}{|lccc|}
\hline
 & Euclidean with MSE & Angular with Cosine & Angular with Geodesic \\
\hline
FID  & 1.86 & 1.79 & 1.77 \\
HCR  & 0.14 & 0.15 & 0.14 \\
HDS  & 0.52 & 0.47 & 0.48 \\
\hline
\end{tabular}
\end{table}

\begin{table}[h]
\centering
\caption{Comparative analysis of Euclidean and Angular based reverse denoising step for D-LORD dataset.}
\label{eu3}
\begin{tabular}{|lccc|}
\hline
 & Euclidean with MSE & Angular with Cosine & Angular with Geodesic \\
\hline
FID  & 9.27 & 9.01 & 8.97 \\
HCR  & 0.21 & 0.19 & 0.20 \\
HDS  & 0.62 & 0.61 & 0.62 \\
\hline
\end{tabular}
\end{table}

\textbf{Effect of Scheduling $\kappa$}
Scheduling $\kappa$ controls the rate at which class structure degrades, ensuring a smooth transition to uniform noise. Without scheduling, any fixed $\kappa$ eventually results in a uniform distribution on the hypersphere, particularly as $T$ increases. Gradually decaying $\kappa_t$ preserves intra-class structure longer, aiding recovery during the reverse process. Formally, for $\mathbf{d}t \sim \mathrm{vMF}(\mathbf{d}{t-1}, \kappa_t)$, the marginal distribution approaches uniformity as $\kappa_T \to 0 \quad p(\mathbf{d}_T) \approx \frac{1}{|\mathbb{S}^{d-1}|}$. Empirical results on the effect of $\kappa$ scheduling are shown in Table \ref{tab:kappa_scheduler_comparison}.


\begin{table}[h]
\centering
\caption{Effect of using kappa scheduler. Comparative analysis of using the kappa scheduler on two datasets for various evaluation metrics}
\begin{tabular}{|l|cc|cc|}
\hline
\textbf{Metric} & \multicolumn{2}{c|}{\textbf{CIFAR-10}} & \multicolumn{2}{c|}{\textbf{MNIST}} \\
                & With scheduling & Without scheduling & With scheduling & Without scheduling \\
\hline
Class-wise Accuracy & 89.35 & 72.59 & 96.01 & 86.48 \\
FID                 & 3.52  & 6.28  & 1.86  & 2.11 \\
HCR                 & 0.20  & 0.37  & 0.14  & 0.21 \\
HDS                 & 0.48  & 0.63  & 0.52  & 0.75 \\
\hline
\end{tabular}
\label{tab:kappa_scheduler_comparison}
\end{table}

\textbf{Feature Representation} \cref{fig:comparison_figures} illustrates the feature representations of conditional samples generated from the 10 classes of the CIFAR-10 dataset. The figure highlights that the vMF-based reverse sampling effectively converges samples within class-specific hypercones, capturing the angular geometry of the data. In contrast, Gaussian-based reverse sampling produces samples that converge within Euclidean space, failing to adhere to the hyperspherical structure.
\begin{figure}[h]
    \centering
    \subfigure[Gaussian Diffusion]{
        \includegraphics[width=0.45\textwidth]{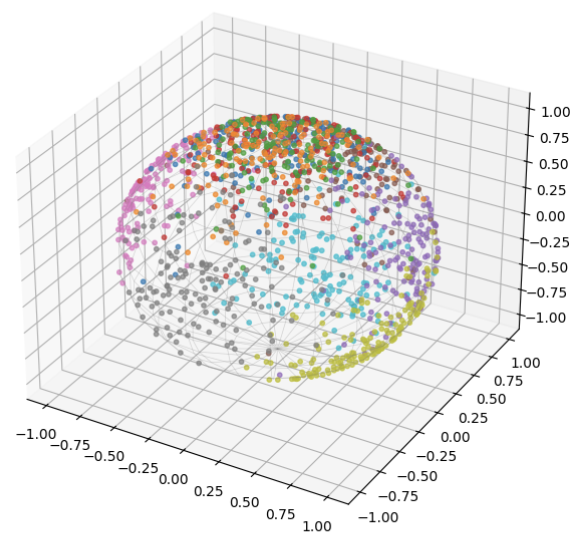} 
    }
    \hfill
    \subfigure[vMF based Diffusion]{
        \includegraphics[width=0.45\textwidth]{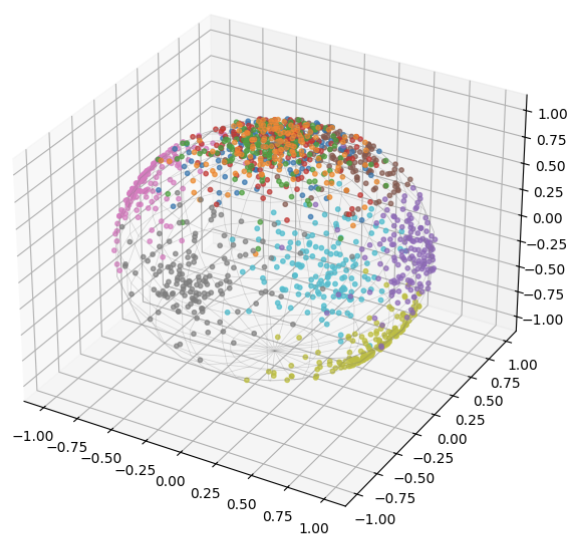} 
    }
    \caption{Feature representation of the CIFAR-10 dataset generated using Gaussian-based diffusion (left) and vMF-based diffusion
(right). The vMF-based sampling aligns generated sample features within class-specific 3D hypercones, while Gaussian-based sampling
results in scattered features outside the class-hypercones.}
    \label{fig:comparison_figures}
\end{figure}

\textbf{Facial Data Generation}
The \cref{facial_data} illustrates the generated images with diversity and occlusion facial challenges present, which is critical for robust face recognition under real-world conditions. The top section presents multiple views of the same subject generated without occlusion, showing typical intra-subject variation. The middle section focuses on cases with eye-region occlusion caused by sunglasses, while the bottom section includes examples of full occlusion from multiple accessories such as hats and scarves. 

\begin{figure}
    \centering
    \includegraphics[width=0.5\linewidth]{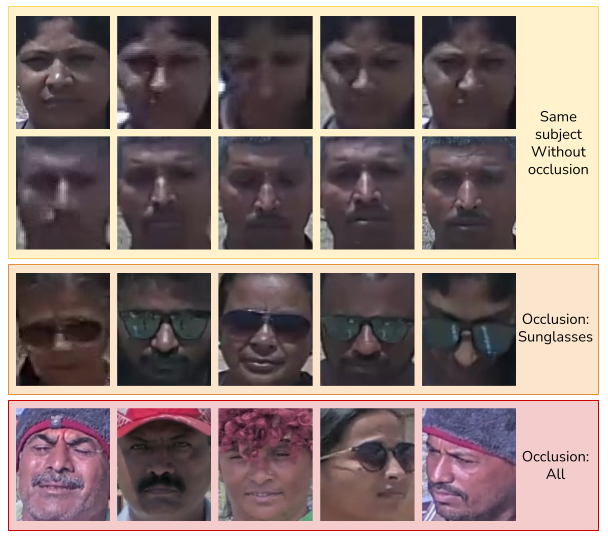}
    \caption{Facial data synthesis demonstrates variations across occlusion, pose and resolution.}
    \label{facial_data}
\end{figure}

\textbf{Hypercone specific generation}
\cref{fig:mnist-hypercone} and \cref{fig:cifar-10hypercone} shows samples generated from inner and outer hypercone based on class-specific $\theta_k$ that determines the boundary of class. As demonstrated by the figure, the images sampled from inner hypercone are sharp and realistic while samples generated around the boundary are noisy. Also, various samples generated are shown in \cref{fig:samplesMnist}

\begin{figure}[h]
    \centering
    \subfigure[MNIST]{\includegraphics[width=0.45\linewidth, height=12cm]{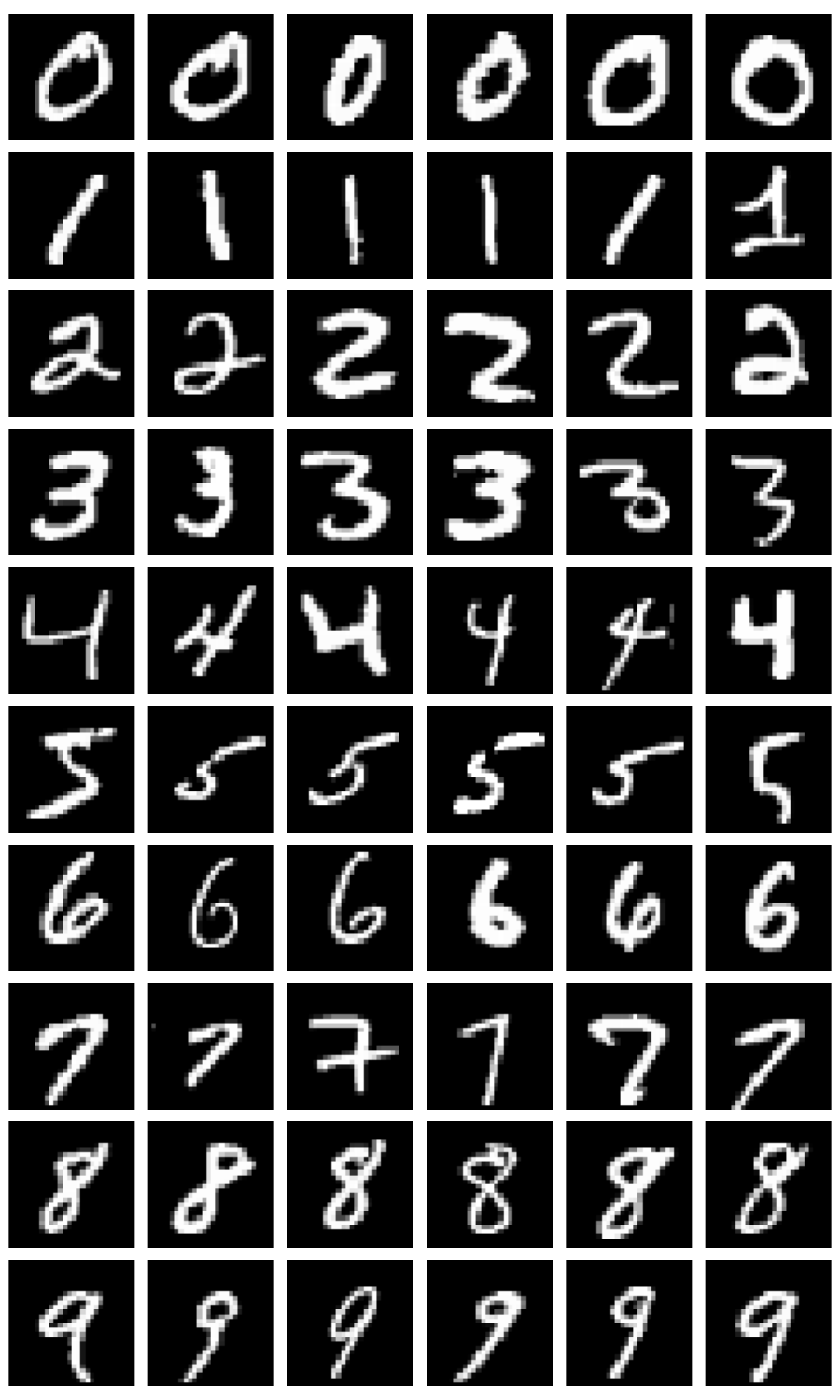}}
    \subfigure[CIFAR-10]{\includegraphics[width=0.45\linewidth, height=12cm]{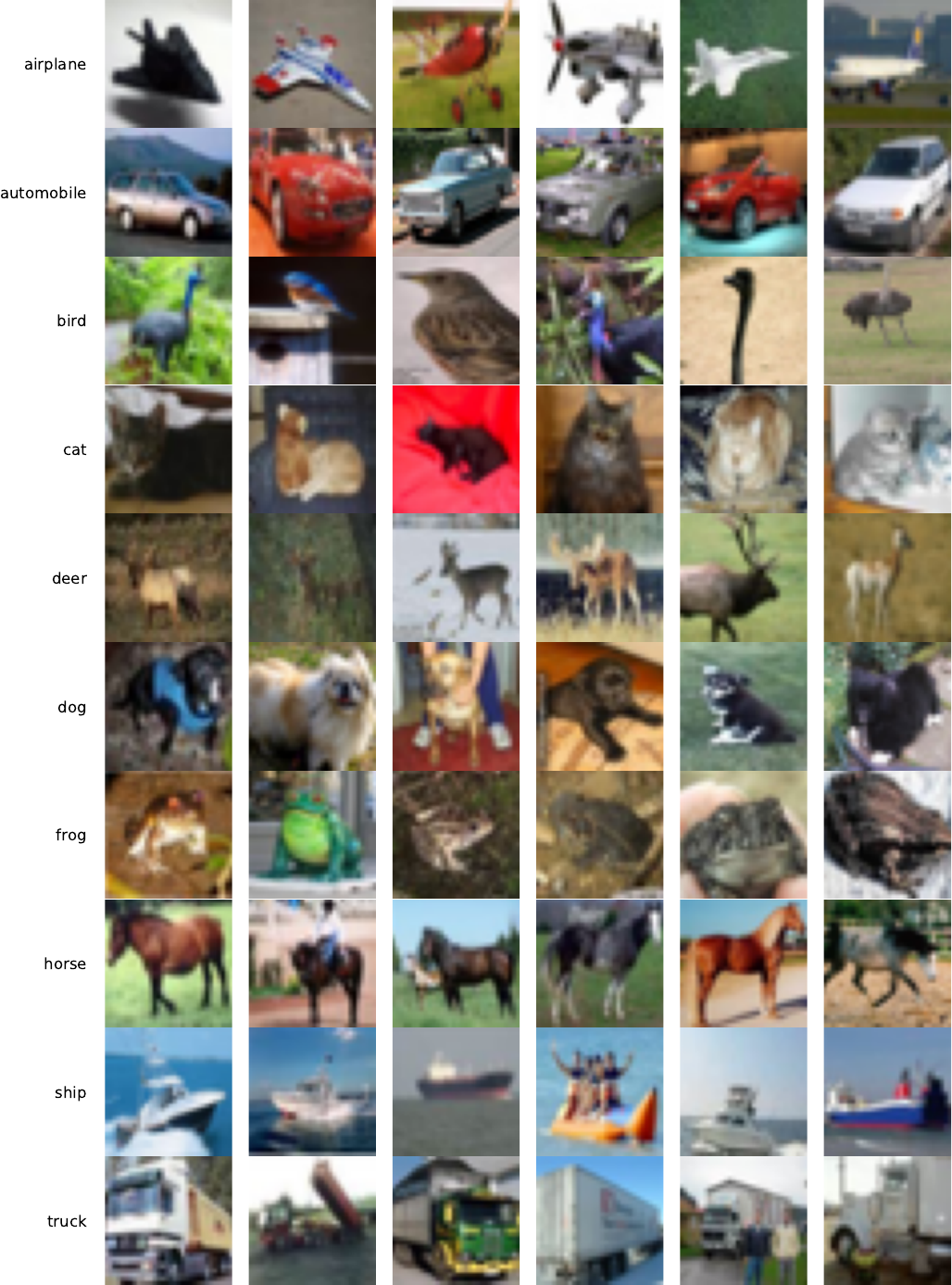}}
    \caption{Generated samples from the proposed vMF-based diffusion model trained on (a) MNIST and (b) CIFAR-10 datasets. The samples effectively preserve class-specific information while maintaining high visual quality.}
    \label{fig:samplesMnist}
\end{figure}
\section{Applications}
The applications of the proposed vMF-based angular diffusion are:
\begin{itemize}
\item Few-shot learning: Our approach improves performance by generating more diverse and class-consistent samples from limited data.
\item Fairness and bias mitigation: Manifold-aware generation allows controlled augmentation to rebalance datasets across demographics, reducing biases.
\item Face recognition robustness: Explicitly preserving directional structures helps models robustly handle variations (occlusion, illumination, pose).
\item Difficult sample generation: Controlled angular diffusion produces challenging samples near class boundaries, refining decision boundaries and improving model reliability.
\end{itemize}

\begin{figure*}
    \centering
    \includegraphics[width=\linewidth]{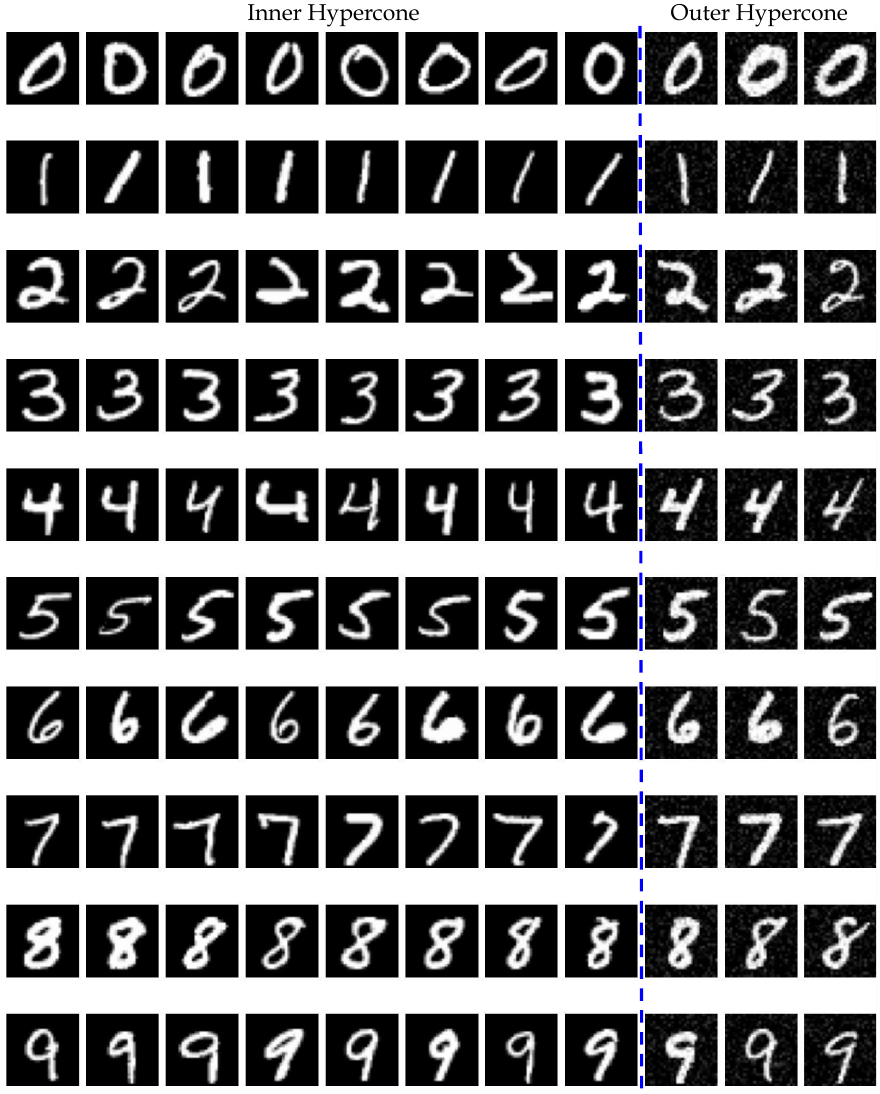}
    \caption{Generated samples from the inner and outer hypercones using the proposed vMF-based diffusion model trained on the MNIST dataset. Samples from the outer hypercone exhibit noticeable noise and distortions.}
    \label{fig:mnist-hypercone}
\end{figure*}

\begin{figure*}
    \centering
    \includegraphics[width=\linewidth]{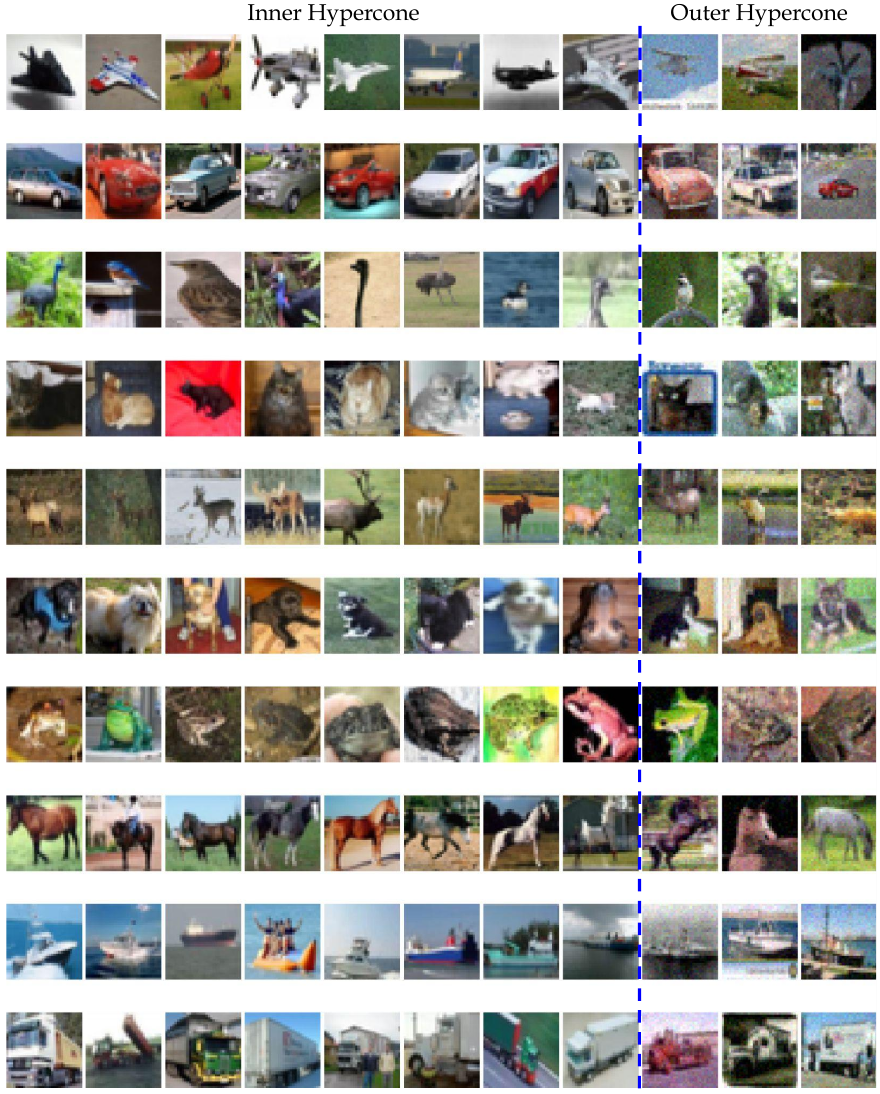}
    \caption{Generated samples from the inner and outer hypercones using the proposed vMF-based diffusion model trained on the CIFAR-10 dataset. Inner hypercone samples exhibit superior quality and effectively preserve class-specific information.}
    \label{fig:cifar-10hypercone}
\end{figure*}

\end{document}